\documentclass{article}

\PassOptionsToPackage{numbers,compress,sort}{natbib}
\usepackage[preprint]{neurips_2026}

\usepackage[utf8]{inputenc} 
\usepackage[T1]{fontenc}    
\usepackage{url}            
\usepackage{booktabs}       
\usepackage{amsfonts}       
\usepackage{nicefrac}       
\usepackage{microtype}      
\usepackage{xcolor}         

\usepackage{amsmath}
\usepackage{amssymb}
\usepackage{amsfonts}
\usepackage{amsthm}
\usepackage{bm}

\newcommand{\R}{\mathbb{R}}

\newcommand{\V}{\mathcal{V}}
\newcommand{\Mcal}{\mathcal{M}}

\newcommand{\nips}[1]{\textcolor{black}{#1}}

\theoremstyle{plain}
\newtheorem{theorem}{Theorem}[section]
\newtheorem{proposition}[theorem]{Proposition}
\newtheorem{lemma}[theorem]{Lemma}

\theoremstyle{definition}
\newtheorem{definition}[theorem]{Definition}

\theoremstyle{remark}

\usepackage{mathtools}
\usepackage{wrapfig}
\usepackage{multirow}
\usepackage[most]{tcolorbox}
\usepackage{mdframed}
\usepackage{enumitem}

\usepackage{graphicx}
\usepackage{subcaption}

\definecolor{hyboviolet}{RGB}{135, 95, 175}
\newtcolorbox{takeawaybox}{
  enhanced,
  colback=hyboviolet!10,        
  colframe=hyboviolet!60,      
  boxrule=0.6pt,
  arc=2pt,
  left=6pt,right=6pt,top=6pt,bottom=6pt,
  fonttitle=\normalfont\itshape, 
  title=Key Takeaway
}

\usepackage{hyperref}
\usepackage[capitalize,noabbrev]{cleveref}
\title{Hyperbolic Graph Neural Networks Under the Microscope: The Role of Geometry--Task Alignment}

\author{%
  Dionisia Naddeo\thanks{Equal contribution.} \\
  University of Rome Tor Vergata
  \And
  Jonas Linkerh\"agner\textsuperscript{*} \\
  University of Basel
  \And
  Nicola Toschi \\
  University of Rome Tor Vergata\\
  Harvard Medical School
  \And
  Geri Skenderi \\
  Bocconi University \\
  Bocconi Institute for Data Science and Analytics
  \And
  Veronica Lachi \\
  UiT The Arctic University of Norway
}

\begin{document}

\maketitle

\begin{abstract}
Many complex networks exhibit hyperbolic structural properties, making hyperbolic space a natural candidate wherein to learn representations of hierarchical and tree-like structures. Based on this observation, Hyperbolic Graph Neural Networks (HGNNs) have been widely adopted as a principled choice for representation learning on tree-like graphs. In this work, we question this paradigm by proposing the additional condition of geometry--task alignment, i.e., whether the metric structure of the target follows that of the input graph. We start by empirically showing that HGNNs can recover low-distortion representations when supervision explicitly rewards metric preservation and proceed by theoretically analyzing a controlled regression setting in which a linear readout benefits from representations whose geometry is aligned with the target metric. Finally, by relying on both predictive performance and embedding distortion, we further show that HGNNs gain an advantage on link prediction, a naturally geometry-aligned task, whereas this advantage largely disappears on standard node classification benchmarks. Overall, our findings shift the focus from only asking \textit{Is the graph hyperbolic?} to also questioning \textit{Is the task aligned with hyperbolic geometry?}, showing that HGNNs consistently outperform Euclidean models under such alignment, while their advantage vanishes otherwise.
\end{abstract}


\section{Introduction}
\label{sec:intro}

Graph representation learning has gathered increasing attention over the last decade, due to the importance of solving downstream tasks such as node classification or link prediction \cite{khoshraftar2024survey}. Early embedding approaches such as \textsc{DeepWalk}~\cite{perozzi2014deepwalk} and \textsc{node2vec} ~\cite{grover2016node2vec} focus on learning low-dimensional node representations that reflect graph proximity in Euclidean space by relying on random-walk co-occurrence statistics. Therefore, the core premise behind the embedding paradigm is \emph{geometric faithfulness}: a good graph embedding should preserve the relevant structure of the input graph.

This naturally raises the question of which geometry is appropriate to maintain this faithfulness. Many real-world networks exhibit strong hierarchical organization and tree-like growth patterns, including biological taxonomies~\cite{matsumoto2021novel}, knowledge graphs~\cite{kolyvakis2019hyperkg}, and social hierarchies~\cite{yang2023hyperbolic}.
From a metric viewpoint, tree-like graphs are closely connected to negatively curved spaces.
Crucially, \citet{sarkar2011low} provides a guarantee: for a fixed low dimensionality, trees can be embedded in hyperbolic space with arbitrarily small distortion, while this cannot be done in Euclidean space. Motivated by this theory, hyperbolic representation learning has emerged as a powerful tool to encode latent hierarchies \cite{nickel2018learning, kitsak2020link, wang2019hyperbolic,chami2020low,pan2021hyperbolic}. In particular, \citet{nickel2017poincare} introduced Poincar\'e embeddings, demonstrating that hyperbolic geometry yields low-distortion representations of hierarchical symbolic data and relational structures.

\nips{With the advent of Graph Neural Networks (GNNs) \cite{scarselli2008graph,kipf2017semi,hamilton2017graphsage,velickovic2018gat}, graph representation learning shifted to end-to-end task-driven optimization via message passing. Building on the success of hyperbolic embeddings, Hyperbolic GNNs (HGNNs) extend this paradigm to hyperbolic spaces \cite{liu2019hyperbolic,chami2019hyperbolic,zhang2021lorentzian,chen2022fully}.
These works often motivate the use of hyperbolic architectures by linking their expected benefits to the hyperbolicity of the input graph, commonly measured through \citeauthor{gromov1987hyperbolic}'s $\delta$-hyperbolicity \citeyearpar{gromov1987hyperbolic} .
For instance, \citet{chami2019hyperbolic} explicitly conjecture that "HGCN works better on graphs with small $\delta$-hyperbolicity", a line of reasoning that has been adopted, explicitly or implicitly, in subsequent works~\citep{zhang2021lorentzian,yang2024hypformer,yang2022hyperbolic}. This has fostered the belief that HGNNs should be preferred whenever the input graph is highly hyperbolic.}


\begin{wrapfigure}{r}{0.48\columnwidth}
  \centering
  \includegraphics[width=\linewidth]{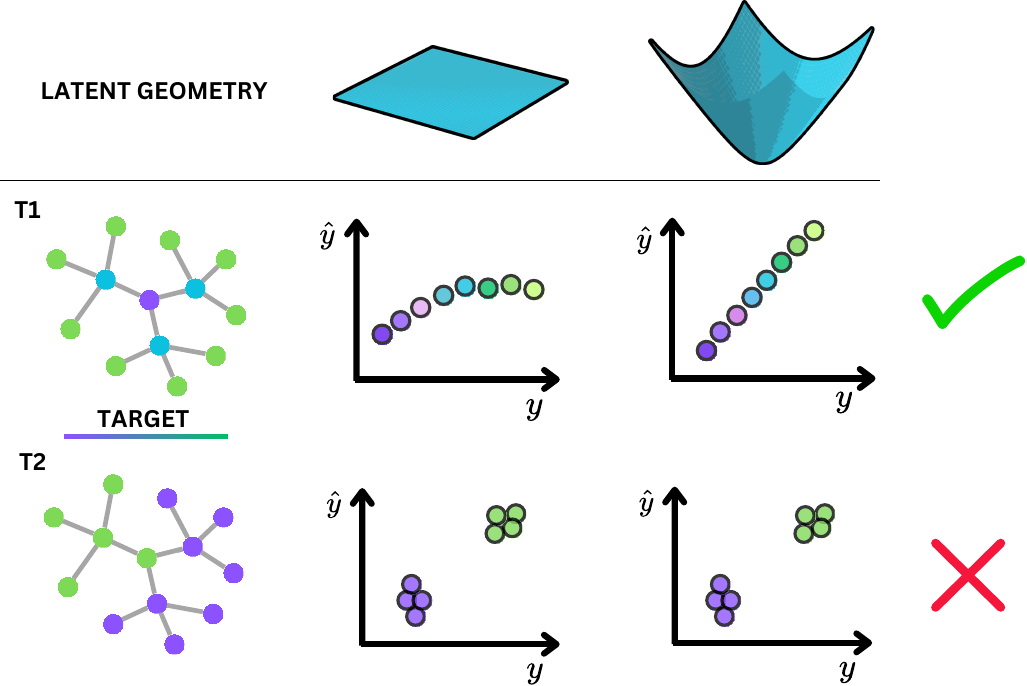}
  \caption{\nips{\textbf{Geometry--Task Alignment.}
Different tasks induce different levels of alignment between the latent geometries and the predicted outcomes. The top row shows latent spaces with increasing curvature magnitude, while the bottom rows plot targets $y$ against predictions $\hat{y}$. Increasing curvature improves performance for \textbf{T1}, whose targets align with the underlying input geometry, but not \textbf{T2}, despite identical inputs.}}
  \label{fig:task-geom}
  \vspace{-10pt}
\end{wrapfigure}Despite their popularity in applications like brain network analysis, recommender systems, and knowledge graphs \cite{baker2024hyperbolic, yang2023hyperbolic,li2024hyperbolic}, the benefits of HGNNs remain contested. Recent work shows that their empirical advantages are difficult to assess on standard benchmarks, where tasks can often be solved from node features alone \cite{katsman2025shedding}. \citet{naddeowe} further demonstrate that, on graph-level tasks, HGNNs do not consistently outperform Euclidean models, even on strongly hyperbolic graphs. These findings highlight a conceptual gap in the current narrative (see Appendix~\ref{app:relwork}). 

The original motivation for hyperbolic \emph{embeddings} was fundamentally to minimize distortion \citep{chami2019hyperbolic}. In contrast, representations learned via supervision are task-dependent and do not necessarily optimize for metric faithfulness. Consequently, even if hyperbolic space is the right geometry for \emph{the input}, it does not follow that hyperbolic message passing is beneficial for \emph{the task at hand}. Specifically, while trees admit near-isometric hyperbolic embeddings \cite{sarkar2011low}, it has never been established that hyperbolic message passing under supervision actually recovers low-distortion node geometries, nor that such low distortion is systematically useful across different graph learning tasks.

In this work, we revisit the question of when HGNNs are \emph{truly} warranted through the lens of \emph{geometric alignment between input structure and task supervision} (Fig.~\ref{fig:task-geom}).
We argue that the "tree-likeness" of the input graph is not, by itself, a sufficient criterion, but we must consider an additional aspect, i.e., whether solving the downstream task requires to reflect (parts of) the hyperbolic input geometry.
Through controlled experiments on synthetic graphs, evaluations on real-world datasets, and theoretical analyses, we disentangle the benefits of learning hyperbolic representations in GNNs and when they translate into measurable performance gains. 
For full reproducibility, we provide our code \href{https://anonymous.4open.science/r/HGNNsUtM}{here}.
In summary, our main contributions are: 
\begin{itemize}[leftmargin=15pt]
    \item [(1)] We empirically verify that supervised HGNNs can recover low-distortion representations much better than their Euclidean counterparts when the task explicitly rewards metric preservation. Motivated by established theoretical results, we verify that this gap can be reduced as hidden dimensionality increases;
    \item [(2)] \nips{We introduce geometry--task alignment as a criterion for when hyperbolic inductive biases should help. We formalize this principle in a controlled node-regression setting with a linear readout, and empirically confirm across broader HGNN architectures and tasks that a performance advantage is governed not only by the hyperbolicity of the input graph but also by the alignment of the supervision signal with the metric structure of the input;}
    \item[(3)] \nips{We derive actionable guidance for practitioners by performing a controlled re-evaluation on standard benchmarks: HGNNs are well suited for link prediction on hyperbolic graphs, but they do not typically provide advantages for node classification. For the latter, we recommend careful benchmarking against Euclidean counterparts. Initial results indicate that HGNNs can benefit node regression when the target is geometry-aligned, albeit the lack of real-world hyperbolic benchmarks for this task remains an open research gap.}
\end{itemize}

\section{Background}
In this section, we introduce the key concepts from
hyperbolic geometry and hyperbolic graph learning. Additional details can be found in Appendices~\ref{app:relwork} and~\ref{app:extended background}.

\paragraph{Riemannian Manifolds and Hyperbolic Space.}


A Riemannian manifold $(\mathcal{M}, g)$ is a smooth manifold equipped with a metric 
$g_x : T_x\mathcal{M} \times T_x\mathcal{M} \to \mathbb{R}$ at each point $x \in \mathcal{M}$,
where $T_x\mathcal{M}$ denotes the tangent space at $x$. The metric $g$ defines a (local) inner product thereby leading to notions of
length and curvature, enabling the extension of Euclidean geometry to non-flat spaces.
Hyperbolic space is a Riemannian manifold with constant negative sectional curvature. This property leads to exponential volume growth (Appendix~\ref{appendix:geometry}) and makes it suitable for modeling tree-like data, where the number of nodes grows exponentially with depth. 

\paragraph{Models of Hyperbolic Space.}
\nips{Hyperbolic space admits several equivalent representations, the most common being the Poincar\'e ball and the Lorentz (hyperboloid) model, which are isometric to one another. 
The Poincar\'e ball $\mathcal{B}^d_c$ models hyperbolic space as an open ball in $\R^d$:
\begin{equation}
\mathcal{B}^d_c = \left\{ \bm{x}\in\R^d \;:\; \lVert \bm{x}\rVert^2 < 1/c \right\},
\end{equation}
equipped with a conformal metric with factor $\lambda_c(\bm{x}) = \frac{2}{1 - c\|\bm{x}\|^2}$. Although points lie inside a bounded Euclidean region, hyperbolic distances grow rapidly near the boundary.
The Lorentz model $\mathcal{L}^d_c$ represents hyperbolic space as a $d$-dimensional hyperboloid embedded in $\R^{d+1}$:
\begin{equation}
\mathcal{L}^d_c = \left\{ \bm{x}\in\R^{d+1} \;:\; \langle \bm{x},\bm{x}\rangle_{\mathcal{L}} = -1/c,\; x_0>0 \right\},
\end{equation}
where $\langle \bm{x},\bm{y}\rangle_{\mathcal{L}} = -x_0y_0 + \sum_{i=1}^{d} x_i y_i$, and the metric is induced from the ambient Minkowski space.}

\paragraph{Geodesics, exponential map, and logarithmic map.} In curved spaces, the notion of straight line is replaced by \emph{geodesics}, i.e., locally shortest curves under the manifold metric. The \emph{exponential map} $\exp_{\bm{x}}:T_{\bm{x}}\Mcal\to \Mcal$ maps a tangent vector to the manifold by following the geodesic starting at $\bm{x}$ with that initial velocity. Conversely, the \emph{logarithmic map} $\log_{\bm{x}}:\Mcal\to T_{\bm{x}}\Mcal$ locally inverts the above map and sends a point on the manifold back to a tangent vector. These operations enable neural architectures to apply Euclidean computations by mapping back and forth into the manifold \cite{ganea2018hyperbolic, chami2019hyperbolic}.

\paragraph{HGNNs.}
HGNNs learn node representations in a negatively curved space and require aggregation and update operators that are consistent with hyperbolic geometry. Existing approaches can be grouped into two design families. \emph{Tangent-space HGNNs}, such as HGCN~\cite{chami2019hyperbolic}, perform message passing in a Euclidean tangent space (typically at the origin) and then map the updated features back to the manifold via logarithmic and exponential maps. In contrast, \emph{fully hyperbolic GNNs}, such as HyboNet~\cite{chen2022fully}, operate directly in hyperbolic space by aggregating neighborhoods with geometry-aware operators (Appendix~\ref{app:hyperbolic_graph_learning}). 
In our experiments, we use HGCN in the Poincaré ball model with learnable curvature, HyboNet in the Lorentz model, and GCN \cite{kipf2017semi} and GAT \cite{velickovic2018gat} as Euclidean baselines. Note that HyboNet is designed to work only on the Lorentz hyperboloid.

\section{Can HGNNs learn low-distortion embeddings of trees?}
\label{sec:pairs_distortion_pdp}
HGNNs typically learn representations through supervision rather than by directly optimizing a metric embedding objective. 
This raises a basic question: \emph{can a HGNN actually learn low-distortion representations under supervision?} To answer it, we first formally define distortion, and then relate it to a controlled synthetic task which we can learn to solve, thereby directly answering the question. It will be useful for our subsequent analyses to view a GNN as a composition of functions $f_{\theta} = g_{\psi} \circ h_{\phi}$, where $h_{\phi}$ is an \emph{embedding} function and $g_{\psi}$ is a task-dependent \emph{solver}.
Concretely, $h_{\phi}: \mathcal{G} \to \mathcal{M} \subseteq \mathbb{R}^d$ maps nodes (and their graph context) into a (Riemannian) representation manifold $\mathcal{M}$, while the solver $g_{\psi} : \mathcal{M} \to \mathcal{Y}$ maps latent representations to an output space $\mathcal{Y}$ determined by the task. For example, $\mathcal{Y}=\mathbb{R}$ in node regression or $\mathcal{Y}=\{1,\ldots, C\}$ in node classification. For simplicity, we will sometimes drop the weight subscripts when referring to the model functions. The distortion of the embedding map $h$ is meant to capture the metric faithfulness of an embedding by considering it as a map between metric spaces. Let the distance measure on the input graph $\mathcal{G}=(\mathcal{V},\mathcal{E})$ be the shortest-path distance $d_{\mathcal{G}}(\cdot,\cdot)$ and $d_{\mathcal{M}}(\cdot,\cdot)$ be geodesic distance on the manifold $\mathcal{M}$.

\begin{definition}[Embedding distortion \cite{sarkar2011low}] 
\label{def:distortion}
Let the embedding $h: (\mathcal{G}, d_{\mathcal{G}}) \to (\mathcal{M}, d_{\mathcal{M}})$ be a map between metric spaces that is injective onto its image. Define the contraction and expansion factors:
\begin{equation}\label{eq:dist_factors}
    \delta_c(h) = \max_{\substack{i,j \in \mathcal{V} \\  i \neq j}} \frac{d_{\mathcal{G}}(i,j)}{d_{\mathcal{M}}(h(\bm{x}_i),h(\bm{x}_j))} \quad \text{and} \ \ \delta_e(h) = \max_{\substack{i,j \in \mathcal{V} \\  i \neq j}} \frac{d_{\mathcal{M}}(h(\bm{x}_i),h(\bm{x}_j))}{d_{\mathcal{G}}(i,j)}.
\end{equation}
The distortion of $h$ is defined as $\delta(h) = \delta_c(h)\delta_e(h)$.
\end{definition}

It is immediate to see that $\delta(h)\ge 1$. Moreover, we have that for all $i\neq j: \:
\delta_c(h)^{-1} d_{\mathcal{G}}(i,j) \le d_{\mathcal{M}}(h(\bm{x}_i),h(\bm{x}_j)) \le \delta_e(h)\, d_{\mathcal{G}}(i,j),$
so $h$ is \emph{bi-Lipschitz onto its image} with constants given by its distortion. When $\delta(h)=1$ we can write
$\delta_c(h)=\delta_e(h)^{-1}$ and therefore $d_{\mathcal{M}}(h(\bm{x}_i),h(\bm{x}_j)) = \tau\, d_{\mathcal{G}}(i,j) \quad \text{for some }\tau>0$. An \emph{embedding without distortion is thereby a global similarity}, i.e., it globally scales pairwise distances in $\mathcal{G}$ by a factor $\tau$. The stricter case of a (global) isometry is obtained by requiring $\delta_e(h)=\delta_c(h)=1$. We are now ready to ask our first key question regarding hyperbolic message passing over trees: \emph{even if low-distortion hyperbolic embeddings exist \cite{sarkar2011low}, can a HGNN actually learn them under supervision?}

\subsection{Pairwise Distance Prediction: training explicitly for metric preservation}
\label{sec:pdp_task}

To address this question, we introduce a controlled synthetic task named the Pairwise Distance Prediction (PDP) task.
Given a finite, connected graph $\mathcal{G}=(\mathcal{V},\mathcal{E})$ we consider all node pairs $\mathcal{P}\subseteq \{(i,j)\in\V\times\V: i\neq j\}$, divide them in training, validation and test sets and define regression targets $D_{ij} := d_{\mathcal{G}}(i,j), \: (i,j)\in \mathcal{P}$. We train the encoder $h_\phi$ to map nodes to a latent manifold $\bm{z}_i = h_\phi(\bm{x}_i)\in\mathcal{M}$, where $\bm{x}_i$ are the features of node $i$ and $\mathcal{M}$ is either $\R^d$ or $\mathcal{B}^d_{c}/\mathcal{L}^d_{c}$. For each pair we compute the corresponding latent distance $\hat{d}_{ij} = d_{\mathcal{M}}(\bm{z}_i,\bm{z}_j)$ and use it to predict $D_{ij}$ through a scalar decoder $\hat{D}_{ij} = g_\psi(\hat{d}_{ij}) = a\, \hat{d}_{ij} + b, \: (i,j)\in \mathcal{P}$, where $\psi=(a,b)$. This isolates the role of the \emph{geometry of the embedding space}: the model succeeds only if shortest-path distances can be expressed up to affine rescaling as geodesic distances in $\mathcal{M}$.

\paragraph{Stress loss.}
We optimize the parameters $\phi, \psi$ by minimizing a distortion-aware objective inspired by metric multidimensional scaling \cite{canzar2024metric}:
\begin{equation}
\mathcal{L}_{\mathrm{stress}}(\phi,\psi)
=
\frac{1}{|\mathcal{P}|}
\sum_{(i,j)\in\mathcal{P}}
\frac{\bigl(\hat{D}_{ij} - D_{ij}\bigr)^{2}}
     {D_{ij}^{2} + \varepsilon},
\end{equation}
where $\varepsilon>0$ is a small constant for numerical stability. 
The normalization by $D_{ij}^2$ prevents optimization from being dominated by large-distance pairs. Importantly, this loss provides a straightforward empirical proxy to operationalize distance preservation by measuring whether graph distances are easily recoverable from latent distances. If this loss is further normalized by the stress obtained from a feature-only baseline that calculates distances directly in the input feature space, we refer to it as \emph{Normalized Stress Loss}.

\paragraph{Datasets.}

To evaluate the PDP task, we construct two synthetic graphs with comparable size but different intrinsic geometry: a \emph{tree} and a \emph{grid}.
These are canonical settings where hyperbolic and Euclidean latent spaces are respectively well matched.
The \emph{tree} is a balanced $b$-ary tree with branching factor $b=5$ and depth $l=4$ ($|\V|=781$), whereas the \emph{grid} is a $28\times 28$ $2$D lattice ($|\V|=784$).
Node features are sparse ($10\%$) Gaussian vectors in $\R^{100}$.

\paragraph{Results.}

Averaged results over 10 random seeds on the \textit{tree} and \textit{grid} graphs are reported in Fig.~\ref{fig:stress_vs_embdim}, with detailed values provided in Appendix~\ref{appendix:pairs_task}. \nips{For details on tuning and evaluation, see Appendix~\ref{app:tuning}.} Inspired by Nash's embedding theorems \cite{nash1956imbedding}, we vary the embedding dimension $d$ to assess whether Euclidean models can increasingly approximate a low-distortion embedding of the input graph. To isolate the role of geometry, we further compare HyboNet with a Euclidean counterpart that shares the same message-passing architecture and model capacity, differing only in the embedding space. On the \textit{tree}, hyperbolic models achieve substantially lower stress than Euclidean baselines at low dimensions, confirming that negative curvature provides an effective inductive bias. HyboNet performs best at $d=3$, while HGCN becomes competitive at higher dimensions. As $d$ increases, Euclidean models progressively close the gap, and by $d=128$ all methods achieve comparable stress. On the \textit{grid}, the trend reverses: Euclidean models consistently achieve lower stress across all dimensions, indicating that flat geometry better matches grid distances. Notably, HGCN outperforms HyboNet in this setting, suggesting that its learnable curvature allows it to adapt toward an effectively Euclidean representation regime.
\begin{wrapfigure}{r}{0.65\columnwidth}
  \centering
  \begin{subfigure}{0.48\linewidth}
      \centering
      \includegraphics[width=\linewidth]{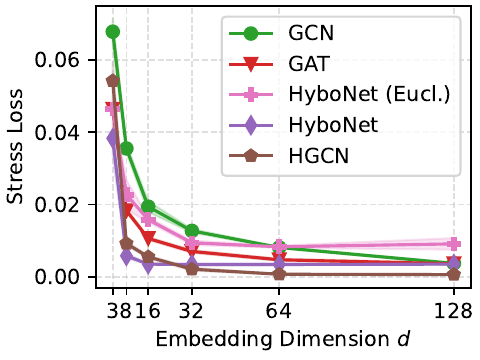}
      \subcaption{Tree}
      \label{fig:stress_loss_tree}
  \end{subfigure}
  \hfill
  \begin{subfigure}{0.48\linewidth}
      \centering
      \includegraphics[width=\linewidth]{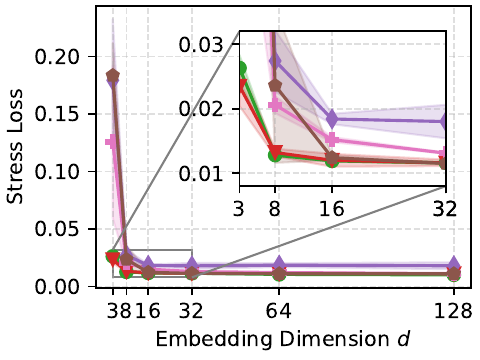}
      \subcaption{Grid}
      \label{fig:stress_loss_grid}
  \end{subfigure}
  \caption{Test Stress Loss ($\downarrow$) vs. embedding dimension for the PDP task. Markers show mean over $10$ seeds; shaded regions indicate standard deviation.}
  \label{fig:stress_vs_embdim}
  \vspace{-1.2em}
\end{wrapfigure}
We complement the stress-based evaluation with a qualitative analysis that directly compares pairwise distances in data and embedding space. Specifically, we plot average latent distances conditioned on true graph distances, fit a linear model, and report the resulting $R^2$ score. \nips{Full results are provided in Appendix~\ref{appendix:pairs_task}, including a target-permutation control where labels are randomly shuffled across nodes, preserving their marginal distribution while breaking their relation to graph distances.}

\begin{takeawaybox}
When supervision demands metric faithfulness, message passing can exploit the geometric inductive bias of the latent space. Hyperbolic architectures can minimize distortion more effectively on hyperbolic inputs, especially in lower dimensions where model capacity is limited.
\end{takeawaybox}

\section{From Metric Preservation to Task-Aligned Geometry}
\label{sec:rq2_task_geometry}

Section~\ref{sec:pairs_distortion_pdp} established an important result: \emph{when supervision explicitly demands metric faithfulness}, hyperbolic message passing can recover optimal representations of tree-like graphs in terms of distortion. This dependence on supervision is clear from the control experiment in Appendix~\ref{appendix:pairs_task}: keeping a \emph{tree} as input but randomly shuffling the targets breaks the link between labels and geometry, and hyperbolic models no longer outperform Euclidean ones. This leads us to the key claim of our study: \emph{contrary to popular belief, input hyperbolicity alone is not sufficient to justify HGNNs}. Instead, a hyperbolic inductive bias is beneficial precisely when the task at hand is aligned in some way with the geometric structure of the input. This raises the central question of this section: \emph{when does a task require preserving (parts of) the input geometry?} To provide a first step toward answering this question, we turn to a simple formal model of geometry--task alignment. Consider the problem of univariate \emph{node regression} (NR), where the label space has metric function $d_{\mathcal{Y}}(y_i,y_j)=|y_i-y_j|$. The goal is to formalize the intuition that the labels inherit the input geometry if they share similar metric characteristics to the input graph. More specifically, we propose the following characterizations, in decreasing order of strength: 

Suppose that there exist constants $\alpha,\beta>0$ such that for all $i,j \in \mathcal{V}, \:i\neq j: \:$ \emph{(i) [Global]} $\alpha\, d_{\mathcal G}(i,j)\ \leq d_{\mathcal Y}(y_i, y_j) \ \leq \beta\, d_{\mathcal G}(i,j); $ \emph{(ii) [Monotone Global]} there exists a strictly increasing function $\omega:\mathbb R\to\mathbb R$ for which $ \alpha\, d_{\mathcal G}(i,j)\ \leq d_{\mathcal Y}(\omega(y_i), \omega(y_j)) \leq \beta\, d_{\mathcal G}(i,j); $ \emph{(iii) [Local]} nodes in an $r$-ball $d_{\mathcal{G}}(i,j) \leq r$ respect $ \alpha\, d_{\mathcal G}(i,j)\ \leq d_{\mathcal Y}(y_i, y_j) \leq \beta\, d_{\mathcal G}(i,j)\leq \beta\,r .$
 
Consider now a supervised learning problem with $n_t$ training labels $\mathbf{y} = \{y_i\}^{n_t}_{i=1} \in \mathbb{R}^{{n_t}}$ and features $\mathbf{X} = \{\mathbf{x}_i\}^{n}_{i=1} \in \mathbb{R}^{{n} \times d'}$, which live upon a simple, finite, and connected graph $\mathcal{G}=(\mathcal{V}, \mathcal{E}), \: |\mathcal{V}| = n$. 
The problem consists in learning a model $f_{\theta}(\mathbf{x_i}, \mathcal{G}) = \hat{y}_i \in \mathbb{R}$ such that the risk $\frac{1}{2}(y_i - \hat{y_i})^2$ is minimal over training samples. Unless noted otherwise, the representation manifolds are assumed to be Riemannian submanifolds of a Euclidean ambient space s.t. $\langle \cdot, \cdot \rangle$ and $\|\cdot\|$ correspond to the dot-product and $l^2$ norm, respectively. The proofs of all statements are deferred to Appendix \ref{app:proofs}. 

\begin{theorem} 
\label{prop:model_lipschitz}
Let $\mathcal{G} \coloneqq (\mathcal{V}, \mathcal{E})$ be the input graph, $f = g \circ h$ be the model, with $h$ an embedding as in Def. \ref{def:distortion}, and $g(\bm{z}_i = h(\bm{x}_i)| \bm{w}, b) : \mathcal{M} \to \mathbb{R} \coloneqq \langle \bm{w}, \bm{z}_i \rangle + b = \hat{y}_i$ a linear solver. Define:
\begin{gather}
    \xi(h) = \min_{\substack{i,j \in \mathcal{V} \\  i \neq j}} \frac{\|\mathbf{z}_i - \mathbf{z}_j \|}{d_{\mathcal{M}}(\mathbf{z}_i, \mathbf{z}_j)}, \quad\:
    \zeta(h) = \max_{\substack{i,j \in \mathcal{V} \\  i \neq j}} \frac{\|\mathbf{z}_i - \mathbf{z}_j \|}{d_{\mathcal{M}}(\mathbf{z}_i, \mathbf{z}_j)}, \quad\:
    \rho(h, \bm{w}) = \min_{\substack{i,j \in \mathcal{V} \\  i \neq j}} \frac{|\langle w, \mathbf{z}_i - \mathbf{z}_j\rangle|}{\|\bm{w}\| \|\mathbf{z}_i - \mathbf{z}_j\|}.
\end{gather}
Then, if $\rho(h, \bm{w}) > 0$, $f$ is bi-Lipschitz onto its image with upper constant $\beta = \|\bm{w}\| \zeta(h)\delta_e(h)$ and lower constant $\alpha = \frac{ \|\bm{w}\| \xi(h) \rho(h,\bm{w})}{\delta_c(h)}$, such that $\alpha d_{\mathcal{G}}(i,j) \leq |\hat{y_i} - \hat{y_j}| \leq \beta d_{\mathcal{G}}(i,j)$.
\end{theorem}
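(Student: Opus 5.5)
The argument is a chain of three elementary comparisons applied pairwise, for fixed $i\neq j$, followed by taking the extremal constants over all pairs. Since $\hat y_i-\hat y_j=\langle \bm w,\bm z_i-\bm z_j\rangle$, the bias $b$ cancels. The plan is to bound $|\langle \bm w,\bm z_i-\bm z_j\rangle|$ in terms of $\|\bm z_i-\bm z_j\|$, then $\|\bm z_i-\bm z_j\|$ in terms of $d_{\mathcal M}(\bm z_i,\bm z_j)$, and finally $d_{\mathcal M}$ in terms of $d_{\mathcal G}$ using the bi-Lipschitz inequalities after Definition~\ref{def:distortion}.

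For the upper bound, I apply Cauchy--Schwarz to get $|\langle \bm w,\bm z_i-\bm z_j\rangle|\le \|\bm w\|\,\|\bm z_i-\bm z_j\|$. By the definition of $\zeta(h)$ as a maximum, $\|\bm z_i-\bm z_j\|\le \zeta(h)\,d_{\mathcal M}(\bm z_i,\bm z_j)$. Then $d_{\mathcal M}(\bm z_i,\bm z_j)\le \delta_e(h)\,d_{\mathcal G}(i,j)$. Chaining these gives $\beta=\|\bm w\|\zeta(h)\delta_e(h)$.

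For the lower bound, I use the definition of $\rho(h,\bm w)$ as a minimum over pairs: $|\langle \bm w,\bm z_i-\bm z_j\rangle|\ge \rho(h,\bm w)\,\|\bm w\|\,\|\bm z_i-\bm z_j\|$. Next, $\|\bm z_i-\bm z_j\|\ge \xi(h)\,d_{\mathcal M}(\bm z_i,\bm z_j)$, and $d_{\mathcal M}(\bm z_i,\bm z_j)\ge \delta_c(h)^{-1}d_{\mathcal G}(i,j)$. Multiplying these nonnegative bounds gives $\alpha=\|\bm w\|\xi(h)\rho(h,\bm w)/\delta_c(h)$.

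The only delicate step is well-definedness and strict positivity, which is what makes the map genuinely bi-Lipschitz (and injective) rather than only Lipschitz.
\begin{itemize}
\item Injectivity of $h$ gives $\bm z_i\neq\bm z_j$, so $\|\bm z_i-\bm z_j\|>0$ and $d_{\mathcal M}>0$. All ratios are therefore finite, and they are extremized over a finite set, so the min and max are attained.
\item Because $\mathcal M$ is a Riemannian submanifold of the Euclidean ambient space, the chord is never longer than the geodesic, so $\zeta(h)\le1$. In any case $\xi(h)>0$, since it is a minimum of finitely many positive numbers.
\item The hypothesis $\rho(h,\bm w)>0$ forces $\bm w\neq 0$ and rules out any difference vector orthogonal to $\bm w$. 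Hence $\alpha>0$, and $f$ is injective, hence bi-Lipschitz onto its image with respect to $d_{\mathcal G}$ and $|\cdot|$.
\end{itemize}
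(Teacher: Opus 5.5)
Your proof is correct and follows the paper's argument step for step. The paper factors $|\hat y_i-\hat y_j|/d_{\mathcal G}(i,j)$ through $d_{\mathcal M}$: it gets the upper constant from Cauchy--Schwarz, $\zeta(h)$ and $\delta_e(h)$, and the lower constant from $|\cos\theta_{ij}|\ge\rho(h,\bm w)$, $\xi(h)$ and $\delta_c(h)^{-1}$, which is exactly your three-link chain. Your side remark that $\zeta(h)\le 1$ is not needed for the theorem and holds only when $\mathcal M$ carries the induced Euclidean metric (the paper proves this separately as a lemma; for conformal metrics such as the Poincar\'e ball it instead comes from a lower bound on the conformal factor).
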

The above result should not be simply interpreted as saying that that a model class consisting of a linear solver is able to effectively learn geometry aligned labels controlled by distortion. The relevant
question is whether the resulting geometry agrees with the ground-truth labels. Indeed, the theorem controls distances between predictions, therefore if the prediction error is uniformly bounded as \(|y_i-\hat y_i|\leq \varepsilon\), then for all \(i\neq j\) we trivially have that $\alpha d_{\mathcal G}(i,j)-2\varepsilon \leq |y_i-y_j| \leq \beta d_{\mathcal G}(i,j)+2\varepsilon .$
Thus, \emph{a low-error solution whose embedding has low distortion implies that the labels are approximately aligned with the input geometry}.
Conversely, if the labels violate such metric relations, as in the
previous permutation experiments, preserving graph distances is not meaningful for the supervised task and the model must either incur
prediction error or learn a distorted representation. Note that the Thm \ref{prop:model_lipschitz} does not directly apply to Lorentz-model architectures such as HyboNet, whose ambient space is Minkowski rather than Euclidean. We therefore will simply treat HyboNet as an empirical extension for detailed comparison. Having clarified the presence of geometry--task alignment, we now turn our attention to model behavior using another concept of distortion:
\begin{figure}[t]
  \centering
  \captionsetup[subfigure]{aboveskip=2pt,belowskip=3pt}

  \begin{subfigure}{0.3\columnwidth}
      \centering
      \includegraphics[width=\linewidth]{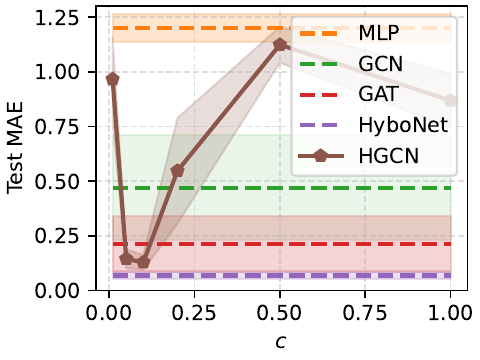}
      \caption{HGCN curvature }
      \label{fig:hgcn_nr_curvature128}
  \end{subfigure}
  \hfill
  \begin{subfigure}{0.3\columnwidth}
      \centering
      \includegraphics[width=\linewidth]{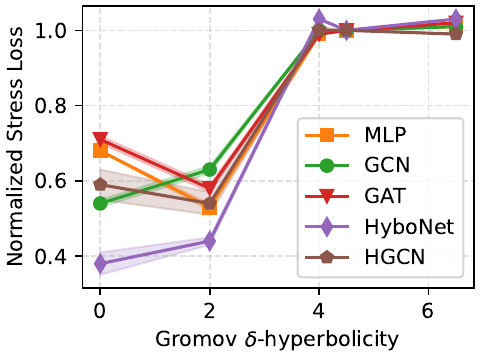}
      \caption{LP stress}
      \label{fig:lp_real_stress_loss_norm}
  \end{subfigure}
  \hfill
  \begin{subfigure}{0.3\columnwidth}
      \centering
      \includegraphics[width=\linewidth]{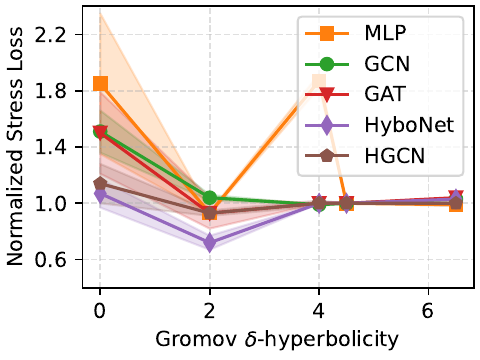}
      \caption{NC stress}
      \label{fig:nc_real_stress_loss_norm}
  \end{subfigure}

  \caption{
  Left: NR MAE ($\downarrow$) of HGCN with fixed curvature $c$ on the tree for $d=128$, showing a curvature--performance trade-off.
  Middle/right: Normalized Stress loss ($\downarrow$) for predicting shortest-path distances using embeddings learned for LP and NC.
  Hyperbolic models help for LP at small $\delta$, while gains for NC are marginal.
  }
  \label{fig:curvature_and_real_stress}
  \vspace{-0.2em}
\end{figure}
\begin{definition}[Model distortion]
\label{def:model_distortion}
Let $\mathcal{G}$ be the input graph, $f$ be the model, and $\beta, \alpha$ the upper and lower constants following Thm. \ref{prop:model_lipschitz}. The model distortion is defined as the condition number $k(f) = \frac{\beta}{\alpha} = \frac{\zeta(h) \delta(h)}{\xi(h) \rho(h,\bm{w})}$.
\end{definition}
Intuitively, model distortion is a task-level analogue of embedding distortion: it measures how the graph metric is transformed by the complete learning model. We can then derive the following results:

\begin{proposition}[Euclidean embeddings]
\label{prop:Euclidean_model_distortion}
Let $\mathcal{G}$ be the input graph, $f$ be the model, $h$ the embedding function, and $g$ the solver function, all as defined in Thm. \ref{prop:model_lipschitz}. Furthermore, let $h$ be an embedding of the graph into Euclidean space $\mathbb{R}^d$. The model distortion of $f$ is $k(f) = \frac{\delta(h)}{\rho(h,\bm{w})}.$
\end{proposition}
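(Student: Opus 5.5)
The plan is to specialize the general formula of Definition~\ref{def:model_distortion}, $k(f) = \frac{\zeta(h)\delta(h)}{\xi(h)\rho(h,\bm{w})}$, to the case $\mathcal{M}=\mathbb{R}^d$. It suffices to show that the two geometric ratios $\xi(h)$ and $\zeta(h)$ defined in Theorem~\ref{prop:model_lipschitz} both equal $1$ when the representation manifold is flat.

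The first step is to observe that $\mathbb{R}^d$, viewed as a Riemannian submanifold of itself, carries the standard Euclidean metric $g_{\bm{x}} = \langle\cdot,\cdot\rangle$ at every point. Its geodesics are straight line segments, so the geodesic distance is $d_{\mathcal{M}}(\bm{z}_i,\bm{z}_j) = \|\bm{z}_i - \bm{z}_j\|$. This coincides exactly with the ambient $\ell^2$ norm used in the numerators of $\xi$ and $\zeta$.

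Next, injectivity of $h$ (assumed in Definition~\ref{def:distortion}) gives $\bm{z}_i \neq \bm{z}_j$ for $i\neq j$. Hence every ratio $\|\bm{z}_i-\bm{z}_j\|/d_{\mathcal{M}}(\bm{z}_i,\bm{z}_j)$ is well defined and equal to $1$. The minimum and maximum over distinct pairs then give $\xi(h)=\zeta(h)=1$.

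Substituting into Definition~\ref{def:model_distortion}, using $\delta(h)=\delta_c(h)\delta_e(h)$, yields $k(f) = \delta(h)/\rho(h,\bm{w})$. This presupposes $\rho(h,\bm{w})>0$, which Theorem~\ref{prop:model_lipschitz} requires for $\alpha$ to be a valid lower constant. Equivalently, one can compute $\beta = \|\bm{w}\|\delta_e(h)$ and $\alpha = \|\bm{w}\|\rho(h,\bm{w})/\delta_c(h)$ directly and take their ratio.

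There is no real obstacle here. The only point needing care is the identification of the geodesic distance with the ambient norm, which holds because the manifold is the whole flat ambient space rather than a curved submanifold. The argument also relies on the standing hypotheses of injectivity and $\rho>0$, so that all quotients are finite.
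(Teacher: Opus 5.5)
Your proposal is correct and matches the paper's proof: both note that in $\mathbb{R}^d$ the geodesic distance equals the Euclidean norm, so $\xi(h)=\zeta(h)=1$, and then substitute into Definition~\ref{def:model_distortion}. Your remarks on injectivity and $\rho(h,\bm{w})>0$ state explicitly the well-definedness conditions that the paper leaves implicit.
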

In the most practically employed case of learning Euclidean embeddings, the model distortion and thus the geometry--task relationship is directly proportional to the distortion of the embedding function. Assuming non-orthogonality between any chord in representation space and the weight vector of $g$, the model distortion is in fact no less than the embedding distortion, since $0 < \rho(h,\bm{w}) \leq 1$, therefore the alignment tightness is directly controlled by $\delta(h)$. This fact is precisely the intuition behind most papers that propose the use of HGNNs for tree-like graphs \cite{chami2019hyperbolic}. The following propositions show that despite being intuitive, this motivation does not necessarily hold in hyperbolic space.

\begin{proposition}[Similarities and isometries]
\label{prop:Isometries_distortion}
Let $\mathcal{G}$ be the input graph, $f$ be the model, $h$ the embedding function, and $g$ the solver function, all as defined in Thm.~\ref{prop:model_lipschitz}. If $\delta(h)=1$, the model distortion of $f$ is $ k(f) = \frac{\zeta(h)}{\xi(h)\rho(h,\bm{w})}.$
\end{proposition}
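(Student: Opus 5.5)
The statement is a direct specialization of Definition~\ref{def:model_distortion}, so the plan is to substitute $\delta(h)=1$ into $k(f)=\frac{\zeta(h)\delta(h)}{\xi(h)\rho(h,\bm{w})}$, after checking that this formula is legitimately available. Concretely, I would first recall from Thm.~\ref{prop:model_lipschitz} that, under $\rho(h,\bm{w})>0$, the Lipschitz constants are $\beta=\|\bm{w}\|\zeta(h)\delta_e(h)$ and $\alpha=\|\bm{w}\|\xi(h)\rho(h,\bm{w})/\delta_c(h)$. Their ratio is
\begin{equation*}
\frac{\beta}{\alpha}=\frac{\zeta(h)\,\delta_e(h)\,\delta_c(h)}{\xi(h)\,\rho(h,\bm{w})}=\frac{\zeta(h)\,\delta(h)}{\xi(h)\,\rho(h,\bm{w})}.
\end{equation*}
Here $\|\bm{w}\|$ cancels, and $\rho>0$ forces $\bm{w}\neq 0$, so $\|\bm{w}\|\neq 0$.

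Next I would invoke the remark after Definition~\ref{def:distortion}: $\delta(h)=\delta_c(h)\delta_e(h)=1$ means $\delta_c(h)=\delta_e(h)^{-1}$, so $h$ is a global similarity with $d_{\mathcal M}(\bm z_i,\bm z_j)=\tau\, d_{\mathcal G}(i,j)$. Only the product $\delta_c\delta_e$ enters the ratio, so the individual scale $\tau$ is irrelevant. Setting $\delta(h)=1$ then gives $k(f)=\zeta(h)/(\xi(h)\rho(h,\bm{w}))$.

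The only step needing care is well-definedness, which I expect to be the main (mild) obstacle. I would verify that $\xi(h)>0$: by injectivity of $h$, the points $\bm z_i\neq\bm z_j$, so the chord length $\|\bm z_i-\bm z_j\|>0$, and the minimum is taken over finitely many pairs. I would also verify that $\zeta(h)<\infty$, since $d_{\mathcal M}(\bm z_i,\bm z_j)>0$ for distinct points. Both facts follow from $\mathcal V$ being finite. It is also worth noting, as the interpretive point of the proposition, that $\zeta/\xi$ need not equal $1$ outside Euclidean space: in the Poincar\'e ball, chords are shorter than geodesics by a factor that varies with position. Hence even an isometric hyperbolic embedding yields $k(f)>1/\rho$ in general, in contrast with Proposition~\ref{prop:Euclidean_model_distortion}, where $\zeta=\xi=1$.
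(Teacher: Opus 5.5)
Your proposal is correct and follows the paper's proof. The paper likewise substitutes $\delta(h)=\delta_c(h)\delta_e(h)=1$ into the definition $k(f)=\zeta(h)\delta(h)/(\xi(h)\rho(h,\bm{w}))$. Your checks that $\xi(h)>0$, $\zeta(h)<\infty$ and $\bm{w}\neq 0$ are extra care the paper omits, and they do not change the argument.
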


\begin{proposition}[Poincaré ball embeddings] 
\label{prop:poincare_model_distortion}
Let $\mathcal{G}$ be the input graph, $f$ be the model, $h$ the embedding function, and $g$ the solver function, all as defined in Thm. \ref{prop:model_lipschitz}. Let $h$ be an embedding function that embeds into the Poincaré ball $\mathcal{B}^d_{1}$ and define $\eta := \max_{i \in \mathcal{V}} \|h(\mathbf{x}_i)\|$. Then, the model distortion of $f$ is $k(f) = \frac{\delta(h)}{(1-\eta^2)\rho(h,\bm{w})}.$
\end{proposition}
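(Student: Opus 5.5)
The plan is to reduce the statement to Definition~\ref{def:model_distortion}, which already gives $k(f)=\frac{\zeta(h)\,\delta(h)}{\xi(h)\,\rho(h,\bm{w})}$. Since $\delta(h)$ and $\rho(h,\bm{w})$ are untouched by the choice of manifold, all the work is in controlling the ratio $\zeta(h)/\xi(h)$. That is, we need to compare Euclidean chord length $\|\bm z_i-\bm z_j\|$ with the Poincaré geodesic distance $d_{\mathcal B}(\bm z_i,\bm z_j)$ for points in the closed Euclidean ball $\bar B_\eta=\{\|\bm z\|\le \eta\}$, where $\eta<1$ because there are finitely many nodes. The target is the two-sided estimate
\begin{equation*}
2\|\bm z_i-\bm z_j\| \;\le\; d_{\mathcal B}(\bm z_i,\bm z_j)\;\le\; \frac{2}{1-\eta^2}\,\|\bm z_i-\bm z_j\| .
\end{equation*}
This gives $\zeta(h)\le \tfrac12$ and $\xi(h)\ge \tfrac{1-\eta^2}{2}$, hence $\zeta(h)/\xi(h)\le \frac{1}{1-\eta^2}$, and the formula in the statement follows.

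For the upper bound, I use that $d_{\mathcal B}$ is an infimum of Riemannian lengths, so any admissible curve gives an upper bound. I take the straight Euclidean segment from $\bm z_i$ to $\bm z_j$. It stays in $\bar B_\eta$ by convexity, and there the conformal factor satisfies $\lambda_1(\bm x)=\frac{2}{1-\|\bm x\|^2}\le \frac{2}{1-\eta^2}$. Its hyperbolic length is $\int\lambda_1\,\|d\bm x\|$, which is therefore at most $\frac{2}{1-\eta^2}\|\bm z_i-\bm z_j\|$.

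For the lower bound, I take an arbitrary curve $\gamma$ in $\mathcal B^d_1$ joining the two points. It may leave $\bar B_\eta$, but $\lambda_1\ge 2$ holds on the whole ball. Its hyperbolic length is thus at least twice its Euclidean length, which is at least $\|\bm z_i-\bm z_j\|$. Taking the infimum over $\gamma$ gives $d_{\mathcal B}\ge 2\|\bm z_i-\bm z_j\|$. As a cross-check, I would also verify both inequalities directly from the closed form $d_{\mathcal B}(\bm x,\bm y)=\operatorname{arcosh}\!\bigl(1+\frac{2\|\bm x-\bm y\|^2}{(1-\|\bm x\|^2)(1-\|\bm y\|^2)}\bigr)$, using $\operatorname{arcosh}(1+2t^2)=2\operatorname{arsinh}(t)$ together with $\operatorname{arsinh}(t)\le t$ for the upper bound.

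The main obstacle is that the statement asserts an equality, whereas the argument naturally yields $\zeta/\xi\le (1-\eta^2)^{-1}$, and hence only an upper bound on $k(f)$. Equality of the extreme ratios is attained only in limiting configurations: $\zeta\to\tfrac12$ for nearby points near the origin, and $\xi\to\tfrac{1-\eta^2}{2}$ for nearby points at radius $\eta$. I would therefore present the result as the sharp worst-case value of the geometric factor, i.e.\ $k(f)\le \frac{\delta(h)}{(1-\eta^2)\rho(h,\bm w)}$. I would add that this bound is tight in the sense that no smaller constant depending only on $\eta$ works, as shown by infinitesimally close point pairs at radii $0$ and $\eta$. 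This is the reading under which the proposition's formula holds, and it still makes the intended point: the factor $(1-\eta^2)^{-1}$ blows up as embeddings approach the boundary.
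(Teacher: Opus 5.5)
Your proposal is correct and follows the paper's proof essentially step for step. The straight chord, with conformal factor bounded by $2/(1-\eta^2)$, gives the upper bound on $d_{\mathcal{B}}$; the pointwise bound $\lambda_1\ge 2$ along arbitrary curves gives the lower bound; together these yield $\zeta(h)\le\tfrac12$ and $\xi(h)\ge\tfrac{1-\eta^2}{2}$. Your caveat about equality is well taken: the paper reaches the stated formula only by redefining the bi-Lipschitz constants as $\beta=\tfrac12\|\bm{w}\|\delta_e(h)$ and $\alpha=\tfrac{(1-\eta^2)\|\bm{w}\|\rho(h,\bm{w})}{2\delta_c(h)}$, which is exactly your reading of the result as an upper bound on $\zeta(h)\delta(h)/(\xi(h)\rho(h,\bm{w}))$.
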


Prop.~\ref{prop:poincare_model_distortion} applies specifically to solvers that operate on ambient Poincaré coordinates. It captures a boundary-compression effect where as embeddings approach the boundary, hyperbolic distances grow much faster than the Euclidean coordinates seen by the solver. For tangent-space readouts, such as architectures that apply a linear solver after a logarithmic map, the relevant coordinate map is instead $\log_0$ and the above result should not be interpreted as a direct characterization of the full readout. Nevertheless, it serves in highlighting the general principle that the choice of solver coordinates affects model distortion and alignment. Figure~\ref{fig:hgcn_nr_curvature128} empirically illustrates a related curvature--performance trade-off for HGCN. Note that similar phenomena have been observed empirically for hyperbolic generative models \cite{wen2024hyperbolic}, where typically the curvature of the latent space is set close to $0$. 

\subsection{Experiments on Node Regression}
\label{sec:node_regression}
To approximate a setting where we can verify our theoretical results, we propose a simple task. Extract an anchor node $a \in \mathcal{V}$ and define a label for each node $i \in \mathcal{V}  \setminus \{a\}$ as $y_i = \beta^*d_\mathcal{G}(i, a)$, for $\beta^* > 0$. Using the reverse triangle inequality and excluding $a$ from the labeled set, we obtain a problem where $0 \leq |y_i - y_j| = \beta^* \lvert d_\mathcal{G}(i, a) - d_\mathcal{G}(a, j) \rvert\leq \beta^* d_\mathcal{G}(i,j),$ which provides a local geometry-alignment probe: it tests whether the model can recover distance-to-anchor structure. Concretely, we use a tree with depth $l=4$ and branching factors $b=(100,2,2,2)$ with sparse ($10\%$) Gaussian features in $\R^{100}$. The anchor is the root node, therefore the regression error reflects geometric alignment by testing whether the model captures the exponential expansion of the underlying space. As a sanity check, we also report results for a target-permutation baseline under the control column. To assess alignment on a non-hyperbolic structure, we consider the same grid graph as in Section~\ref{sec:pdp_task}, using a central node as anchor. We evaluate the same models as in Section~\ref{sec:pairs_distortion_pdp}, along with an MLP. Models are tested in two configurations: a constrained setting with dimension $3$, and the best configuration among $\{16, 32, 64, 128\}$ (denoted by $\leq 128$). Performance is measured using Mean Average Error (MAE) over $10$ splits.







\paragraph{Results.}
\begin{wraptable}{r}{0.58\columnwidth}
\vspace{-0.8em}
\caption{\textbf{NR} MAE ($\downarrow$) on the synthetic tree and grid dataset for dimension $3$ and $\leq128$, and control on the tree.}
\label{tab:node_regression_mae}
\centering
\begin{sc}
\resizebox{\linewidth}{!}{%
\begin{tabular}{l c c c c c}
\hline
& \multicolumn{3}{c}{\textbf{Tree}} & \multicolumn{2}{c}{\textbf{Grid}} \\
\cmidrule(lr){2-4} \cmidrule(lr){5-6}
\textbf{Model} 
& \textbf{3} 
& \textbf{$\leq$ 128} 
& \textbf{Control} 
& \textbf{3} 
& \textbf{$\leq$128} \\
\hline

MLP     
& 1.24$\pm$\scriptsize0.06 
& 1.20$\pm$\scriptsize0.06 
& 1.20$\pm$\scriptsize0.07
& 8.36$\pm$\scriptsize1.67
& 9.40$\pm$\scriptsize2.67 \\

GCN     
& 0.51$\pm$\scriptsize0.24 
& 0.47$\pm$\scriptsize0.24 
& 1.19$\pm$\scriptsize0.07
& 0.45$\pm$\scriptsize0.10
& 0.46$\pm$\scriptsize0.04 \\

GAT     
& 0.63$\pm$\scriptsize0.39 
& 0.21$\pm$\scriptsize0.13 
& 1.21$\pm$\scriptsize0.05
& \textbf{0.41$\pm$\scriptsize0.14}
& 0.29$\pm$\scriptsize0.06 \\

HyboNet 
& \textbf{0.17$\pm$\scriptsize0.02} 
& \textbf{0.07$\pm$\scriptsize0.02}
& 1.23$\pm$\scriptsize0.06
& 1.85$\pm$\scriptsize0.55
& 0.45$\pm$\scriptsize0.08 \\

HGCN    
& 0.25$\pm$\scriptsize0.05 
& 0.08$\pm$\scriptsize0.02
& \textbf{1.16$\pm$\scriptsize0.06}
& 0.44$\pm$\scriptsize0.07
& \textbf{0.27$\pm$\scriptsize0.06} \\

\hline
\end{tabular}%
}
\end{sc}
\vspace{-1.0em}
\end{wraptable}
\nips{The results in Table~\ref{tab:node_regression_mae} show that all GNNs outperform the MLP on both datasets, confirming the importance of graph structure. On the tree, the fully hyperbolic HyboNet achieves the best performance, likely due to operating entirely in hyperbolic space. HGCN performs slightly worse, which we attribute to a learned curvature parameter that puts the linear decoder in a tough situation, in similar spirit to Prop.~\ref{prop:poincare_model_distortion}. This trade-off is illustrated in Figure~\ref{fig:hgcn_nr_curvature128}: at fixed and relatively small curvature ($c=0.1$), HGCN outperforms all methods except HyboNet. In accordance with the broader message of the theory, we see that both representation geometry and solver coordinates affect alignment.
On the other hand, Euclidean models work better on the grid. HGCN is able to keep up due to its learnable curvature parameter, while HyboNet performs worst among all GNN methods at small capacity. Taking into account the control experiment, where all methods perform the same, these results confirm that it is not sufficient for general tasks to consider $\delta$-hyperbolicity, but also the preservation of geometry that the tasks requires. Finally, our results suggest that node regression is a natural setting where hyperbolic advantages \emph{should} emerge, yet it remains largely unexplored due to the lack of suitable benchmarks, which we highlight as an important direction for future work.}

\begin{takeawaybox}
When NR is geometry-aligned on tree-like graphs, fully hyperbolic methods outperform others. Methods that rely on tangent space computations might degrade in performance if the representation manifold becomes highly curved, so this parameter must be controlled.
\end{takeawaybox}

\begin{table*}[t]
  \caption{\textbf{LP} (ROC AUC $\uparrow$) and \textbf{NC} (Macro F1 $\uparrow$) results on real-world datasets. We also report Gromov  $\delta$-hyperbolicity (lower is more hyperbolic). The last row shows the Absolute Gain of the best hyperbolic model over the best Euclidean GNN model.}
  \label{tab:lp_nc_combined}
  \begin{center}
    \begin{small}
      \begin{sc}
        \resizebox{\textwidth}{!}{
        \begin{tabular}{lccccc@{\hspace{2em}}ccccc}
        \hline
        & \multicolumn{5}{c}{\textbf{Link Prediction (LP)}} & \multicolumn{5}{c}{\textbf{Node Classification (NC)}} \\
        \cmidrule(lr){2-6} \cmidrule(lr){7-11}
        \textbf{Model}
        & \textbf{Disease} & \textbf{Airport} & \textbf{Cora} & \textbf{Pubmed} & \textbf{Citeseer}
        & \textbf{Disease*} & \textbf{Airport} & \textbf{Cora} & \textbf{Pubmed} & \textbf{Citeseer} \\
        & $\delta=0$ & $\delta=2$ & $\delta=4$ & $\delta=4.5$ & $\delta=6.5$
        & $\delta=0$ & $\delta=2$ & $\delta=4$ & $\delta=4.5$ & $\delta=6.5$ \\
        \hline
        MLP
        & 98.43$\pm${\scriptsize 0.74} & 97.66$\pm${\scriptsize 0.12} & 90.33$\pm${\scriptsize 0.94} & 94.52$\pm${\scriptsize 0.26} & 91.45$\pm${\scriptsize 0.68}
        & 48.62$\pm${\scriptsize 5.06} & 88.79$\pm${\scriptsize 1.97} & 55.63$\pm${\scriptsize 1.50} & 70.45$\pm${\scriptsize 1.12} & 54.29$\pm${\scriptsize 1.14} \\
        
        GCN
        & 91.67$\pm${\scriptsize 1.79} & 96.58$\pm${\scriptsize 0.35} & 92.98$\pm${\scriptsize 1.01} & 95.03$\pm${\scriptsize 0.19} & 96.02$\pm${\scriptsize 0.50}
        & 89.50$\pm${\scriptsize 4.82} & 92.46$\pm${\scriptsize 0.86} & 80.09$\pm${\scriptsize 1.01} & 78.49$\pm${\scriptsize 1.47} & 64.95$\pm${\scriptsize 1.50} \\
        
        GAT
        & 93.25$\pm${\scriptsize 1.59} & 95.67$\pm${\scriptsize 0.33} & 93.50$\pm${\scriptsize 0.68} & 96.77$\pm${\scriptsize 0.18} & 93.32$\pm${\scriptsize 0.92}
        & 85.71$\pm${\scriptsize 5.50} & 88.33$\pm${\scriptsize 5.73} & 80.71$\pm${\scriptsize 1.22} & 78.44$\pm${\scriptsize 1.88} & 65.16$\pm${\scriptsize 1.48} \\
        
        HyboNet
        & 97.99$\pm${\scriptsize 0.72} & 98.15$\pm${\scriptsize 0.17} & 93.70$\pm${\scriptsize 0.58} & 97.08$\pm${\scriptsize 0.09} & 91.53$\pm${\scriptsize 1.40}
        & 90.48$\pm${\scriptsize 5.12} & 90.87$\pm${\scriptsize 1.10} & 78.72$\pm${\scriptsize 1.01} & 78.33$\pm${\scriptsize 1.97} & 62.98$\pm${\scriptsize 1.48} \\
        
        HGCN
        & 95.16$\pm${\scriptsize 2.50} & 97.66$\pm${\scriptsize 0.15} & 93.88$\pm${\scriptsize 1.09} & 96.86$\pm${\scriptsize 0.11} & 94.46$\pm${\scriptsize 0.61}
        & 90.51$\pm${\scriptsize 3.45} & 90.92$\pm${\scriptsize 2.04} & 78.34$\pm${\scriptsize 0.95} & 78.21$\pm${\scriptsize 1.47} & 63.29$\pm${\scriptsize 2.07} \\
        
        \hline
        \textbf{Abs. Gain}
        & \textbf{+4.74} & \textbf{+1.57} & \textbf{+0.38} & \textbf{+0.31} & \textbf{-1.56}
        & \textbf{+1.01} & \textbf{-1.54} & \textbf{-1.99} & \textbf{-0.16} & \textbf{-1.87} \\
        \hline
        \end{tabular}
        }
      \end{sc}
    \end{small}
  \end{center}
  \vskip -0.1in
\end{table*}

\section{Real-World Tasks: Node Classification and Link Prediction}
\label{sec:realworld_tasks}

We assess whether HGNNs outperform Euclidean models through a unified evaluation on the standard HGNN benchmarks, covering node classification (NC) and link prediction (LP) on Disease, Airport, Cora, Pubmed, and Citeseer. We report mean $\pm$ std. over $10$ splits and use a controlled protocol with matched hyperparameter-search budgets, validation-based model selection, and comparable decoders across geometries (Appendix~\ref{app:tuning}). This comprehensive re-evaluation avoids copied literature numbers, fixed-capacity assumptions, single-split tuning artifacts, and undertuned Euclidean baselines, enabling a direct assessment of representation geometry. We further recompute Gromov $\delta$-hyperbolicity exactly with SageMath \citep{stein2005sage}, rather than by approximation (Appendix~\ref{app:gromov}).


\subsection{Node Classification}
\label{sec:nc_exp}

We evaluate two complementary aspects on standard datasets: predictive performance (Macro F1, Accuracy) and the geometry of learned representations. After end-to-end classification training, we quantify how faithfully node embeddings preserve the input graph metric using stress loss (Section~\ref{sec:pairs_distortion_pdp}). Experimental details are in Appendix~\ref{app:expset}. Compared to prior work, we also fix data-splitting bugs in Disease (marked by *) and data leakage in Airport.

\paragraph{Results.}
\label{sec:node_classification_real}


The results are reported in the right half of Table~\ref{tab:lp_nc_combined}. 
Across all benchmarks, including the most tree-like ones (Disease and Airport), HGNNs do not provide significant improvements over Euclidean counterparts, as also reflected by the Absolute Gain values.\nips{This behavior can be understood through the lens of geometry–task alignment. In NC, the label space is discrete and permutation-invariant, and therefore does not induce a continuous notion of distance between classes. As a result, training objectives such as cross-entropy primarily encourage class separability, often leading to representation collapse \cite{papyan2020prevalence}, rather than preserving the input graph metric. This is reflected in Fig.~\ref{fig:nc_real_stress_loss_norm}, where none of the methods recover shortest-path distances, regardless of the hyperbolicity of the input graph.
Importantly, this does not imply that NC is inherently misaligned with geometry. Rather, alignment in NC is more contingent on the structure of the supervision signal: when labels encode a meaningful metric structure (e.g., hierarchies or ordered relationships), geometry-aligned regimes can emerge (an example is provided in~\ref{app: node classification} in Table~\ref{tab:node_classification_synthetic}). However, such conditions are typically absent in standard benchmarks, making alignment weaker and less systematic than in LP.}

\begin{takeawaybox}
\nips{Input hyperbolicity alone is not sufficient for geometry–task alignment in NC. When the supervision signal lacks structured relationships between labels, training objectives prioritize class separation over distance preservation, limiting the advantages of hyperbolic representations.}
\end{takeawaybox}



\subsection{Link Prediction}
\label{sec:link_prediction_real}
For LP we consider the usual predictive performance metrics (ROC AUC, AP) and also the stress loss of the final embedding.
As in prior work, all models use a distance-based decoder on the final embeddings to focus on the effects of latent geometry. \nips{Additional decoder ablations are reported in Appendix~\ref{app:link prediction}.}
Motivated by \cite{katsman2025shedding}, which highlights that many commonly used LP benchmarks can be solved using node features alone via a simple MLP, we additionally consider controlled synthetic settings to disentangle feature-driven performance from structural advantages. 
Specifically, we progressively corrupt node features in highly $\delta$-hyperbolic datasets (Disease and Airport) by injecting Gaussian noise, thereby degrading feature information. 
In Appendix~\ref{app:link prediction}, we also evaluate on the synthetic $\textit{Tree1111}_{\gamma}$ dataset of \cite{katsman2025shedding}.
See Appendix~\ref{app:expset} for details on the experimental setup.

\paragraph{Results.}
\begin{figure}[t]
  \centering
  \captionsetup[subfigure]{aboveskip=2pt,belowskip=3pt}
  \begin{subfigure}{0.24\columnwidth}
      \includegraphics[width=\linewidth]{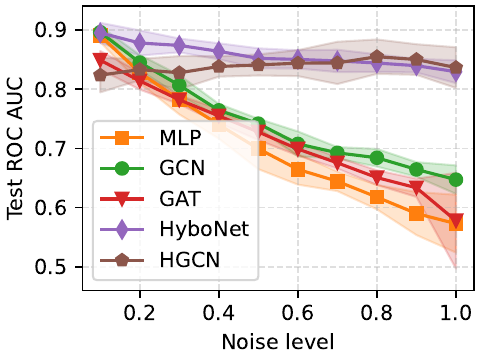}
      \caption{Disease ROC AUC}
      \label{fig:lp_noisy_Disease_roc}
  \end{subfigure}
  \begin{subfigure}{0.24\columnwidth}
      \includegraphics[width=\linewidth]{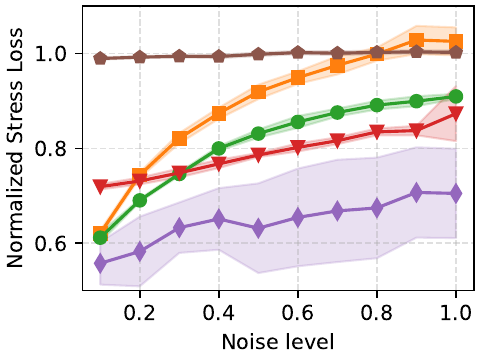}
      \caption{Disease Stress}
      \label{fig:lp_noisy_Disease_stress_loss_norm}
  \end{subfigure}
  \begin{subfigure}{0.24\columnwidth}
      \includegraphics[width=\linewidth]{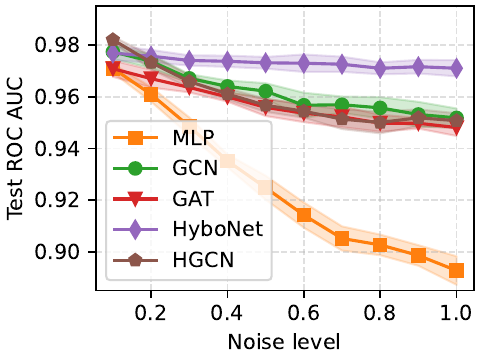}
      \caption{Airport ROC AUC}
      \label{fig:lp_noisy_airport_roc}
  \end{subfigure}
  \begin{subfigure}{0.24\columnwidth}
      \includegraphics[width=\linewidth]{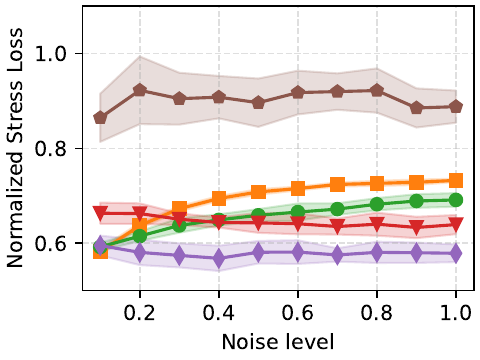}
      \caption{Airport Stress}
      \label{fig:lp_noisy_airport_stress_loss_norm}
  \end{subfigure}
  \caption{\textbf{LP} ROC AUC ($\uparrow$) and normalized Stress Loss ($\downarrow$) under feature noise on Disease and Airport. HyboNet achieves strong predictive performance while better preserving geometry.}
  \label{fig:real_noisy}
  \vspace{-1.0em}
\end{figure}

The LP results on real-world datasets are shown in the left half of Table~\ref{tab:lp_nc_combined}. HGNNs outperform Euclidean counterparts on the most tree-like datasets, Disease and Airport, whereas all GNNs obtain similar ROC AUC on the other benchmarks; the absolute gain is thus substantial on low-$\delta$ graphs and negligible otherwise. This suggests that LP is a \emph{geometry-aligned} task: edge prediction relies on local proximity and benefits from latent spaces that preserve tree-like graph structure. A particularly noteworthy observation is that concurrent work in hyperbolic benchmarking (Fig. 5 of \citet{aliakbarisani2025hypbench}) has also indirectly observed this phenomenon, for which we provide a concrete explanation for the first time. Stress analysis corroborates this interpretation: message-passing models reduce distortion relative to feature-only baselines, with HyboNet consistently achieving the lowest stress on Disease and Airport. Although MLP can match or exceed GNNs on these datasets, likely because standard LP splits fragment tree-like graphs and make strong node features sufficient \citep{bechler-speicher2024graph}, corrupting features with Gaussian noise reveals the structural advantage. As feature informativeness decreases (Fig.~\ref{fig:real_noisy}), GNNs outperform MLP, while hyperbolic models, especially HyboNet, maintain higher ROC AUC and lower distortion. The stress analysis shows the same trend (Fig.~\ref{fig:lp_noisy_Disease_stress_loss_norm}/\ref{fig:lp_noisy_airport_stress_loss_norm}): noise worsens metric preservation for all methods, but HyboNet remains the most stable and lowest-distortion model. An SNR-based separability score further explains the different MLP behavior on Disease and Airport, showing that Airport retains substantially more informative feature structure under noise (Appendix~\ref{app:LP}).

\begin{takeawaybox}
\nips{LP is inherently geometry-aligned, since it relies on relative node proximity. Consequently, graph $\delta$-hyperbolicity can act as a practical proxy for identifying when HGNNs are effective.}
\end{takeawaybox}

\section{Conclusion}
\label{sec:conclusions}
We revisit when hyperbolic inductive biases benefit GNNs and show that input hyperbolicity alone is insufficient: gains arise when the task is aligned with the graph metric. HGNNs are effective for distance-based LP on tree-like graphs, where geometry and task naturally match, but do not consistently improve standard NC benchmarks, where class separability often matters more than geometric faithfulness. We also identify node regression as a promising but underexplored setting, currently limited by the lack of suitable hyperbolic benchmarks. Overall, our results provide practical guidance and position geometry--task alignment as a principled criterion for model selection.

\paragraph{Limitations and Future Work.}
Our theory currently covers geometry--task alignment for node regression with linear decoders; extending it to distance-based decoders, especially for LP, is left to future work. It also does not yet yield a quantitative heuristic for selecting hyperbolic models on arbitrary datasets, which we aim to develop. Finally, existing benchmarks remain limited: Disease and Airport are partly saturated, and node-regression benchmarks with clear hyperbolic structure are missing. We therefore rely on synthetic datasets to isolate geometric effects under controlled conditions, while broader real-world validation will require the introduction of new benchmarks.

\bibliographystyle{unsrtnat}
\bibliography{references}

@inproceedings{perozzi2014deepwalk,
  title={Deepwalk: Online learning of social representations},
  author={Perozzi, Bryan and Al-Rfou, Rami and Skiena, Steven},
  booktitle={Proceedings of the 20th ACM SIGKDD international conference on Knowledge discovery and data mining},
  pages={701--710},
  year={2014}
}

@inproceedings{grover2016node2vec,
  title={node2vec: Scalable feature learning for networks},
  author={Grover, Aditya and Leskovec, Jure},
  booktitle={Proceedings of the 22nd ACM SIGKDD international conference on Knowledge discovery and data mining},
  pages={855--864},
  year={2016}
}

@inproceedings{sarkar2011low,
  title={Low distortion delaunay embedding of trees in hyperbolic plane},
  author={Sarkar, Rik},
  booktitle={International symposium on graph drawing},
  pages={355--366},
  year={2011},
  organization={Springer}
}

@article{nickel2017poincare,
  title={Poincar{\'e} embeddings for learning hierarchical representations},
  author={Nickel, Maximillian and Kiela, Douwe},
  journal={Advances in neural information processing systems},
  volume={30},
  year={2017}
}

@inproceedings{kipf2017semi,
  title={Semi-Supervised Classification with Graph Convolutional Networks},
  author={Kipf, Thomas N and Welling, Max},
  booktitle={International Conference on Learning Representations},
  year={2017}
}

@article{scarselli2008graph,
  title={The graph neural network model},
  author={Scarselli, Franco and Gori, Marco and Tsoi, Ah Chung and Hagenbuchner, Markus and Monfardini, Gabriele},
  journal={IEEE transactions on neural networks},
  volume={20},
  number={1},
  pages={61--80},
  year={2008},
  publisher={IEEE}
}

@article{liu2019hyperbolic,
  title={Hyperbolic graph neural networks},
  author={Liu, Qi and Nickel, Maximilian and Kiela, Douwe},
  journal={Advances in neural information processing systems},
  volume={32},
  year={2019}
}

@article{chami2019hyperbolic,
  title={Hyperbolic graph convolutional neural networks},
  author={Chami, Ines and Ying, Zhitao and R{\'e}, Christopher and Leskovec, Jure},
  journal={Advances in neural information processing systems},
  volume={32},
  year={2019}
}

@inproceedings{li2024hyperbolic,
  title={Hyperbolic graph neural network for temporal knowledge graph completion},
  author={Li, Yancong and Zhang, Xiaoming and Cui, Ying and Ma, Shuai},
  booktitle={Proceedings of the 2024 Joint International Conference on Computational Linguistics, Language Resources and Evaluation (LREC-COLING 2024)},
  pages={8474--8486},
  year={2024}
}

@book{lee2006riemannian,
  title={Riemannian manifolds: an introduction to curvature},
  author={Lee, John M},
  volume={176},
  year={2006},
  publisher={Springer Science \& Business Media}
}

@inproceedings{hamilton2017graphsage,
  title={Inductive Representation Learning on Large Graphs},
  author={Hamilton, William L. and Ying, Rex and Leskovec, Jure},
  booktitle={Advances in Neural Information Processing Systems (NeurIPS)},
  year={2017}
}

@inproceedings{velickovic2018gat,
  title={Graph Attention Networks},
  author={Veli{\v{c}}kovi{\'c}, Petar and Cucurull, Guillem and Casanova, Arantxa and Romero, Adriana and Li{\`o}, Pietro and Bengio, Yoshua},
  booktitle={International Conference on Learning Representations (ICLR)},
  year={2018}
}

@inproceedings{nickel2018learning,
  title={Learning continuous hierarchies in the lorentz model of hyperbolic geometry},
  author={Nickel, Maximillian and Kiela, Douwe},
  booktitle={International conference on machine learning},
  pages={3779--3788},
  year={2018},
  organization={PMLR}
}

@inproceedings{wang2019hyperbolic,
  title={Hyperbolic heterogeneous information network embedding},
  author={Wang, Xiao and Zhang, Yiding and Shi, Chuan},
  booktitle={Proceedings of the AAAI conference on artificial intelligence},
  volume={33},
  pages={5337--5344},
  year={2019}
}

@inproceedings{pan2021hyperbolic,
  title={Hyperbolic hierarchy-aware knowledge graph embedding for link prediction},
  author={Pan, Zhe and Wang, Peng},
  booktitle={Findings of the Association for Computational Linguistics: EMNLP 2021},
  pages={2941--2948},
  year={2021}
}

@inproceedings{chami2020low,
  title={Low-Dimensional Hyperbolic Knowledge Graph Embeddings},
  author={Chami, Ines and Wolf, Adva and Juan, Da-Cheng and Sala, Frederic and Ravi, Sujith and R{\'e}, Christopher},
  booktitle={Proceedings of the 58th Annual Meeting of the Association for Computational Linguistics},
  pages={6901--6914},
  year={2020}
}

@article{kitsak2020link,
  title={Link prediction with hyperbolic geometry},
  author={Kitsak, Maksim and Voitalov, Ivan and Krioukov, Dmitri},
  journal={Physical Review Research},
  volume={2},
  number={4},
  pages={043113},
  year={2020},
  publisher={APS}
}

@article{naddeowe,
  title={Do We Need Curved Spaces? A Critical Look at Hyperbolic Graph Learning in Graph Classification},
  author={Naddeo, Dionisia and Azevedo, Tiago and Toschi, Nicola},
  year={2024}
}

@article{yang2023hyperbolic,
  title={Hyperbolic graph learning for social recommendation},
  author={Yang, Yonghui and Wu, Le and Zhang, Kun and Hong, Richang and Zhou, Hailin and Zhang, Zhiqiang and Zhou, Jun and Wang, Meng},
  journal={IEEE Transactions on Knowledge and Data Engineering},
  volume={36},
  number={12},
  pages={8488--8501},
  year={2023},
  publisher={IEEE}
}

@article{matsumoto2021novel,
  title={Novel metric for hyperbolic phylogenetic tree embeddings},
  author={Matsumoto, Hirotaka and Mimori, Takahiro and Fukunaga, Tsukasa},
  journal={Biology Methods and Protocols},
  volume={6},
  number={1},
  pages={bpab006},
  year={2021},
  publisher={Oxford University Press}
}

@inproceedings{kolyvakis2019hyperkg,
  title={Hyperbolic knowledge graph embeddings for knowledge base completion},
  author={Kolyvakis, Prodromos and Kalousis, Alexandros and Kiritsis, Dimitris},
  booktitle={European Semantic Web Conference},
  pages={199--214},
  year={2020},
  organization={Springer}
}

@article{yang2022hyperbolic,
  title={Hyperbolic graph neural networks: A review of methods and applications},
  author={Yang, Menglin and Zhou, Min and Li, Zhihao and Liu, Jiahong and Pan, Lujia and Xiong, Hui and King, Irwin},
  journal={arXiv preprint arXiv:2202.13852},
  year={2022}
}

@inproceedings{chen2022fully,
  title={Fully Hyperbolic Neural Networks},
  author={Chen, Weize and Han, Xu and Lin, Yankai and Zhao, Hexu and Liu, Zhiyuan and Li, Peng and Sun, Maosong and Zhou, Jie},
  booktitle={Proceedings of the 60th Annual Meeting of the Association for Computational Linguistics (Volume 1: Long Papers)},
  pages={5672--5686},
  year={2022}
}

@book{benedetti1992lectures,
  title={Lectures on hyperbolic geometry},
  author={Benedetti, Riccardo and Petronio, Carlo},
  year={1992},
  publisher={Springer Science \& Business Media}
}

@book{lee2018introduction,
  title={Introduction to Riemannian manifolds},
  author={Lee, John M},
  volume={2},
  year={2018},
  publisher={Springer}
}

@incollection{ratcliffe2019hyperbolic,
  title={Hyperbolic n-manifolds},
  author={Ratcliffe, John G},
  booktitle={Foundations of Hyperbolic Manifolds},
  pages={506--596},
  year={2019},
  publisher={Springer}
}

@article{martelli2016introduction,
  title={An introduction to geometric topology},
  author={Martelli, Bruno},
  journal={arXiv preprint arXiv:1610.02592},
  year={2016}
}

@article{canzar2024metric,
  title={Metric multidimensional scaling for large single-cell datasets using neural networks},
  author={Canzar, Stefan and Do, Van Hoan and Jeli{\'c}, Slobodan and Laue, S{\"o}ren and Matijevi{\'c}, Domagoj and Prusina, Tomislav},
  journal={Algorithms for molecular biology},
  volume={19},
  number={1},
  pages={21},
  year={2024},
  publisher={Springer}
}

@article{
katsman2025shedding,
title={Shedding Light on Problems with Hyperbolic Graph Learning},
author={Isay Katsman and Anna Gilbert},
journal={Transactions on Machine Learning Research},
issn={2835-8856},
year={2025},
note={Reproducibility Certification}
}

@inproceedings{wen2024hyperbolic,
  title={Hyperbolic graph diffusion model},
  author={Wen, Lingfeng and Tang, Xuan and Ouyang, Mingjie and Shen, Xiangxiang and Yang, Jian and Zhu, Daxin and Chen, Mingsong and Wei, Xian},
  booktitle={Proceedings of the AAAI Conference on Artificial Intelligence},
  volume={38},
  pages={15823--15831},
  year={2024}
}

@article{sen2008collective,
author = {Sen, Prithviraj and Namata, Galileo and Bilgic, Mustafa and Getoor, Lise and Gallagher, Brian and Eliassi‐Rad, Tina},
title = {Collective Classification in Network Data},
year = {2008},
issue_date = {Fall 2008},
publisher = {John Wiley \& Sons, Inc.},
address = {USA},
volume = {29},
number = {3},
issn = {0738-4602},
doi = {10.1609/aimag.v29i3.2157},
journal = {AI Mag.},
pages = {93–106},
numpages = {14}
}

@inproceedings{bechler-speicher2024graph,
author = {Bechler-Speicher, Maya and Amos, Ido and Gilad-Bachrach, Ran and Globerson, Amir},
title = {Graph neural networks use graphs when they shouldn't},
year = {2024},
publisher = {JMLR.org},
booktitle = {Proceedings of the 41st International Conference on Machine Learning},
articleno = {131},
numpages = {21},
location = {Vienna, Austria},
series = {ICML'24}
}

@inproceedings{zhang2021lorentzian,
  title={Lorentzian graph convolutional networks},
  author={Zhang, Yiding and Wang, Xiao and Shi, Chuan and Liu, Nian and Song, Guojie},
  booktitle={Proceedings of the web conference 2021},
  pages={1249--1261},
  year={2021}
}

@article{stein2005sage, author = {Stein, William and Joyner, David}, title = {SAGE: system for algebra and geometry experimentation}, year = {2005}, issue_date = {June 2005}, publisher = {Association for Computing Machinery}, address = {New York, NY, USA}, volume = {39}, number = {2}, issn = {0163-5824}, doi = {10.1145/1101884.1101889}, journal = {SIGSAM Bull.}, month = jun, pages = {61–64}, numpages = {4} }

@article{krioukov2010hyperbolic,
  title={Hyperbolic geometry of complex networks},
  author={Krioukov, Dmitri and Papadopoulos, Fragkiskos and Kitsak, Maksim and Vahdat, Amin and Bogun{\'a}, Mari{\'a}n},
  journal={Physical Review E—Statistical, Nonlinear, and Soft Matter Physics},
  volume={82},
  number={3},
  pages={036106},
  year={2010},
  publisher={APS}
}

@incollection{gromov1987hyperbolic,
  title={Hyperbolic groups},
  author={Gromov, Mikhael},
  booktitle={Essays in group theory},
  pages={75--263},
  year={1987},
  publisher={Springer}
}

@article{chepoi2012additive,
  title={Additive spanners and distance and routing labeling schemes for hyperbolic graphs},
  author={Chepoi, Victor and Dragan, Feodor F and Estellon, Bertrand and Habib, Michel and Vaxes, Yann and Xiang, Yang},
  journal={Algorithmica},
  volume={62},
  number={3},
  pages={713--732},
  year={2012},
  publisher={Springer}
}

@inproceedings{papadopoulos2010greedy,
  title={Greedy forwarding in dynamic scale-free networks embedded in hyperbolic metric spaces},
  author={Papadopoulos, Fragkiskos and Krioukov, Dmitri and Bogun{\'a}, Mari{\'a}n and Vahdat, Amin},
  booktitle={2010 Proceedings IEEE Infocom},
  pages={1--9},
  year={2010},
  organization={IEEE}
}

@article{sonthalia2020tree,
  title={Tree! i am no tree! i am a low dimensional hyperbolic embedding},
  author={Sonthalia, Rishi and Gilbert, Anna},
  journal={Advances in Neural Information Processing Systems},
  volume={33},
  pages={845--856},
  year={2020}
}

@inproceedings{
becigneul2018riemannian,
title={Riemannian Adaptive Optimization Methods},
author={Gary Becigneul and Octavian-Eugen Ganea},
booktitle={International Conference on Learning Representations},
year={2019}
}

@article{ganea2018hyperbolic,
  title={Hyperbolic neural networks},
  author={Ganea, Octavian and B{\'e}cigneul, Gary and Hofmann, Thomas},
  journal={Advances in neural information processing systems},
  volume={31},
  year={2018}
}

@inproceedings{
shimizu2020hyperbolic,
title={Hyperbolic Neural Networks++},
author={Ryohei Shimizu and YUSUKE Mukuta and Tatsuya Harada},
booktitle={International Conference on Learning Representations},
year={2021},
}

@article{katsman2023riemannian,
  title={Riemannian residual neural networks},
  author={Katsman, Isay and Chen, Eric and Holalkere, Sidhanth and Asch, Anna and Lou, Aaron and Lim, Ser Nam and De Sa, Christopher M},
  journal={Advances in Neural Information Processing Systems},
  volume={36},
  pages={63502--63514},
  year={2023}
}

@inproceedings{
platonov2023a,
title={A critical look at the evaluation of {GNN}s under heterophily: Are we really making progress?},
author={Oleg Platonov and Denis Kuznedelev and Michael Diskin and Artem Babenko and Liudmila Prokhorenkova},
booktitle={The Eleventh International Conference on Learning Representations },
year={2023}
}

@inproceedings{ribeiro2017struc2vec,
  title={struc2vec: Learning node representations from structural identity},
  author={Ribeiro, Leonardo FR and Saverese, Pedro HP and Figueiredo, Daniel R},
  booktitle={Proceedings of the 23rd ACM SIGKDD international conference on knowledge discovery and data mining},
  pages={385--394},
  year={2017}
}

@inproceedings{tang2009social, author = {Tang, Jie and Sun, Jimeng and Wang, Chi and Yang, Zi}, title = {Social influence analysis in large-scale networks}, year = {2009}, isbn = {9781605584959}, publisher = {Association for Computing Machinery}, address = {New York, NY, USA}, doi = {10.1145/1557019.1557108}, booktitle = {Proceedings of the 15th ACM SIGKDD International Conference on Knowledge Discovery and Data Mining}, pages = {807–816}, numpages = {10}, location = {Paris, France}, series = {KDD '09} }

@article{rozemberczki2021multi,
    author = {Rozemberczki, Benedek and Allen, Carl and Sarkar, Rik},
    title = "{Multi-Scale attributed node embedding}",
    journal = {Journal of Complex Networks},
    volume = {9},
    number = {2},
    pages = {cnab014},
    year = {2021},
    month = {05},
    issn = {2051-1329},
    doi = {10.1093/comnet/cnab014},
}

@inproceedings{pei2020geom,
title={Geom-GCN: Geometric Graph Convolutional Networks},
author={Hongbin Pei and Bingzhe Wei and Kevin Chen-Chuan Chang and Yu Lei and Bo Yang},
booktitle={International Conference on Learning Representations},
year={2020},
}

@inproceedings{mcauley2015image, author = {McAuley, Julian and Targett, Christopher and Shi, Qinfeng and van den Hengel, Anton}, title = {Image-Based Recommendations on Styles and Substitutes}, year = {2015}, isbn = {9781450336215}, publisher = {Association for Computing Machinery}, address = {New York, NY, USA}, doi = {10.1145/2766462.2767755}, booktitle = {Proceedings of the 38th International ACM SIGIR Conference on Research and Development in Information Retrieval}, pages = {43–52}, numpages = {10}, location = {Santiago, Chile}, series = {SIGIR '15} }

@article{shchur2019pitfallsgraphneuralnetwork,
  author       = {Oleksandr Shchur and
                  Maximilian Mumme and
                  Aleksandar Bojchevski and
                  Stephan G{\"{u}}nnemann},
  title        = {Pitfalls of Graph Neural Network Evaluation},
  journal      = {CoRR},
  volume       = {abs/1811.05868},
  year         = {2018},
  eprinttype    = {arXiv},
  eprint       = {1811.05868},
  bibsource    = {dblp computer science bibliography, https://dblp.org}
}

@article{hu2020ogb,
  title={Open graph benchmark: Datasets for machine learning on graphs},
  author={Hu, Weihua and Fey, Matthias and Zitnik, Marinka and Dong, Yuxiao and Ren, Hongyu and Liu, Bowen and Catasta, Michele and Leskovec, Jure},
  journal={Advances in neural information processing systems},
  volume={33},
  pages={22118--22133},
  year={2020}
}

@inproceedings{rozemberczki2020characteristic,
  title={Characteristic functions on graphs: Birds of a feather, from statistical descriptors to parametric models},
  author={Rozemberczki, Benedek and Sarkar, Rik},
  booktitle={Proceedings of the 29th ACM international conference on information \& knowledge management},
  pages={1325--1334},
  year={2020}
}

@article{greene2015understanding,
  title={Understanding multicellular function and disease with human tissue-specific networks},
  author={Greene, Casey S and Krishnan, Arjun and Wong, Aaron K and Ricciotti, Emanuela and Zelaya, Rene A and Himmelstein, Daniel S and Zhang, Ran and Hartmann, Boris M and Zaslavsky, Elena and Sealfon, Stuart C and others},
  journal={Nature genetics},
  volume={47},
  number={6},
  pages={569--576},
  year={2015},
  publisher={Nature Publishing Group US New York}
}

@article{baker2024hyperbolic,
  title={Hyperbolic graph embedding of MEG brain networks to study brain alterations in individuals with subjective cognitive decline},
  author={Baker, Cole and Su{\'a}rez-M{\'e}ndez, Isabel and Smith, Grace and Marsh, Elisabeth B and Funke, Michael and Mosher, John C and Maest{\'u}, Fernando and Xu, Mengjia and Pantazis, Dimitrios},
  journal={IEEE journal of biomedical and health informatics},
  volume={28},
  number={12},
  pages={7357--7368},
  year={2024},
  publisher={IEEE}
}

@inproceedings{kingma2014adam,
  author    = {Diederik P. Kingma and
               Jimmy Ba},
  title     = {Adam: {A} Method for Stochastic Optimization},
  booktitle = {3rd International Conference on Learning Representations, {ICLR} 2015,
               San Diego, CA, USA, May 7-9, 2015, Conference Track Proceedings},
  year      = {2015},
  bibsource = {dblp computer science bibliography, https://dblp.org}
}

@article{khoshraftar2024survey,
  title={A survey on graph representation learning methods},
  author={Khoshraftar, Shima and An, Aijun},
  journal={ACM Transactions on Intelligent Systems and Technology},
  volume={15},
  number={1},
  pages={1--55},
  year={2024},
  publisher={ACM New York, NY}
}

@incollection{bonk2011embeddings,
  title={Embeddings of Gromov hyperbolic spaces},
  author={Bonk, Mario and Schramm, Oded},
  booktitle={Selected Works of Oded Schramm},
  pages={243--284},
  year={2011},
  publisher={Springer}
}

@phdthesis{cohen2012exact,
  title={Exact and approximate algorithms for computing the hyperbolicity of large-scale graphs},
  author={Cohen, Nathann and Coudert, David and Lancin, Aur{\'e}lien},
  year={2012},
  school={INRIA}
}

@inproceedings{coudert2021hyperbolicity,
  title={Hyperbolicity Computation through Dominating Sets},
  author={Coudert, David and Nusser, Andr{\'e} and Viennot, Laurent},
  booktitle={ALENEX 2022-SIAM Symposium on Algorithm Engineering and Experiments},
  pages={78--90},
  year={2022}
}

@article{nash1956imbedding,
  title={The imbedding problem for Riemannian manifolds},
  author={Nash, John},
  journal={Annals of mathematics},
  volume={63},
  number={1},
  pages={20--63},
  year={1956},
  publisher={JSTOR}
}

\newpage
\appendix
\onecolumn

\section{Related Works}\label{app:relwork}
Trees constitute a fundamental class of data structures. From a metric perspective, it is well established that trees are prototypical examples of negatively curved spaces. In his seminal work on hyperbolic groups, Gromov introduced the notion of $\delta$-hyperbolicity \cite{gromov1987hyperbolic}, formalizing hyperbolicity as a coarse, quasi-isometry–invariant property that depends only on large-scale geometry rather than local details or the choice of generating set (See Appendix \ref{app:gromov} for computational details). In this sense, trees are $0$-hyperbolic and share with hyperbolic spaces key global properties, including exponential volume growth.

Building on this perspective, \citet{chepoi2012additive} showed that any $n$-node $\delta$-hyperbolic graph admits a tree approximation with additive error $O(\delta \log n)$, demonstrating that tree metrics provide faithful approximations of distances in hyperbolic graphs. In the special case of trees themselves, \citet{sarkar2011low} further strengthened this connection by providing a constructive embedding result: any weighted tree can be embedded into the two-dimensional hyperbolic plane with arbitrarily small distortion, preserving both topology and metric structure.

Real-world networks are rarely trees; however, many are approximately tree-like at large scales. This includes networks with scale-free degree distributions, where the probability $P(k)$ that a node has degree $k$ follows a power law $P(k)\sim k^{-\gamma}$, with exponent $\gamma$ typically between $2$ and $3$. Such networks often display strong heterogeneity, high clustering, and efficient navigability.

\citet{krioukov2010hyperbolic} proposed a geometric framework in which these  structural features arise naturally from an underlying hyperbolic geometry. In this model, degree heterogeneity is directly controlled by the negative curvature of the space, while clustering emerges as a consequence of its metric properties. Within the same framework, Papadopoulos et al. \citet{papadopoulos2010greedy} showed that scale-free network topologies naturally emerge from hyperbolic spaces and support efficient greedy routing based solely on geometric distances. Together, these results suggest that hyperbolic geometry provides a unifying explanation for the large-scale, tree-like organization observed in many real-world networks.

Rather than embedding a known tree, a complementary line of work aims to recover latent hierarchical structure directly from data by learning representations in hyperbolic space. Motivated by the exponential growth properties of hyperbolic geometry, these approaches leverage curvature as an inductive bias for modeling hierarchical organization. A seminal contribution in this direction was introduced by \citet{nickel2017poincare}, who proposed Poincaré embeddings for learning continuous representations of symbolic data with latent hierarchies in an unsupervised setting, demonstrating substantial improvements over Euclidean embeddings. Building on this idea, \citet{sonthalia2020tree} proposed to first infer an approximate tree structure from the data and subsequently embed it into a low-dimensional hyperbolic space. Overall, these works bridge classical geometric embedding theory and modern representation learning, positioning hyperbolic space as a powerful tool for hierarchy induction rather than explicit metric reconstruction. 

While hyperbolic space provides a powerful inductive bias for modeling hierarchical structure, learning in negatively curved spaces requires optimization techniques that go beyond standard Euclidean machinery. In particular, gradient-based learning must be generalized to Riemannian manifolds, where updates are performed along geodesics and parameters are constrained to remain on the manifold. \citet{becigneul2018riemannian} addressed this challenge by extending popular adaptive optimization methods, including Adam and AdaGrad, to the Riemannian setting, providing both practical algorithms and theoretical guarantees. In parallel, the choice of the specific hyperbolic model was shown to have significant implications for numerical stability and computational efficiency. \citet{nickel2018learning} demonstrated that learning embeddings in the Lorentz (hyperboloid) model is substantially more efficient than in the Poincaré ball, improving optimization behavior while preserving the representational advantages of hyperbolic geometry.

Building on advances in Riemannian optimization, hyperbolic geometry has been progressively integrated into standard neural architectures, enabling the transition from static embedding methods to fully trainable models. In this setting, core neural operations such as linear transformations, non-linearities, attention mechanisms, and residual connections are reformulated to operate directly in hyperbolic space while respecting its geometric constraints. \citet{ganea2018hyperbolic} provided a foundational framework for hyperbolic neural networks by generalizing key components of deep learning to negatively curved manifolds. Subsequent work further expanded this paradigm, proposing unified formulations of neural network layers in the Poincaré ball \cite{shimizu2020hyperbolic} and extending residual architectures to general Riemannian manifolds in a principled manner \cite{katsman2023riemannian}. Collectively, these contributions established hyperbolic neural networks as a viable and scalable modeling paradigm for learning hierarchical representations in an end-to-end fashion.

An active line of research has focused on extending GNNs to hyperbolic spaces in order to combine message passing with the representational advantages of negative curvature. Two seminal works published in parallel in 2019 introduced the first hyperbolic GNN architectures \cite{liu2019hyperbolic,chami2019hyperbolic}. These approaches generalize Euclidean message passing by mapping node representations between hyperbolic space and local Euclidean tangent spaces via logarithmic and exponential maps. In particular, \citet{chami2019hyperbolic} proposed an hyperbolic graph convolutional network with learnable curvature and attention mechanisms defined in hyperbolic space. Subsequent work sought to reduce reliance on tangent-space operations in order to better preserve geometric structure. \citet{zhang2021lorentzian} introduced a Lorentzian graph convolutional network that performs neighborhood aggregation directly in the hyperbolic manifold by computing centroids under the squared Lorentzian distance. Along similar lines, \citet{chen2022fully} proposed a fully hyperbolic neural network operating entirely in the Lorentz model, demonstrating strong empirical performance on a range of benchmarks.

Despite the growing body of work on HGNNs, the benefits of hyperbolic message passing in supervised settings remain under active debate. Recently, \citet{katsman2025shedding} provided a systematic reassessment of hyperbolic graph learning, highlighting several methodological issues related to baseline selection, modeling assumptions, and evaluation protocols. Notably, they showed that when Euclidean models with comparable capacity are carefully trained under the same experimental conditions, they often match or outperform existing hyperbolic graph representation learning methods, even on datasets previously characterized as highly hyperbolic according to Gromov hyperbolicity, including tree-structured graphs. Our work extends this reassessment by considering the interplay between representation geometry and task supervision, providing missing details on how the learning process might or might not require a geometric inductive bias that preserves the input metric structure. \\

\section{Extended Background}
\label{app:extended background}
\subsection{Hyperbolic Geometry}
\label{appendix:geometry}
In this section we provide more details on the exponential growth of (a model of) hyperbolic space and also the architectural mechanisms of HGNNs. For a more rigorous and detailed treatment of hyperbolic geometry and hyperbolic manifolds, we refer the reader to \cite{benedetti1992lectures,martelli2016introduction,ratcliffe2019hyperbolic}, while \cite{lee2018introduction, lee2006riemannian} are fantastic resources for the theory of Riemannian manifolds. For a general overview of hyperbolic graph learning and applications, please consult the review of \citet{yang2022hyperbolic}.

\paragraph{The Hyperboloid Model.}
To illustrate exponential volume growth, we consider (w.l.o.g) the hyperbolic space $\mathcal{L}^d_c$ with unit negative curvature which we will simply denote by $\mathcal{L}^d$.
\begin{definition}[Lorentzian scalar product]
Let $u = (u_0, u_1, \ldots, u_d)$ and $v = (v_0, v_1, \ldots, v_d)$ be vectors in $\mathbb{R}^{d+1}$. 
The \emph{Lorentzian scalar product} on $\mathbb{R}^{d+1}$ is defined as
\[
\langle u,v \rangle_{1,d} = -u_0 v_0 + \sum_{i=1}^d u_i v_i.
\]
\end{definition}
This bilinear form induces a pseudo-metric of signature $(1,d)$ on $\mathbb{R}^{d+1}$. We can now define an inner product space by equipping $\mathbb{R}^{m+1}$  with the Lorentzian scalar product, called the $(d+1)$-dimensional Minkowski space
\[
\mathbb{R}^{1,d} = (\mathbb{R}^{d+1}, \langle \cdot,\cdot \rangle_{1,d}).
\]

For any $x \in \mathbb{R}^{1,d}$ we define its \emph{orthogonal complement} as
\[
x^\perp = \{ y \in \mathbb{R}^{1,d} : \langle x,y \rangle_{1,d} = 0 \}.
\]

We can now define the hyperboloid as the set of all points of norm $-1$ in the $(d+1)$-dimensional Minkowski space. This set is a two-sheeted hyperboloid, and by restricting to the upper sheet to get the Lorentz model
\[
\mathcal{L}^d = \bigl\{ x = (x_0, \ldots, x_d) \in \mathbb{R}^{1,d} : 
-x_0^2 + x_1^2 + \cdots + x_d^2 = -1,\; x_0 > 0 \bigr\}.
\]

It is possible to show that $\mathcal{L}^d$ is a differentiable manifold and, for every $x \in \mathcal{L}^d$, the tangent space $T_x \mathcal{L}^d$ coincides with $x^\perp$. For intuition about this fact, consider that the hyperboloid is an anti-sphere (a sphere of imaginary radius). 

To define the metric on this manifold, we can use the notion of induced metric, also known as the first fundamental form. This simply consists in defining the Riemannian metric as the restriction of the ambient inner product to the tangent space: 
\[
g_{\bm{x}} : T_{\bm{x}}\mathcal{L}^d \times T_{\bm{x}}\mathcal{L}^d \to \mathbb{R}  \:\text{s.t.} \: g_{\bm{x}}(u,v) = \langle u,v \rangle_{1,d}, \:  u,v \in T_{\bm{x}}\mathcal{L}^d
\]

The induced metric provides the natural Riemannian metric which allows us to compute lengths, angles, and volumes in hyperbolic space. From this point forward, when we talk about $\mathcal{L}^d$, we actually refer to its Riemannian form, i.e., the tuple $(\mathcal{L}^d, g)$, where $g$ is the induced metric described above. This is the \emph{hyperboloid model} of hyperbolic space.

\paragraph{Exponential Growth of Volume.}
We compute the volume of a geodesic ball and show that it grows exponentially with the radius. 
To this end, we consider the parametrization
\[
x(r,w) = (\cosh r, \sinh r \, w),
\]
where $r \geq 0$ is the hyperbolic radius and $w \in S^{d-1} \subset \mathbb{R}^d$ provides the angular direction. It is immediate to check that the image of this parametrization lies in $\mathcal{L}^d$. For a clear exponsition, let $ \theta = \{\theta_1, \ldots, \theta_{d-1}\} \in S^{d-1}$ be the local coordinates on the sphere, which implies we have a coordinate function $w = w(\theta)$. To compute the induced metric, we can just compute the metric components by considering the tangent basis in above coordinates
\[
\partial_r x = (\sinh r, \cosh r \, w), 
\qquad
\partial_\theta x = (0, \sinh r \, \partial_\theta w).
\]
The coefficients of the first fundamental form in the coordinates $(r,w)$ are given, in Gauss notation, by
\[
E = \langle \partial_r x, \partial_r x \rangle_{1,d}, \qquad
F = \langle \partial_r x, \partial_\theta x \rangle_{1,d}, \qquad
G = \langle \partial_\theta x, \partial_\theta x \rangle_{1,d}.
\]

A direct computation yields
\[
E = -\sinh^2 r + \cosh^2 r = 1,
\]
\[
F = 0,
\]
\[
G = \sinh^2 r \: \|\partial_\theta w\|^2.
\]
Note that in the angular component of the metric $G$, we have the induced metric on the unit sphere $\|\partial_\theta w\|^2 = g_{S^{d-1}}$. We can now write the metric in matrix notation as
\[
g = \begin{pmatrix}
1 & 0 \\
0 & \sinh^2(r) \, g_{S^{d-1}}
\end{pmatrix},
\]

To obtain the Riemannian volume element, we simply scale the typical volume form by the factors induced by the metric, i.e., it's determinant
\[
\det g = \sinh^{2{d-2}}(r) \:  \det(g_{S^{d-1}}),
\]
and therefore in the coordinates $(r,\theta)$ the Riemannian volume element becomes
\[
\sqrt{\det g} = \sinh^{d-1}(r)\,\sqrt{\det g_{S^{d-1}}}.
\]
\[
dV = \sqrt{\det g}\, dr\, d\theta_1 \cdots d\theta_{d-1}.
\]
Since
\[
\sqrt{\det g} = \sinh^{d-1}(r)\,\sqrt{\det g_{S^{d-1}}},
\]
and the angular part can be written as
\[
\sqrt{\det g_{S^{d-1}}}\, d\theta_1 \cdots d\theta_{d-1} = d\Omega_{d-1},
\]
i.e., the standard volume element of the unit sphere, we obtain
\[
dV = \sinh^{d-1}(r)\, dr\, d\Omega_{d-1}.
\]
Integrating the volume element over a geodesic ball of radius $R$ from the origin we obtain
\[
\mathrm{Vol}(B_R) = \int_{B_R} dV 
= \int_{S^{d-1}} \int_0^R \sinh^{d-1}(r)\, dr\, d\Omega_{d-1}.
\]
Since the angular integral yields the area of the unit sphere $S^{d-1}$,
denoted by $\omega_{d-1} = \mathrm{Area}(S^{d-1})$, we arrive at
\[
\mathrm{Vol}(B_R) = \omega_{d-1} \int_0^R \sinh^{d-1}(r)\, dr.
\]
As $r \to \infty$, by the exponential definition of $\sinh$ we have the asymptotic behavior
\[
\sinh r \sim \tfrac{1}{2} e^r,
\]
and consequently
\[
\sinh^{d-1}(r) \sim 2^{-(d-1)} e^{(d-1)r},
\]
which allows us to conclude that
\[
\mathrm{Vol}(B_R) \sim \frac{\omega_{d-1}}{(d-1)\,2^{d-1}} e^{(d-1)R}
\]

Therefore the volume of geodesic balls in $\mathcal{L}^d$ grows exponentially with rate $(d-1)$ in the radius $R$.

\subsection{Hyperbolic Graph Learning}
\label{app:hyperbolic_graph_learning}
In this section, we review the GNN message-passing mechanism and provide an overview of standard hyperbolic GNN frameworks.
\paragraph{Graph Neural Networks.}
GNNs learn node representations through iterative neighborhood aggregation, providing an inductive alternative to transductive graph embeddings. Given initial node features $\bm{x}_v^{(0)} \in \mathbb{R}^{ d'}$, representations are updated layer by layer via message passing:
\begin{align}
\bm{m}_v^{(\ell)} &= \textsc{AGG}^{(\ell)}\!\left(\{\bm{x}_u^{(\ell-1)} : u \in \mathcal{N}(v)\}\right), \\
\bm{x}_v^{(\ell)} &= \textsc{UPD}^{(\ell)}\!\left(\bm{x}_v^{(\ell-1)}, \bm{m}_v^{(\ell)}\right),
\end{align}
where $\mathcal{N}(v)$ denotes the neighbors of node $v$, and $\textsc{AGG}$ is a permutation-invariant function (e.g., sum, mean, or max). This formulation encompasses a broad class of Euclidean GNN architectures, including GCNs and GATs, which differ in their specific choices of aggregation and update functions.

\paragraph{HGNNs.}
Hyperbolic Graph Neural Networks (HGNNs) extend standard message passing to non-Euclidean geometries by learning node representations on hyperbolic manifolds. In this setting, node representations at layer $\ell$ are denoted by $\bm{z}_v^{(\ell)} \in \mathcal{H}^d_c$, where $\mathcal{H}^d_c$ is a model of $d$-dimensional hyperbolic space with curvature $-c$ (e.g.,$\mathcal{B}^d_c$ or $\mathcal{L}^d_c$). The goal of HGNNs is to perform aggregation and transformation operations that are consistent with the underlying hyperbolic geometry. Existing approaches can be broadly grouped into two design families.

\emph{Tangent-space HGNNs} perform message passing in Euclidean tangent spaces. Node representations are first mapped from the manifold to a tangent space via a logarithmic map, updated using standard GNN operations, and then mapped back to hyperbolic space via an exponential map. A representative example is HGCN \cite{chami2019hyperbolic}, which operates in the tangent space at a reference point, typically the origin $\bm{o}$:
\begin{align}
\bm{x}_v^{(\ell-1)} &= \log_{\bm{o}}\!\left(\bm{z}_v^{(\ell-1)}\right) \in T_{\bm{o}}\mathcal{H}^d_c, \\
\bm{x}_v^{(\ell)} &= \textsc{UPD}^{(\ell)}\!\left(\bm{x}_v^{(\ell-1)}, \bm{m}_v^{(\ell)}\right), \\
\bm{z}_v^{(\ell)} &= \exp_{\bm{o}}\!\left(\bm{x}_v^{(\ell)}\right) \in \mathcal{H}^d_c,
\end{align}
where the curvature $c$ is in fact a learnable hyperparameter.

\emph{Fully HGNNs} instead aim to perform aggregation, transformation, and non-linear activation directly in hyperbolic space, avoiding explicit tangent-space projections. We consider HyboNet \cite{chen2022fully} as a representative of this class. HyboNet operates in the Lorentz model and combines linear transformation and non-linearity within a single hyperbolic layer. Given aggregated features $\bm{x}_v^{(\ell)}$, obtained via an attention-weighted center of mass in Lorentz space, the update is defined as
\begin{equation}
\bm{z}_v^{(\ell)} =
\begin{bmatrix}
\sqrt{\|\phi(\bm{W}\bm{x}_v^{(\ell)}, \bm{v})\|^2 - \frac{1}{c}} \\
\frac{\phi(\bm{W}\bm{x}_v^{(\ell)}, \bm{v})}{\|\phi(\bm{W}\bm{x}_v^{(\ell)}, \bm{v})\|}
\end{bmatrix},
\end{equation}
where $\bm{W}$ and $\bm{v}$ are learnable parameters and $\phi(\cdot)$ denotes a non-linear transformation. This formulation ensures that updated representations remain on the Lorentz hyperboloid without having to use rely on the tangent space formalism.

\section{Proofs}\label{app:proofs}
For the sake of clarity and exposition, we report here both the statements and their respective proofs. We remind the reader that in all the statements below, the representation manifolds are assumed to be Riemannian submanifolds in a Euclidean ambient space, therefore $\langle \cdot, \cdot \rangle$ and $\|\cdot\|$ correspond to the dot-product and Euclidean norm, respectively. We would like to note that this setting excludes the Lorentz model, but we can make considerations about the Poincaré model.

\paragraph{Theorem \ref{prop:model_lipschitz}.} Let $\mathcal{G} \coloneqq (\mathcal{V}, \mathcal{E})$ be the input graph, $f = g \circ h$ be the model, with $h$ an embedding as in Def. \ref{def:distortion}, and $g(\bm{z}_i = h(\bm{x}_i)| \bm{w}, b) : \mathcal{M} \to \mathbb{R} \coloneqq \langle \bm{w}, \bm{z}_i \rangle + b = \hat{y}_i$ a linear solver. Define:
\begin{gather}
    \xi(h) = \min_{\substack{i,j \in \mathcal{V} \\  i \neq j}} \frac{\|\mathbf{z}_i - \mathbf{z}_j \|}{d_{\mathcal{M}}(\mathbf{z}_i, \mathbf{z}_j)}, \:
    \zeta(h) = \max_{\substack{i,j \in \mathcal{V} \\  i \neq j}} \frac{\|\mathbf{z}_i - \mathbf{z}_j \|}{d_{\mathcal{M}}(\mathbf{z}_i, \mathbf{z}_j)}, \:
    \rho(h, \bm{w}) = \min_{\substack{i,j \in \mathcal{V} \\  i \neq j}} \frac{|\langle w, \mathbf{z}_i - \mathbf{z}_j\rangle|}{\|\bm{w}\| \|\mathbf{z}_i - \mathbf{z}_j\|}.
\end{gather}
Then, if $\rho(h, \bm{w}) > 0$, $f$ is bi-Lipschitz onto its image with upper constant $\beta = \|\bm{w}\| \zeta(h)\delta_e(h)$ and lower constant $\alpha = \frac{ \|\bm{w}\| \xi(h) \rho(h,\bm{w})}{\delta_c(h)}$, such that $\alpha d_{\mathcal{G}}(i,j) \leq |\hat{y_i} - \hat{y_j}| \leq \beta d_{\mathcal{G}}(i,j)$.

\begin{proof}
For any two nodes $i,j$ where $i \neq j$ we have:
\begin{equation}
    \frac{|\hat{y_i} - \hat{y_j}|}{d_{\mathcal{G}}(i,j)} = \frac{|\hat{y_i} - \hat{y_j}|}{d_{\mathcal{M}}(h(\mathbf{x}_i),h(\mathbf{x}_j))} \frac{d_{\mathcal{M}}(h(\mathbf{x}_i),h(\mathbf{x}_j))}{d_{\mathcal{G}}(i,j)}.
\end{equation}
By definition of the expansion factor of the embedding distortion (Eq. \ref{eq:dist_factors}), we have that:
\begin{equation}
    \frac{d_{\mathcal{M}}(h(\mathbf{x}_i),h(\mathbf{x}_j))}{d_{\mathcal{G}}(i,j)} \leq \delta_e(h).
\end{equation}
Furthermore:
\begin{equation}
    \frac{|\hat{y_i} - \hat{y_j}|}{d_{\mathcal{M}}(h(\mathbf{x}_i),h(\mathbf{x}_j))} = \frac{|\langle w, h(\mathbf{x}_i)-h(\mathbf{x}_j)\rangle|}{d_{\mathcal{M}}(h(\mathbf{x}_i),h(\mathbf{x}_j))} 
    \leq \frac{\|\bm{w}\| \|h(\mathbf{x}_i)-h(\mathbf{x}_j)\|}{d_{\mathcal{M}}(h(\mathbf{x}_i),h(\mathbf{x}_j))} \leq \|\bm{w}\| \zeta(h),
\end{equation}
where the first inequality is due to Cauchy-Schwarz.

Putting together all of the above we get the upper bound:
\begin{equation}
    \frac{|\hat{y_i} - \hat{y_j}|}{d_{\mathcal{G}}(i,j)} \leq \|\bm{w}\| \zeta(h)\delta_e(h).
\end{equation}

Following a similar line of reasoning we can write:
\begin{equation}
    \frac{|\hat{y_i} - \hat{y_j}|}{d_{\mathcal{G}}(i,j)} = \frac{|\hat{y_i} - \hat{y_j}|}{d_{\mathcal{M}}(h(\mathbf{x}_i),h(\mathbf{x}_j))} \frac{d_{\mathcal{M}}(h(\mathbf{x}_i),h(\mathbf{x}_j))}{d_{\mathcal{G}}(i,j)} \geq \frac{|\hat{y_i} - \hat{y_j}|}{d_{\mathcal{M}}(h(\mathbf{x}_i),h(\mathbf{x}_j))} \frac{1}{\delta_c(h)}.
\end{equation}
By definition of the Euclidean inner product we have:
\begin{equation}
    |\hat{y_i} - \hat{y_j}|= | \langle w, h(\mathbf{x}_i) - h(\mathbf{x}_j)\rangle| = \|\bm{w}\| \: \|h(\mathbf{x}_i)-h(\mathbf{x}_j)\| \:|\cos(\theta_{ij})|
\end{equation}
where $\theta_{ij} = \angle(w, h(\mathbf{x}_i) - h(\mathbf{x}_j))$ and as such $\rho(h, \bm{w}) = \min_{\substack{i,j \in \mathcal{V} \\  i \neq j}} |\cos(\theta_{ij})|$.
Knowing this we can write:
\begin{equation}
   \frac{| \langle w, h(\mathbf{x}_i) - h(\mathbf{x}_j)\rangle|}{d_{\mathcal{M}}(h(\mathbf{x}_i),h(\mathbf{x}_j))} = \frac{\|\bm{w}\| \: \|h(\mathbf{x}_i) - h(\mathbf{x} _j)\| \: |\cos(\theta_{ij})|}{d_{\mathcal{M}} (h(\mathbf{x}_i),h(\mathbf{x}_j))} \geq  \|\bm{w}\| \xi(h) \rho(h,\bm{w}),
\end{equation}
such that we can conclude:
\begin{equation}
    \frac{|\hat{y_i} - \hat{y_j}|}{d_{\mathcal{G}}(i,j)} \geq \frac{\|\bm{w}\| \xi(h) \rho(h,\bm{w})}{\delta_c(h)}
\end{equation}
and in succession:
\begin{equation}
    \frac{\|\bm{w}\| \xi(h) \rho(h,\bm{w})}{\delta_c(h)} d_{\mathcal{G}}(i,j) \leq |\hat{y_i} - \hat{y_j}| \leq \|\bm{w}\| \zeta(h) \delta_e(h) d_{\mathcal{G}}(i,j),
\end{equation}
which completes the proof.
\end{proof}

\paragraph{Proposition \ref{prop:Euclidean_model_distortion}.}
Let $\mathcal{G}$ be the input graph, $f$ be the model, $h$ the embedding function, and $g$ the solver function, all as defined in Thm. \ref{prop:model_lipschitz}. Furthermore, let $h$ be an embedding of the graph into Euclidean space $\mathbb{R}^d$. The model distortion of $f$ is:
\begin{equation}
    k(f) = \frac{\delta(h)}{\rho(h,\bm{w})}.
\end{equation}
\begin{proof}
Given that $h$ maps into the ambient Euclidean space, we immediately get that $\xi(h) = \zeta(h) = 1$ and from the definition of model distortion we obtain
\begin{equation}
    k(f)  = \frac{\delta(h)}{\rho(h,\bm{w})}.
\end{equation}
\end{proof}

\paragraph{Proposition \ref{prop:Isometries_distortion}.}
Let $\mathcal{G}$ be the input graph, $f$ be the model, $h$ the embedding function, and $g$ the solver function, all as defined in Thm.~\ref{prop:model_lipschitz}. If $\delta(h)=1$, the model distortion of $f$ is:
\begin{equation}
    k(f) = \frac{\zeta(h)}{\xi(h)\rho(h,\bm{w})}.
\end{equation}
\begin{proof}
This result is a direct consequence of the fact that the embedding has no distortion due to it being a similarity/isometry, which implies that $\delta(h) = \delta_c(h)\delta_e(h)=1$, and from the definition of model distortion we obtain
\begin{equation}
    k(f) = \frac{\zeta(h) }{\xi(h) \rho(h,\bm{w})}.
\end{equation}
\end{proof}

We will now present two Lemmas that hold more general value, which will be used in the proof of Prop. \ref{prop:poincare_model_distortion}.

\begin{lemma} [Submanifolds with induced Euclidean metric] 
\label{prop:induced_metric}
Let $\mathcal{G}$ be the input graph, $f$ be the model, $h$ the embedding function, and $g$ the solver function, all as defined in Thm. \ref{prop:model_lipschitz}. Furthermore, let $h$ be an embedding of the graph into a Riemannian submanifold of with the induced geodesic distance, i.e., $h: (\mathcal{G}, d_{\mathcal{G}}) \to (\mathcal{M}, d_{\mathcal{M}})$ and \[d_{\mathcal{M}}(a, b) = \inf_{\gamma} \int_0^1 \|\gamma'(t)\| \: dt,\] such that $\gamma(0) = a, \gamma(1) = b$. Then model distortion of $f$ is $k(f) = \frac{\delta(h)}{\xi(h)\rho(h,\bm{w})}$.
\end{lemma}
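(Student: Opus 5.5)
The plan is to show that, when $d_{\mathcal{M}}$ is the induced geodesic (length) distance of a Riemannian submanifold of Euclidean space, the chord-to-geodesic ratio never exceeds one. Then $\zeta(h)=1$, and the formula follows from Definition~\ref{def:model_distortion}:
\[
k(f)=\frac{\zeta(h)\,\delta(h)}{\xi(h)\,\rho(h,\bm{w})}.
\]
Since $k(f)$ is defined through the constants of Theorem~\ref{prop:model_lipschitz}, no new bi-Lipschitz argument is needed. The only work is to evaluate $\zeta(h)$.

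\textbf{Step 1: chord bounded by curve length.} Fix $i\neq j$ and write $\bm{z}_i=h(\bm{x}_i)$, $\bm{z}_j=h(\bm{x}_j)$. Take any piecewise smooth curve $\gamma:[0,1]\to\mathcal{M}$ with $\gamma(0)=\bm{z}_i$ and $\gamma(1)=\bm{z}_j$. Because $\mathcal{M}$ sits in a Euclidean ambient space, $\gamma$ is also a curve in $\R^d$. By the fundamental theorem of calculus and the triangle inequality for integrals,
\[
\|\bm{z}_j-\bm{z}_i\|=\Bigl\|\int_0^1\gamma'(t)\,dt\Bigr\|\le\int_0^1\|\gamma'(t)\|\,dt .
\]
The norm inside the integral is the ambient norm. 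It coincides with the Riemannian speed precisely because the metric is the induced one.

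\textbf{Step 2: pass to the infimum.} Taking the infimum over all admissible $\gamma$ gives $\|\bm{z}_i-\bm{z}_j\|\le d_{\mathcal{M}}(\bm{z}_i,\bm{z}_j)$. Injectivity of $h$ gives $d_{\mathcal{M}}>0$, so every ratio in the definition of $\zeta(h)$ is at most $1$. Hence $\zeta(h)\le 1$.

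\textbf{Step 3: show the maximum is attained.} Equality $\zeta(h)=1$ requires more than the bound; this is the main obstacle. On a general submanifold the chord is strictly shorter than the geodesic unless the geodesic segment is a straight line. I would therefore take one of two routes.

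\emph{Preferred route.} Read the lemma's claim with $\zeta(h)$ kept implicitly and replaced by its bound. This gives
\[
k(f)\le\frac{\delta(h)}{\xi(h)\,\rho(h,\bm{w})},
\]
where the right-hand side is exactly the stated expression. It is obtained by using the upper constant $\beta=\|\bm{w}\|\delta_e(h)$, which remains valid since $\zeta(h)\le1$, in Theorem~\ref{prop:model_lipschitz}.

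\emph{Alternative route.} Argue that the model distortion is defined with this valid upper constant $\beta=\|\bm{w}\|\delta_e(h)$. The identity then holds by definition.

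In either case, combining with the lower constant $\alpha=\|\bm{w}\|\xi(h)\rho(h,\bm{w})/\delta_c(h)$ and using $\delta(h)=\delta_c(h)\delta_e(h)$ yields the stated formula.

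\textbf{Sanity check.} In the Euclidean case, where $\mathcal{M}$ is an open subset and segments are geodesics, $\xi(h)=1$ as well. This recovers Proposition~\ref{prop:Euclidean_model_distortion}, which is consistent with the plan.
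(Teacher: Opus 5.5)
Your proposal is correct and follows the paper's proof. The paper uses the same fundamental-theorem-of-calculus plus integral triangle inequality argument to get $\|\bm{z}_i-\bm{z}_j\|\le d_{\mathcal{M}}(\bm{z}_i,\bm{z}_j)$, hence $\zeta(h)\le 1$; it then simply redefines the upper constant as $\beta=\|\bm{w}\|\delta_e(h)$ (your ``alternative route'') to obtain $k(f)=\delta(h)/(\xi(h)\rho(h,\bm{w}))$. You are right that only $\zeta(h)\le 1$ holds in general, so drop the opening assertion $\zeta(h)=1$: the stated identity holds exactly only under that redefinition of $\beta$, and otherwise as the upper bound in your preferred route.
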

\begin{proof}
Note that under the induced metric, the length functional of a curve on the manifold $\gamma: [0,1] \to \mathcal{M}$ with endpoints $\gamma(0) = h(\mathbf{x}_i), \gamma(1) = h(\mathbf{x}_j)$ satisfies:
\begin{equation}
    L(\gamma) = \int_0^1 \|\gamma'(t)\| \: dt \geq \left\Vert \: \int_0^1 \gamma'(t) \: dt \:\right\Vert = \left\Vert \gamma(1) - \gamma(0) \right\Vert = \|h(\mathbf{x}_i) - h(\mathbf{x}_j)\|,
\end{equation}
due to the triangle inequality and the fundamental theorem of calculus. The geodesic distance, represented as the minimal possible length of any curve satisfying the length functional, must also respect the above inequality, which implies that:
\begin{equation}
    \zeta(h) = \max_{\substack{i,j \in \mathcal{V} \\  i \neq j}} \frac{\|h(\mathbf{x_i}) - h(\mathbf{x_j}) \|}{d_{\mathcal{M}}(h(\mathbf{x}_i),h(\mathbf{x}_j))} \leq 1.
\end{equation}
We can now redefine the upper constant in Thm. \ref{prop:model_lipschitz} to $\beta = \|\bm{w}\|\delta_e(h)$. The model distortion then becomes
\begin{equation}
    k(f) = \frac{\beta}{\alpha} = \frac{\|\bm{w}\| \delta_e(h) }{\frac{ \|\bm{w}\| \xi(h) \rho(h,\bm{w})}{\delta_c(h)}} = \frac{\delta(h)}{\xi(h) \rho(h,\bm{w})}.
\end{equation}
\end{proof}

\begin{lemma}[Submanifolds with conformal metric] 
\label{prop:conformal_metric}
Let $\mathcal{G}$ be the input graph, $f$ be the model, $h$ the embedding function, and $g$ the solver function, all as defined in Thm. \ref{prop:model_lipschitz}. Furthermore, let $h$ be an embedding of the graph into a Riemannian submanifold equipped with a conformal metric, i.e., $h: (\mathcal{G}, d_{\mathcal{G}}) \to (\mathcal{M}, d_{\mathcal{M}})$ where $\mathcal{M}$ has metric $g: T_x\mathcal{M} \times T_x\mathcal{M} \to \mathbb{R}$ such that for two tangent vectors $\bm{v}, \bm{w} \in T_x\mathcal{M}$, $\mathbf{g}_x(v,w) = \lambda(p)^2 \langle v,w \rangle$. Assume that the conformal factor $\lambda$ is uniformly bounded such that $0 <\lambda_{\text{min}} \leq \lambda \leq \lambda_{\text{max}}$. Define the quantitity
\begin{equation}
    \xi^{\mathcal{M}^\mathbb{E}}(h) = \min_{\substack{i,j \in \mathcal{V} i \neq j}} \frac{\|h(\mathbf{x_i}) - h(\mathbf{x_j}) \|}{d_{\mathcal{M}^\mathbb{E}}(h(\mathbf{x}_i),h(\mathbf{x}_j))}, 
\end{equation}
where $d_{\mathcal{M}^\mathbb{E}}(\cdot, \cdot)$ is the geodesic distance under the induced metric. Then, the model distortion of $f$ is
\begin{equation}
    k(f) = \frac{\lambda_{\text{max}}\delta(h)}{\lambda_{\text{min}} \xi^{\mathcal{M}^\mathbb{E}}(h) \rho(h,\bm{w})}.
\end{equation}
\end{lemma}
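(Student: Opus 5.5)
We compare the conformal geodesic distance $d_{\mathcal{M}}$ with the geodesic distance $d_{\mathcal{M}^\mathbb{E}}$ of the induced Euclidean metric on the same submanifold. Then we reuse the argument of Lemma~\ref{prop:induced_metric} to control $\zeta(h)$ and $\xi(h)$, and substitute into Definition~\ref{def:model_distortion}. Concretely, for any curve $\gamma:[0,1]\to\mathcal{M}$ joining $h(\mathbf{x}_i)$ and $h(\mathbf{x}_j)$, the conformal length is $L_g(\gamma)=\int_0^1 \lambda(\gamma(t))\|\gamma'(t)\|\,dt$. The uniform bounds on $\lambda$ give
\begin{equation*}
\lambda_{\text{min}} L_{\mathbb{E}}(\gamma) \leq L_g(\gamma) \leq \lambda_{\text{max}} L_{\mathbb{E}}(\gamma).
\end{equation*}
Taking the infimum over all admissible curves (the same curve class for both metrics) yields
\begin{equation*}
\lambda_{\text{min}}\, d_{\mathcal{M}^\mathbb{E}} \leq d_{\mathcal{M}} \leq \lambda_{\text{max}}\, d_{\mathcal{M}^\mathbb{E}}.
\end{equation*}

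\emph{Upper constant.} Lemma~\ref{prop:induced_metric} gives $\|h(\mathbf{x}_i)-h(\mathbf{x}_j)\|\leq d_{\mathcal{M}^\mathbb{E}}(h(\mathbf{x}_i),h(\mathbf{x}_j))$. Combined with the comparison this gives
\begin{equation*}
\frac{\|h(\mathbf{x}_i)-h(\mathbf{x}_j)\|}{d_{\mathcal{M}}(h(\mathbf{x}_i),h(\mathbf{x}_j))}\leq \frac{1}{\lambda_{\text{min}}}.
\end{equation*}
Hence $\zeta(h)\leq 1/\lambda_{\text{min}}$, and the upper constant of Thm.~\ref{prop:model_lipschitz} may be taken as $\beta=\|\bm{w}\|\delta_e(h)/\lambda_{\text{min}}$.

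\emph{Lower constant.} For each pair,
\begin{equation*}
\frac{\|h(\mathbf{x}_i)-h(\mathbf{x}_j)\|}{d_{\mathcal{M}}}\geq \frac{1}{\lambda_{\text{max}}}\cdot\frac{\|h(\mathbf{x}_i)-h(\mathbf{x}_j)\|}{d_{\mathcal{M}^\mathbb{E}}}\geq \frac{\xi^{\mathcal{M}^\mathbb{E}}(h)}{\lambda_{\text{max}}}.
\end{equation*}
So $\xi(h)\geq \xi^{\mathcal{M}^\mathbb{E}}(h)/\lambda_{\text{max}}$, and the lower constant is $\alpha=\|\bm{w}\|\xi^{\mathcal{M}^\mathbb{E}}(h)\rho(h,\bm{w})/(\lambda_{\text{max}}\delta_c(h))$. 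The quantities $\delta_c,\delta_e$ are unchanged because they are defined with respect to $d_{\mathcal{M}}$ itself.

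\emph{Conclusion and caveat.} Forming $\beta/\alpha$ as in Lemma~\ref{prop:induced_metric}, the factors $\|\bm{w}\|$ cancel and $\delta_e\delta_c=\delta(h)$. This gives
\begin{equation*}
\frac{\lambda_{\text{max}}\delta(h)}{\lambda_{\text{min}}\xi^{\mathcal{M}^\mathbb{E}}(h)\rho(h,\bm{w})},
\end{equation*}
which is the model distortion computed from these admissible bi-Lipschitz constants. The main obstacle is this interpretive point, not the calculation: $k(f)$ as defined uses the particular constants one plugs in, so the statement should be read as the distortion associated with the constants $\beta,\alpha$ derived above, in the same spirit as Lemma~\ref{prop:induced_metric}. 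A smaller technical point is justifying that the infima for $d_{\mathcal{M}}$ and $d_{\mathcal{M}^\mathbb{E}}$ range over the same piecewise-smooth curves in $\mathcal{M}$. This holds because both are length metrics on the same underlying manifold.
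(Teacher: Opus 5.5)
Your proposal is correct and follows the paper's proof essentially step for step: the pointwise bounds on $\lambda$ give $\lambda_{\text{min}} d_{\mathcal{M}^\mathbb{E}} \le d_{\mathcal{M}} \le \lambda_{\text{max}} d_{\mathcal{M}^\mathbb{E}}$, which together with the chord bound from Lemma~\ref{prop:induced_metric} yields $\zeta(h)\le 1/\lambda_{\text{min}}$ and $\xi(h)\ge \xi^{\mathcal{M}^\mathbb{E}}(h)/\lambda_{\text{max}}$, and the stated $k(f)$ is the ratio of the resulting redefined constants. Your interpretive caveat is apt: the paper likewise obtains the formula by ``redefining both the upper and lower constants''. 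If the constants of Thm.~\ref{prop:model_lipschitz} are taken literally, the expression is an upper bound on $\zeta(h)\delta(h)/(\xi(h)\rho(h,\bm{w}))$ rather than an identity.
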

\begin{proof}
The length functional of a curve $\gamma: [0,1] \to \mathcal{M}$ with endpoints $\gamma(0) = h(\mathbf{x_i}), \gamma(1) = h(\mathbf{x_j})$, on a manifold with metric $\mathbf{g}$ is defined as: 
\begin{equation}
    L_{\mathbf{g}}(\gamma) = \int_0^1 \sqrt{\mathbf{g}_{\gamma(t)} (\gamma'(t),\gamma'(t))} \: dt.
\end{equation}
When the metric takes the conformal form described above, it reads as:
\begin{equation}
    L_{\mathbf{g}}(\gamma) = \int_0^1 \lambda(\gamma(t)) \|\gamma'(t)\| \: dt,
\end{equation}
which indeed confirms that distances under a conformal metric are (appropriately) scaled counterparts of the distances under the induced Euclidean metric (remember that conformal in this scenario means that angles are preserved, while distances are scaled). Given that the conformal factor is positive, using the bound $\lambda_{\text{min}} \leq \lambda \leq \lambda_{\text{max}}$ pointwise gives 
\begin{equation}
    \lambda_{\text{min}} L_{\mathbf{g}^\mathbb{E}} (\gamma) \leq L_{\mathbf{g}}(\gamma) \leq \lambda_{\text{max}} L_{\mathbf{g}^\mathbb{E}} (\gamma),
\end{equation}
where $L_{\mathbf{g}^\mathbb{E}}$ is the length functional for a submanifold with the induced Euclidean metric, as described in the proof of Prop. \ref{prop:induced_metric}. The geodesic distance is the one corresponding to the curve that minimizes the length function, and thus taking the infimum over curves will respect monotonicity such that we can arrive at:
\begin{equation}
    \lambda_{\text{min}} d_{\mathcal{M}^{\mathbb{E}}} (h(\mathbf{x_i}), h(\mathbf{x_j}))  \leq d_{\mathcal{M}}(h(\mathbf{x_i}), h(\mathbf{x_j})) \leq \lambda_{\text{max}} d_{\mathcal{M}^{\mathbb{E}}} (h(\mathbf{x_i}), h(\mathbf{x_j})),
\end{equation}
where $d_{\mathcal{M}^{\mathbb{E}}}(\cdot, \cdot)$ denotes the distance under the induced Euclidean metric. We can then write:
\begin{gather}
    \zeta(h) = \max_{\substack{i,j \in \mathcal{V} \\  i \neq j}} \frac{\|h(\mathbf{x_i}) - h(\mathbf{x_j}) \|}{d_{\mathcal{M}}(h(\mathbf{x}_i),h(\mathbf{x}_j))} \leq \max_{\substack{i,j \in \mathcal{V} \\  i \neq j}} \frac{\|h(\mathbf{x_i}) - h(\mathbf{x_j}) \|}{\lambda_{\text{min}}  d_{\mathcal{M}^\mathbb{E}}(h(\mathbf{x}_i),h(\mathbf{x}_j))} \leq \frac{1}{\lambda_{\text{min}}}, \\
    \xi(h) = \min_{\substack{i,j \in \mathcal{V} \\ i \neq j}} \frac{\|h(\mathbf{x_i}) - h(\mathbf{x_j}) \|}{d_{\mathcal{M}}(h(\mathbf{x}_i),h(\mathbf{x}_j))} \geq \min_{\substack{i,j \in \mathcal{V} \\ i \neq j}} \frac{\|h(\mathbf{x_i}) - h(\mathbf{x_j}) \|}{\lambda_{\text{max}}  d_{\mathcal{M}^\mathbb{E}} (h(\mathbf{x}_i),h(\mathbf{x}_j))} \geq \frac{1}{\lambda_{\text{max}}} \xi^{\mathcal{M}^\mathbb{E}}(h).
\end{gather}

Using all this information, we can rewrite the bi-Lipschitz bounds of the model distortion according to Thm. \ref{prop:model_lipschitz}, by redefining both the upper and lower constants
\begin{equation}
    k(f) = \frac{\frac{\|\bm{w}\| \delta_e(h)}{\lambda_{\text{min}}}}{\frac{\|\bm{w}\| \xi^{\mathcal{M}^\mathbb{E}}(h)\rho(h,\bm{w})}{\lambda_{\text{max}}\delta_c(h)}} = \frac{\lambda_{\text{max}}\delta(h)}{\lambda_{\text{min}} \xi^{\mathcal{M}^\mathbb{E}}(h) \rho(h,\bm{w})}.
\end{equation}
Unsurprisingly, the result is similar to the case for submanifolds with induced metric in Prop. \ref{prop:induced_metric}, but rescaled based on the conformal bounds.
\end{proof}

\paragraph{Proposition \ref{prop:poincare_model_distortion}.}
Let $\mathcal{G}$ be the input graph, $f$ be the model, $h$ the embedding function, and $g$ the solver function, all as defined in Thm. \ref{prop:model_lipschitz}. Let $h$ be an embedding function that embeds into the Poincaré ball $\mathcal{B}^d_{1}$ and define $\eta := \max_{i \in \mathcal{V}} \|h(\mathbf{x}_i)\|$. Then, the model distortion of $f$ is:
\begin{equation}
    k(f) = \frac{\delta(h)}{(1-\eta^2)\rho(h,\bm{w})}.
\end{equation}
\begin{proof}
We start by noting that, given the domain of definition of $\mathcal{B}^d_1$ we have:
\begin{equation}
    \lambda_{\text{min}} = \lim_{\|p\| \to 0} \lambda(p) = \lim_{\|p\| \to 0} \frac{2}{1 - \|p\|^2} = 2.
\end{equation}
We first proceed by making a direct comparison of the difference in the length of a chord connecting any two points within the ball and it's length under the manifold metric.

Given the convexity of the domain, we can write any chord connecting points $h(\mathbf{x}_i)$ and $h(\mathbf{x}_j)$ as:
\begin{equation}
    \gamma(t) = th(\mathbf{x_i}) + (1-t)h(\mathbf{x_j}),
\end{equation}
for $t \in (0,1)$. We can then deduce the following two facts:
\begin{gather}
    \| \gamma(t) \| \leq \|th(\mathbf{x_i})\| + \|(1-t)h(\mathbf{x_j})\| \leq \max\{\|h(\mathbf{x_i})\|, \|h(\mathbf{x_j})\|\}\\
    \| \gamma'(t) \| =  \| h(\mathbf{x_i}) - h(\mathbf{x_j})\|,
\end{gather}
where in the first equation we use the triangle inequality and the second implies a constant speed. Lettings $c_{ij} = \max\{\| h(\mathbf{x}_i)\|, \|h(\mathbf{x}_j)\|\}$, the first inequality above implies that $1 - \| \gamma(t) \|^2 \geq 1 - c_{ij}^2$ and therefore we obtain:
\begin{equation}
\label{ineq:conformal_curve}
    \frac{2}{1 - \|\gamma(t)\|^2} \leq \frac{2}{1 - c_{ij}^2}
\end{equation}
We can now use the fact that $\eta = \max_i \|h(\mathbf{x}_i)\|$ and the formula for the length functional under the conformal metric, as presented in the Lemma Prop. \ref{prop:conformal_metric}, such that:
\begin{gather}
    L_{\mathbf{g}}(\gamma) = \int_0^1 \lambda(\gamma(t)) \|\gamma'(t)\| \: dt \leq \int_0^1 \frac{2\| h(\mathbf{x}_i) - h(\mathbf{x}_j) \|}{1 - c_{ij}^2} \: dt \\
    = \frac{2\| h(\mathbf{x}_i) - h(\mathbf{x}_j) \|}{1 - c_{ij}^2} \leq \frac{2\| h(\mathbf{x}_i) - h(\mathbf{x}_j) \|}{1 - \eta^2}.
\end{gather}
Since the geodesic distance is the infimum over all curves, we obtain the upper bound:
\begin{equation}
d_{\mathcal{B}}(h(\mathbf{x}_i),h(\mathbf{x}_j)) \leq \frac{2}{1 - \eta^2} \: \|h(\mathbf{x}_i) - h(\mathbf{x}_j) \|
\end{equation}

On the other hand, let $\gamma:[0,1]\to\mathcal{B}^d_1$ now be any piecewise $C^1$ curve with $\gamma(0)=\bm h(\mathbf{x}_i)$ and $\gamma(1)=h(\mathbf{x}_j)$. Then the length functional satisfies
\begin{align}
    L_{\mathbf g}(\gamma)
    &= \int_0^1 \lambda(\gamma(t))\|\gamma'(t)\|\,dt \\
    &\geq 2\int_0^1 \|\gamma'(t)\|\,dt \\
    &\geq 2 \: \|h(\mathbf{x}_i) - h(\mathbf{x}_j)\|,
\end{align}
where the last inequality follows because the length of any curve joining $h(\mathbf{x}_i)$ and $h(\mathbf{x}_j)$ must be at least the Euclidean chord length. Since this holds for every admissible curve $\gamma$, taking the infimum over curves gives:
\begin{equation}
    d_{\mathcal{B}}(h(\mathbf{x}_i),h(\mathbf{x}_j)) \geq 2 \: \|h(\mathbf{x}_i) - h(\mathbf{x}_j) \|.
\end{equation}

Using the notation of Thm. \ref{prop:model_lipschitz}, the above result translates to $\zeta(h) \leq \frac{1}{2}$ and $\xi(h) \geq \frac{1 - \eta^2}{2}$, which in turn implies that we can redefine the upper and lower constants as $\beta = \frac{1}{2}\|\bm{w}\| \delta_e(h)$ and $\alpha = \frac{(1 - \eta^2)\|\bm{w}\| \rho(h,\bm{w})}{2\delta_c(h)}$. By the definition of model distortion we obtain:
\begin{equation}
    k(f) = \frac{\beta}{\alpha} = \frac{\delta(h)}{(1-\eta^2)\rho(h,\bm{w})}.
\end{equation}

We conclude by noting that the final result is essentially identical for a given curvature magnitude $c>0$ following the same arguments since $\lambda_c(p)=\frac{2}{1-c\|p\|^2}$ on $\mathcal{B}^d_c=\{p:\|p\|^2<1/c\}$ yielding $k(f) = \frac{\delta(h)}{(1-c\eta^2)\rho(h,\bm{w})}$.
\end{proof}

\section{Gromov $\delta$-hyperbolicity}
\label{app:gromov}
We recall the definition of $\delta$-Gromov hyperbolicity based on the four-point condition, originally introduced by Gromov as a purely metric characterization of negative curvature \cite{gromov1987hyperbolic, bonk2011embeddings}. Trees are prototypical examples of hyperbolic metric spaces. Accordingly, the four-point condition quantifies the extent to which a metric space exhibits tree-like behavior.

\begin{definition}[Gromov hyperbolicity via the four-point condition]
A metric space $(M,d)$ is said to be $\delta$-hyperbolic, if for some $\delta > 0$, it satisfies the \emph{four-point condition}: for any $x,y,z,t \in M$, define
\[
S_1 = d(x,y) + d(z,t), \quad
S_2 = d(x,z) + d(y,t), \quad
S_3 = d(x,t) + d(y,z).
\]
Then the two largest values among $\{S_1,S_2,S_3\}$ differ by at most $2\delta$.
The \emph{Gromov hyperbolicity constant} $\delta^*$ of $(M,d)$ is the smallest $\delta^* > 0$ such that $(M,d)$ is $\delta^*$-hyperbolic.
\end{definition}

This formulation is particularly appealing from a computational perspective, as the hyperbolicity of a finite metric space can be evaluated directly by checking the four-point condition over all quadruples of points. 

\paragraph{Real-world datasets.} We recomputed  Gromov $\delta$-hyperbolicity for all datasets used, using an exact algorithm from the SageMath library \cite{stein2005sage} (BCCM).
This is different from previous papers that only provided results from an appoximate algorighm, that approximates $\delta$ by sampling quadruples of nodes.
For that reason there are some differences in the values reported.
In particular \citet{chami2019hyperbolic} reported $\delta=1$ for Airport, $\delta=3.5$ for PubMed and $\delta=11$ for Cora using the approximate algorithm, while the exact algorithm yields $\delta=2$ for Airport, $\delta=4.5$ for PubMed and $\delta=4$ for Cora. 

Because of the importance of Gromov $\delta$-hyperbolicity as a decision criterion in the context of HGNNs, we recomputed it for many well-known real-world datasets, which can be reviewed in Table~\ref{tab:delta_hyperbolicity}. 
Again, all values are computed using SageMath \cite{stein2005sage} (BCCM algorithm) on the largest connected components. 
In case of multiple components, the smaller components achieved a smaller or equal delta (e.g. on Cora and Airport). 
Most real-world dataset exhibit small delta values, with the exception of Minesweeper which is a synthetic dataset on a $50\times50$ grid graph. 
Exact computation for larger graphs, like for example OGBN graphs \cite{hu2020ogb}, is usually infeasible with an exact algorithm and in particular impossible in SageMath due to the limit of $65536$ nodes and memory limitations. 
In these cases sampling-based methods are the only option and therefore no results are reported here.
We would like to note however that computing or even only finding approximations to $\delta$-hyperbolicity on large graphs is a research topic in discrete mathematics on its own \citep{cohen2012exact, coudert2021hyperbolicity}.

\begin{table}[t]
\caption{Exact $\delta$-hyperbolicity values of many well-known real-world datasets. Values are computed using SageMath \cite{stein2005sage} (BCCM algorithm) on the largest connected components. In case of multiple components, the smaller components achieved a smaller or equal delta (e.g. on Cora and Airport). On PPI all subgraphs achieved the same $\delta=2$. Most real-world dataset exhibit small delta values, with the exception of Minesweeper which is a synthetic dataset on a $50\times50$ grid graph.}
\centering
\small
\begin{tabular}{llr}
\hline
\textbf{Category} & \textbf{Dataset} & $\boldsymbol{\delta}$ \\
\hline
\multirow{3}{*}{Planetoid Citation Graphs \cite{sen2008collective}} 
  & Cora        & 4.0 \\
  & Citeseer    & 6.5 \\
  & Pubmed      & 4.5 \\
\hline
\multirow{3}{*}{Hyperbolic GNN Benchmarks \cite{chami2019hyperbolic, greene2015understanding}}
  & Disease     & 0.0 \\
  & Airport     & 2.0 \\
  & PPI         & 2.0 \\
\hline
\multirow{2}{*}{Small-scale Social Networks \cite{rozemberczki2020characteristic}}
  & Facebook Page-Page & 4.0 \\
  & LastFM Asia        & 3.5 \\
\hline
\multirow{5}{*}{Heterophilous Node Classification Benchmarks \cite{platonov2023a}}
  & Roman-Empire   & 3.5 \\
  & Amazon-Ratings & 11.0 \\
  & Minesweeper    & 49.0 \\
  & Questions      & 3.5 \\
  & Tolokers       & 2.5 \\
\hline
\multirow{2}{*}{Product Co-purchase Graphs \cite{mcauley2015image}}
  & Amazon Computers & 2.5 \\
  & Amazon Photo     & 3.0 \\
\hline
\multirow{2}{*}{Wikipedia Networks \cite{rozemberczki2021multi}}
  & Chameleon & 2.5 \\
  & Squirrel  & 2.5 \\
\hline
Actor Co-occurrence Network \cite{tang2009social}
  & Actor & 3.0 \\
\hline
\multirow{3}{*}{Air Transportation Networks \cite{ribeiro2017struc2vec}}
  & USA     & 1.5 \\
  & Brazil  & 1.0 \\
  & Europe  & 1.0 \\
\hline
\multirow{3}{*}{University Web Pages (WebKB) \cite{pei2020geom}}
  & Cornell   & 2.0 \\
  & Texas     & 1.5 \\
  & Wisconsin & 2.0 \\
\hline
\multirow{2}{*}{Co-authorship Networks \cite{shchur2019pitfallsgraphneuralnetwork}}
  & Coauthor-CS      & 4.0 \\
  & Coauthor-Physics & 4.0 \\
\hline
\end{tabular}

\label{tab:delta_hyperbolicity}
\end{table}

\newpage
\section{Experimental Setup}\label{app:expset}
Here, we summarize the evaluation protocol for each task and synthetic or real-world dataset.

\subsection{Tasks and Real-world Datasets}
\paragraph{Tasks} On synthetic datasets, we consider all four tasks:  
(i) \textbf{Pairwise Distance Prediction (PDP)}, where the goal is to regress the shortest-path distance between pairs of nodes;  
(ii) \textbf{Node Regression}, where node-level continuous targets (distance to anchor node) are predicted;
(iii) \textbf{Node Classification}, where node-level discrete targets (distance to anchor node as a class) are predicted; and  
(iv) \textbf{Link Prediction}, formulated as a binary classification task.  

For LP tasks, in all experiments, we randomly split edges into $85/5/10\%$ for training, validation and test set, respectively. 
Negative edges are sampled uniformly using PyG's negative edge sampling utility.

\nips{\paragraph{Real-world Datasets. } We use well-established benchmarks from the hyperbolic GNN literature \cite{chami2019hyperbolic, zhang2021lorentzian, yang2024hypformer}, including Disease, Airport, Cora, Pubmed, and Citeseer, which exhibit varying levels of $\delta$-hyperbolicity. They cover different kind of real-world networks:
\begin{itemize}
    \item \textbf{Cora / Pubmed / Citeseer (citation networks)}\\
    Nodes represent papers with bag-of-words representations of text as features, edges are citations, and the node labels are the research topic of each paper. 
    \item \textbf{Airport (flight network)}\\
    Nodes are airports, edges are airline routes. The features contain geographic information (longitude, latitude and altitude) and the GDP of the airport’s country. The node label is the country-level population (discretized into classes).
    \item \textbf{Disease LP/NC (Disease propagation trees)} \\
    Nodes are individuals in a tree generated via an SIR process. An SIR process is a Disease-spread model in which individuals transition from \textbf{s}usceptible to \textbf{i}nfected to \textbf{r}ecovered as the Disease propagates through a population. Node features indicate the susceptibility to Disease and node labels indicate infected vs. not infected.
\end{itemize} }

On these datasets, we focus on \textbf{Link Prediction (LP)} and \textbf{Node Classification (NC)} tasks, following standard dataset splits and evaluation protocols used in prior work.
For NC, we use a corrected $70/15/15\%$ split for both Airport and Disease, and standard splits \cite{kipf2017semi} with $20$ train examples per class, $500$ overall for validation and the rest as test set for Cora, PubMed and Citeseer. 

For each dataset, we generate $10$ independent data splits on which we perform the experimental runs. Each splits also has a different random seed for initializing all random operations (like weight initialization). 
All reported results are averaged over these $10$ splits, and means and standard deviations are reported.

\paragraph{Disease NC data split bug.} There was a bug in the original data splitting function on Disease that resulted in a $71.6/4/24.3\%$ split instead of the reported $30/10/60\%$. 
Additionally the dataset is very unbalanced (of the overall nodes $20\%$ are label $1$ and $80\%$ are label $0$), but using the original splitting function this was not reflected on the train set, because the test and validation sets had a fixed $50\%/50\%$ ratio of the two labels. 
This led to even less training samples of the minority class, changing the training dynamics significantly compared to all other NC experiments.
In our experiments we refer to the corrected version by Disease*, where we split the nodes using a standard $70/15/15\%$ split, keeping the ratio of the two classes the same in all splits.
For full transparency we also report the results using the original splitting ratio with a slightly adapted $70/5/25\%$ splitting (compared to the original $71.6/4/24.3\%$ split) in Appendix~\ref{app: node classification}.

\paragraph{Airport data leakage.}
The use of the same splitting function that was used on Disease on the Airport dataset, resulted in data leakage as it was built for binary classification. 
Applying it the the $4$-class classification of the Airport dataset result in leakage of validation and test labels to the train set.
We fixed this problem by using PyG's standard random node splitting function and only report the results on the corrected splits. 

\subsection{Synthetic Datasets} 
For the \textbf{PDP task}, we use two synthetic graphs: a tree graph and a grid graph. For each split, node pairs are randomly partitioned using a $70/15/15\%$ split.
In both cases, the nodes are equipped with $100$-dimensional random (Gaussian) and sparse ($10\%$) features. 
We chose these features for all our synthetic tasks, to isolate the effect of geometry, but to be still close to real-world features that are often high-dimensional but sparse.

For \textbf{Node Regression} and \textbf{Node Classification}, we use a synthetic tree with depth $h=4$ and a varying branching factor of $b=100,2,2,2$. 
This means after root the tree has four levels, with increasing number of nodes: $100, 200, 400, 800$.
The motivation for this was to work in a tree setting that is not extremely unbalanced w.r.t. to the numbers of nodes per level.
The nodes are equipped with $100$-dimensional random (Gaussian) and sparse ($10\%$) features. 
For regression, the anchor node is set as the root, therefore the label is the distance to the root node (values $1$ to $4$).
For classification these four values are the four classes.
In both cases we use splits with $20$ train examples per class, $500$ overall for validation and the rest as test set as for Cora etc.
Since only HGCN is not perfectly reproducible for NR with $d=3$, we additionally run the experiments on the usual $10$ splits, ten times and report the average over these in order to be as close as possible to full reproducibility. 

For \textbf{Link Prediction}, we use the $\textit{Tree1111}_{\gamma}$ synthetic dataset introduced in \cite{katsman2025shedding} with the standard $85/5/10\%$.
In this dataset, the graph is a branching factor $10$ tree with hight $4$ and the features are generated procedurally, level-by-level starting from $x^{(0)}\sim\mathcal{N}(0,I_{1000})$ and then $x^{(n)}=\gamma\cdot \operatorname{parent(x^{(n)})+\nu}$, where $\nu\sim\mathcal{N}(0,I_{1000})$. 
Now, when $\gamma = 0$, node features are i.i.d. samples from the base distribution and contain no graph information; therefore, any nontrivial performance must come from the model’s ability to exploit the graph. 
As $\gamma$ increases, features progressively “leak” structural information. 
In this sense, it provides a controlled setting to evaluate whether and to what extent the graph structure itself is actually necessary to solve the task.
\citet{katsman2025shedding} already showed results for the MLP on a range of $\gamma$-values between $0$ and $1.0$ and some advantage of using HyboNet for $\gamma=0$. 
We provide results over all values of $\gamma$ using all our GNN models and the MLP.

Additionally, as described in the main text, we experiment with versions of Disease and Airport, where we corrupt the features by Gaussian noise of varying standard deviation.

\subsection{Models}
Our experimental comparison includes:
\begin{itemize}
    \item \textbf{HGNNs}, operating either fully in hyperbolic space, such as HyboNet~\cite{chen2022fully}, or using tangent-space-based methods, such as HGCN~\cite{chami2019hyperbolic};
    \item \textbf{Euclidean GNNs}, operating in standard Euclidean space, including GCN~\cite{kipf2017semi}, GAT~\cite{velickovic2018gat}, and a Euclidean variant of HyboNet~\cite{chen2022fully} that we denote by HyboNet (Eucl.);
    \item \textbf{MLP} that does not consider the graph structure and thus operates solely on the node features (using Euclidean Neural Network operations).
\end{itemize}

For hyperbolic models, node features are first mapped from Euclidean space to hyperbolic space using the exponential map at the origin. All subsequent message-passing and prediction operations are performed within the chosen hyperbolic manifold. Euclidean counterparts follow identical architectural choices where applicable, differing only in the underlying geometry.

\subsection{Hyperparameter Tuning and Control of Confounding Factors}
\label{app:tuning}
\nips{To minimize confounding effects from architecture, decoder choice, optimization, and capacity, we retune all models, including Euclidean and hyperbolic baselines, under the same hyperparameter-search budget for each model–dataset–task combination, selecting configurations solely by validation performance. For the PDP experiments, where training is slower, we use a two-stage search consisting of $150$ random configurations over a broad grid. We then select the top 10 configurations based on validation performance and perform a focused grid search around their hyperparameters to obtain a refined set of configurations. For LP, NC, and NR, we instead sample $1000$ random hyperparameter configurations per model–dataset–task combination. These configurations are evaluated sequentially on two tuning splits using the task-specific validation criterion, and the selected configuration is then evaluated on the full set of $10$ independent splits. The validation criterion is stress loss for PDP, the average of ROC-AUC and AP for LP, Macro-F1 for NC, and MAE for NR. The search space is shared across models whenever applicable, including learning rate, dropout, weight decay, activation function, normalization, gradient clipping, early stopping, hidden dimension, number of message-passing layers, and, when applicable, the use of attention mechanisms, with only model-specific parameters treated separately.\\\\
As is standard in the literature, all models are trained for a minimum of $100$ epochs and a maximum of $5000$ epochs, with early stopping applied if the validation criterion does not improve for $500$ consecutive epochs. Hyperbolic models are optimized using the Riemannian Adam optimizer~\cite{becigneul2018riemannian}, while Euclidean models use the standard Adam optimizer~\cite{kingma2014adam}.\\
This careful tuning protocol is crucial because, for GNNs, larger models are not necessarily better: increasing depth or hidden dimensionality can lead to oversmoothing, optimization instability, or unnecessary capacity. Consequently, fixing hidden dimension or depth across all methods may itself bias comparisons, especially in the hyperbolic graph learning literature, where Euclidean baselines are often evaluated in undertuned regimes while hyperbolic models are favored by low-dimensional settings. Our protocol avoids this issue by allowing each model family to select its best configuration under an identical tuning budget and by evaluating the final choice across multiple splits rather than relying on a single favorable run. All reported results correspond to the configuration achieving the best validation performance, and experiments were conducted on NVIDIA A100 and H100 GPUs. Full hyperparameter configurations are provided in the accompanying code repository. Generally, the experiments run in minutes and require only a few GB of VRAM, except for PubMed, which is batched to fit within the VRAM for LP. For PDP, experiments take hours and may also use the full available VRAM. \\\\
To further isolate the effect of geometry, we keep the solver/decoder as simple and comparable as possible across methods. PDP uses a distance-based decoder by design, and LP employs the same functional form across all models, with distances computed in the corresponding latent manifold. Thus, the decoder depends on the learned metric but does not introduce additional model-specific expressivity. For NC and NR, we use simple linear decoders, including hyperbolic-compatible linear layers for fully hyperbolic models such as HyboNet. Curvature is handled according to the original implementations: HGCN uses learnable curvature, whereas HyboNet uses fixed curvature, which we do not tune except in a dedicated ablation. Finally, where applicable, we include Euclidean counterparts of hyperbolic architectures (e.g., HyboNet (Eucl.)), ensuring that message-passing structure and capacity remain fixed while only the representation geometry varies. Overall, this controlled protocol aims to ensure that performance differences are not driven by tuning artifacts, architectural mismatches, or biased capacity choices, but instead reflect the contribution of the geometric inductive bias.}




\section{Additional Results}

\subsection{Pairwise Distance Prediction Task}

\begin{figure}[h!]
  \begin{subfigure}{0.45\textwidth}
    \includegraphics[width=\linewidth]{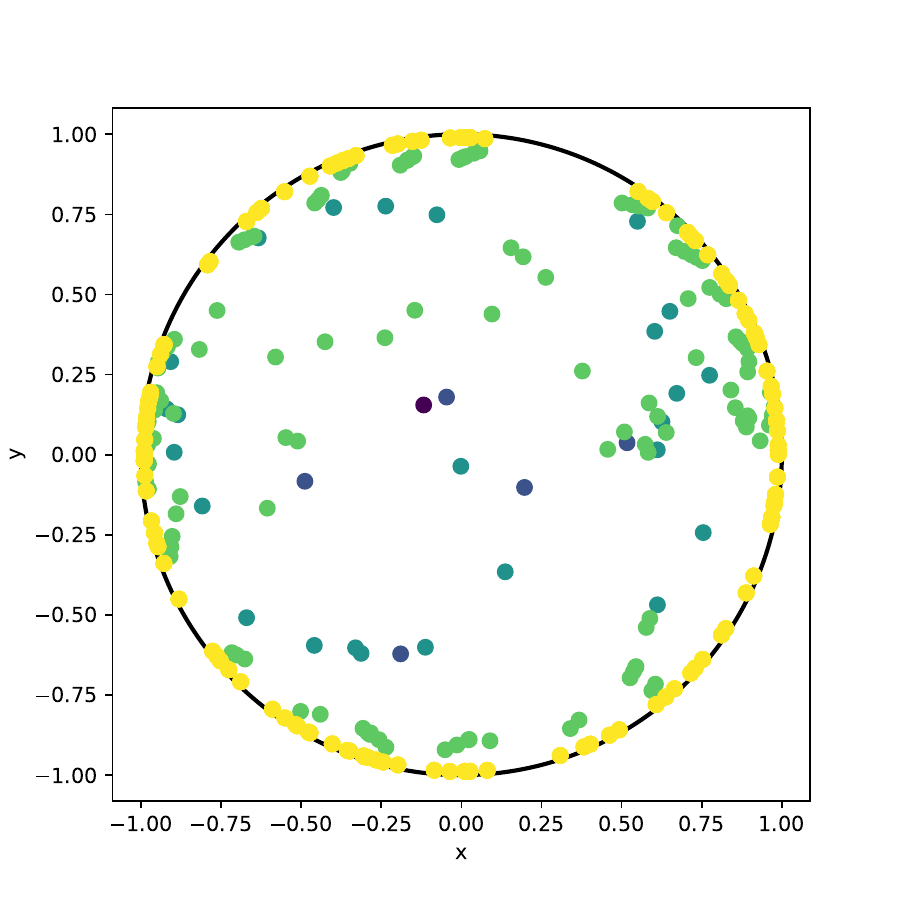}
    \caption{HyboNet, $d=3$}
  \end{subfigure}\hfill
  \begin{subfigure}{0.60\textwidth}
    \includegraphics[width=\linewidth]{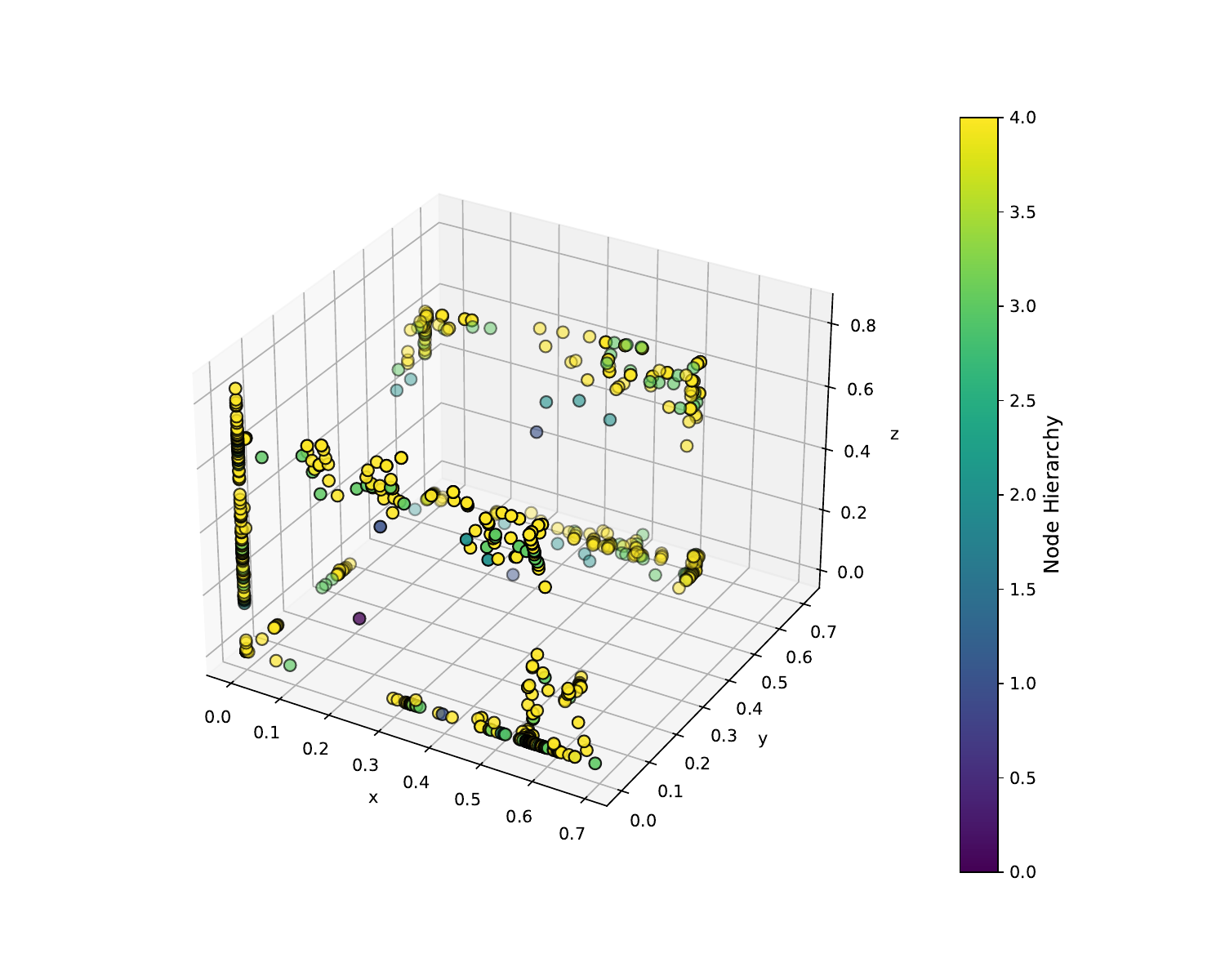}
    \caption{GCN, $d=3$}
  \end{subfigure}

  \caption{Visualization of the embedding spaces learned by HyboNet and GCN models for $d=3$ for the PDP task on the \textit{tree}.}
  \label{fig:embed}
\end{figure}

\label{appendix:pairs_task}
We report the results of the Pairwise Distance Prediction (PDP) task in Tables~\ref{tab:stress_results_tree} and~\ref{tab:stress_results_grid}, evaluated across varying embedding dimensions on both the \textit{tree} and \textit{grid} graphs, and corresponding to Fig.~\ref{fig:stress_vs_embdim}. Figure~\ref{fig:embed} shows the learned embeddings of HyboNet and GCN for the \textit{tree} graph with embedding dimension $d=3$. For HyboNet, the points form $3$d Lorentz space are mapped to the Poincaré disc. Here, node embeddings of close to the root nodes are located closer to the center and further away form the root and leaf nodes positioned toward the boundary, reflecting the underlying hierarchical structure of the graph. In contrast, GCN embeddings tend to occupy a roughly cubic region of the Euclidean space, suggesting an increased geometric distortion of pairwise distances.

As a sanity check, we conduct a \textit{control experiment} in which the target pairwise distances are randomly permuted. This experiment assesses whether the observed performance differences persist in the absence of meaningful geometric signal, thereby helping to disentangle the contribution of geometric inductive bias from architectural effects. The results of the null experiment are reported in Table~\ref{table:null_tree_grid}.

The PDP results can also be viewed from another, more qualitative perspective by explicitly confronting pairwise distances in data versus embedding space. 
To do this we plot the averaged pairwise distances (and the standard deviation) in embedding space conditioned on the true pairwise graph distances. 
Then we perform a linear fit on these points in the plot and report the $R^2$ score. 
The $R^2$ score or the coefficient of determination, measures how much of the variation in the dependent variable (average pairwise distance in embeddings space) is explained by the linear model (which is the true underlying relation here). 
An $R^2$ close to $1$ means the points lie close to the fitted line, while an $R^2$ near $0$ means the line explains little of the data’s variability.
So for us, it measures how faithfully the embedding preserves the graph metric on average. 
Results comparing HyboNet and Euclidean HyboNet are reported in Fig.\ref{fig:r2_regression}. 
At low embedding dimension, HyboNet achieves the highest $R^2$ on the \textit{tree} graph (Table~\ref{table:r2_tree_grid}). 
As the embedding dimension increases, the performance gap progressively narrows and eventually vanishes.

Overall, these results show that when supervision explicitly depends on pairwise distances, message passing can exploit the geometric inductive bias of the latent space to recover low-distortion structure, especially in low-dimensional regimes where representational capacity is limited.

\begin{table}[h]
  \caption{Stress loss ($\downarrow$) (mean $\pm$ std) across embedding dimensionality for the pairwise-distance prediction task on the synthetic tree graph.}
  \label{tab:stress_results_tree}
  \begin{center}
    \begin{small}
      \begin{sc}
      \resizebox{0.99\columnwidth}{!}{
        \begin{tabular}{lcccccc}
        \toprule
        \textbf{Model} & \textbf{3} & \textbf{8} & \textbf{16} &
        \textbf{32} & \textbf{64} & \textbf{128} \\
        \midrule
        GCN 
        & 0.0678 $\pm$ {\scriptsize0.0035}
        & 0.0355 $\pm$ {\scriptsize0.0012}
        & 0.0194 $\pm$ {\scriptsize0.0015}
        & 0.0127 $\pm$ {\scriptsize0.0005}
        & 0.0081 $\pm$ {\scriptsize0.0002}
        & 0.0037 $\pm$ {\scriptsize0.0002}
        \\
        GAT
        & 0.0464 $\pm$ {\scriptsize0.0018}
        & 0.0183 $\pm$ {\scriptsize0.0004}
        & 0.0106 $\pm$ {\scriptsize0.0004}
        & 0.0070 $\pm$ {\scriptsize0.0004}
        & 0.0047 $\pm$ {\scriptsize0.0003}
        & 0.0036 $\pm$ {\scriptsize0.0003}
        \\
        HyboNet (Eucl.)
        & 0.0464 $\pm$ {\scriptsize0.0013}
        & 0.0227 $\pm$ {\scriptsize0.0041}
        & 0.0158 $\pm$ {\scriptsize0.0007}
        & 0.0094 $\pm$ {\scriptsize0.0006}
        & 0.0083 $\pm$ {\scriptsize0.0003}
        & 0.0091 $\pm$ {\scriptsize0.0016}
        \\
        HyboNet 
        & \textbf{0.0383 $\pm$ {\scriptsize 0.0051}}
        & \textbf{0.0058 $\pm$ {\scriptsize0.0007}}
        & \textbf{0.0035 $\pm$ {\scriptsize0.0001}}
        & 0.0034 $\pm$ {\scriptsize0.0002}
        & 0.0034 $\pm$ {\scriptsize0.0002}
        & 0.0035 $\pm$ {\scriptsize0.0001}
        \\
        HGCN
        & 0.0542 $\pm$ {\scriptsize0.0054}
        & 0.0092 $\pm$ {\scriptsize0.0006}
        & 0.0055 $\pm$ {\scriptsize0.0003}
        & \textbf{0.0021 $\pm$ {\scriptsize0.0004}}
        & \textbf{0.0007 $\pm$ {\scriptsize0.0001}}
        & \textbf{0.0006 $\pm$ {\scriptsize0.0001}}
        \\
        \bottomrule
        \end{tabular}
        }
      \end{sc}
    \end{small}
  \end{center}
\end{table}

\begin{table}[h]
  \caption{Stress loss ($\downarrow$) (mean $\pm$ std) across embedding dimensionality for the pairwise-distance prediction task on the synthetic grid graph.}
  \label{tab:stress_results_grid}
  \begin{center}
    \begin{small}
      \begin{sc}
      \resizebox{0.99\columnwidth}{!}{
        \begin{tabular}{lcccccc}
        \toprule
        \textbf{Model} & \textbf{3} & \textbf{8} & \textbf{16} &
        \textbf{32} & \textbf{64} & \textbf{128} \\
        \midrule 
        GCN
        & \textbf{0.0263} $\pm$ {\scriptsize 0.0020}
        & \textbf{0.0128} $\pm$ {\scriptsize 0.0002}
        & \textbf{0.0119} $\pm$ {\scriptsize 0.0001}
        & \textbf{0.0116} $\pm$ {\scriptsize 0.0002}
        & \textbf{0.0106} $\pm$ {\scriptsize 0.0002}
        & \textbf{0.0102} $\pm$ {\scriptsize 0.0001}
        \\
        
        GAT
        & \textbf{0.0235} $\pm$ {\scriptsize 0.0031}
        & \textbf{0.0132} $\pm$ {\scriptsize 0.0006}
        & 0.0120 $\pm$ {\scriptsize 0.0010}
        & \textbf{0.0116} $\pm$ {\scriptsize 0.0005}
        & 0.0114 $\pm$ {\scriptsize 0.0003}
        & 0.0112 $\pm$ {\scriptsize 0.0002}
        \\
        HyboNet (Eucl.)
        & 0.1258 $\pm$ {\scriptsize 0.0664}
        & 0.0206 $\pm$ {\scriptsize 0.0013}
        & 0.0152 $\pm$ {\scriptsize 0.0004}
        & 0.0131 $\pm$ {\scriptsize 0.0001}
        & 0.0122 $\pm$ {\scriptsize 0.0002}
        & 0.0113 $\pm$ {\scriptsize 0.0002}
        \\
        HyboNet
        & 0.1792 $\pm$ {\scriptsize 0.0532}
        & 0.0274 $\pm$ {\scriptsize 0.0048}
        & 0.0184 $\pm$ {\scriptsize 0.0008}
        & 0.0180 $\pm$ {\scriptsize 0.0026}
        & 0.0186 $\pm$ {\scriptsize 0.0017}
        & 0.0181 $\pm$ {\scriptsize 0.0030}
        \\
        
        HGCN
        & 0.1834 $\pm$ {\scriptsize 0.0283}
        & 0.0236 $\pm$ {\scriptsize 0.0120}
        & 0.0124 $\pm$ {\scriptsize 0.0004}
        & 0.0115 $\pm$ {\scriptsize 0.0001}
        & 0.0112 $\pm$ {\scriptsize 0.0002}
        & 0.0112 $\pm$ {\scriptsize 0.0002}
        \\
        \bottomrule
        \end{tabular}
        }
      \end{sc}
    \end{small}
  \end{center}
\end{table}

\begin{table}[h!]
    \caption{Stress loss ($\downarrow$) (mean $\pm$ std) for tree and grid graphs for the label-permutation control experiment, averaged over 10 random seeds.}
    \label{table:null_tree_grid}
  \begin{center}
    \begin{small}
      \begin{sc}
        \begin{tabular}{lcc@{\hspace{2em}}cc}
        \toprule
         & \multicolumn{2}{c}{\textbf{Tree}} & \multicolumn{2}{c}{\textbf{Grid}} \\
        \cmidrule(lr){2-3} \cmidrule(lr){4-5}
        \textbf{Model} & \textbf{3} & \textbf{128} & \textbf{3} & \textbf{128} \\
        \midrule
        GCN 
        & 0.1515 $\pm$ {\scriptsize 0.0048}
        & 0.1528 $\pm$ {\scriptsize 0.0050}
        & 0.5349 $\pm$ {\scriptsize 0.0111}
        & 0.5796 $\pm$ {\scriptsize 0.0079}
        \\
        
        GAT
        & 0.1515 $\pm$ {\scriptsize 0.0048}
        & 0.1515 $\pm$ {\scriptsize 0.0048}
        & 0.5784 $\pm$ {\scriptsize 0.0102}
        & 0.5687 $\pm$ {\scriptsize 0.0057}
        \\
        HyboNet (Eucl.)
        & 0.1577 $\pm$ {\scriptsize 0.0053}
        & 0.1561 $\pm$ {\scriptsize 0.0086}
        & 0.5702 $\pm$ {\scriptsize 0.0066}
        & 0.5824 $\pm$ {\scriptsize 0.0077}
        \\
        HyboNet 
        & 0.1520 $\pm$ {\scriptsize 0.0048}
        & 0.1564 $\pm$ {\scriptsize 0.0054}
        & 0.5767 $\pm$ {\scriptsize 0.0077}
        & 0.5713 $\pm$ {\scriptsize 0.0076}
        \\
        HGCN
        & 0.1522 $\pm$ {\scriptsize 0.0048}
        & 0.1525 $\pm$ {\scriptsize 0.0053}
        & 0.5700 $\pm$ {\scriptsize 0.0054}
        & 0.5772 $\pm$ {\scriptsize 0.0049}
        \\
        \bottomrule
        \end{tabular}
      \end{sc}
    \end{small}
  \end{center}
\end{table}

\begin{table}[h!]
    \caption{$R^2$ score ($\uparrow$) of a linear fit to the embedding distances (averaged per true distance) at embedding dimensions 3 and 128, for tree and grid graphs, averaged over 10 random seeds.}
    \label{table:r2_tree_grid}
  \begin{center}
    \begin{small}
      \begin{sc}
        \begin{tabular}{lcc@{\hspace{2em}}cc}
        \toprule
         & \multicolumn{2}{c}{\textbf{Tree}} & \multicolumn{2}{c}{\textbf{Grid}} \\
        \cmidrule(lr){2-3} \cmidrule(lr){4-5}
        \textbf{Model} & \textbf{3} & \textbf{128} & \textbf{3} & \textbf{128} \\
        \midrule
        GCN 
        & 0.3260 $\pm$ {\scriptsize 0.0491}
        & 0.9737 $\pm$ {\scriptsize 0.0008}
        & 0.9324 $\pm$ {\scriptsize 0.0026}
        & 0.9679 $\pm$ {\scriptsize 0.0004}
        \\
        
        GAT
        & 0.4625 $\pm$ {\scriptsize 0.0033}
        & 0.9778 $\pm$ {\scriptsize 0.0013}
        & 0.9404 $\pm$ {\scriptsize 0.0060}
        & 0.9617 $\pm$ {\scriptsize 0.0005}
        \\
        HyboNet (Eucl.)
        & 0.5006 $\pm$ {\scriptsize 0.0158}
        & 0.9341 $\pm$ {\scriptsize 0.0022}
        & 0.5776 $\pm$ {\scriptsize 0.2634}
        & 0.9664 $\pm$ {\scriptsize 0.0004}
        \\
        HyboNet 
        & 0.5979 $\pm$ {\scriptsize 0.0449}
        & 0.9882 $\pm$ {\scriptsize 0.0007}
        & 0.5129 $\pm$ {\scriptsize 0.1746}
        & 0.9365 $\pm$ {\scriptsize 0.0074}
        \\
        
        HGCN
        & 0.3595 $\pm$ {\scriptsize 0.0527}
        & 0.9992 $\pm$ {\scriptsize 0.0001}
        & 0.3896 $\pm$ {\scriptsize 0.1331}
        & 0.9681 $\pm$ {\scriptsize 0.0012}
        \\
        \bottomrule
        \end{tabular}
      \end{sc}
    \end{small}
  \end{center}
\end{table}

\begin{figure}[h!]
  \centering
  \begin{subfigure}{0.48\textwidth}
    \includegraphics[width=\linewidth]{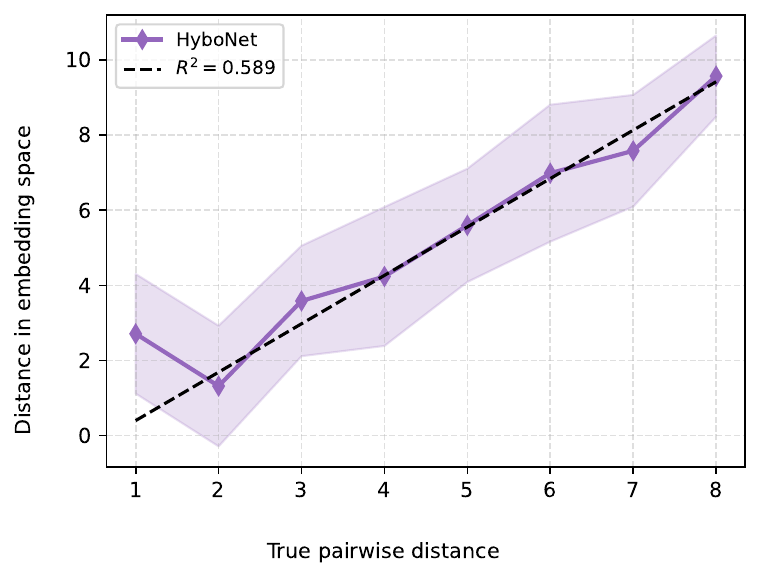}
    \caption{HyboNet, $d=3$}
  \end{subfigure}\hfill
  \begin{subfigure}{0.48\textwidth}
    \includegraphics[width=\linewidth]{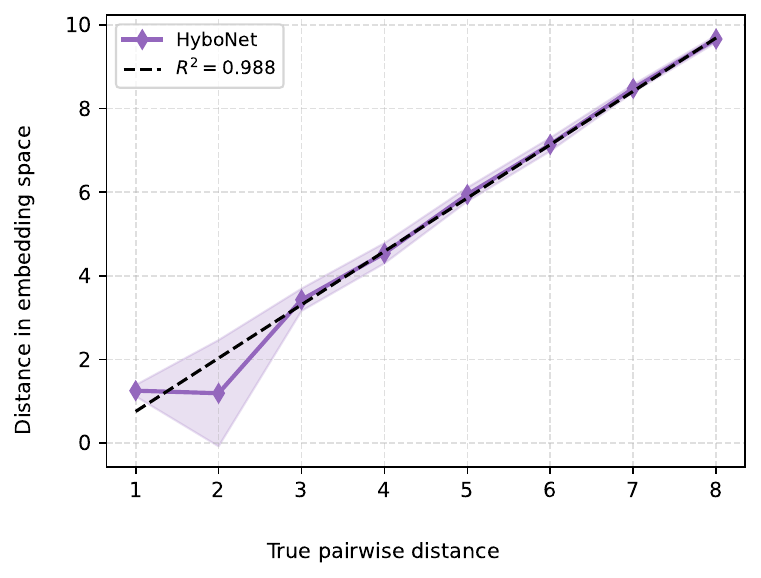}
    \caption{HyboNet, $d=128$}
  \end{subfigure}

  \medskip

  \begin{subfigure}{0.48\textwidth}
    \includegraphics[width=\linewidth]{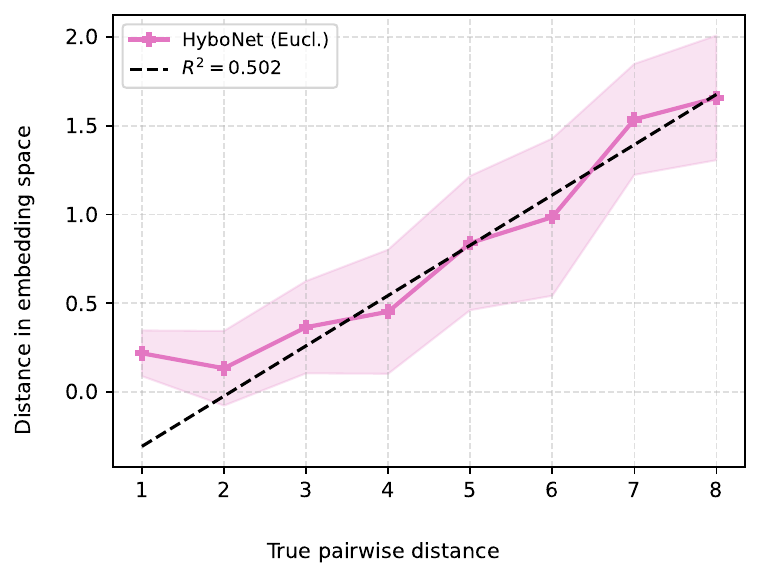}
    \caption{HyboNet (Eucl.), $d=3$}
  \end{subfigure}\hfill
  \begin{subfigure}{0.48\textwidth}
    \includegraphics[width=\linewidth]{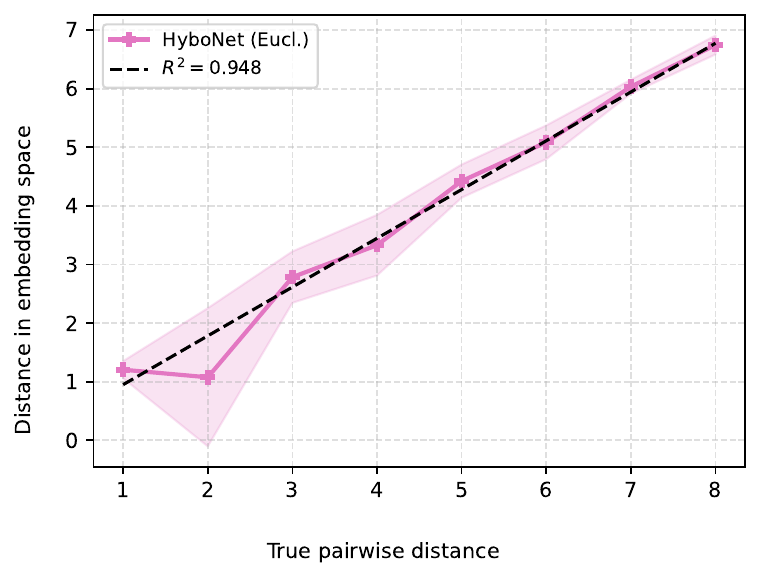}
    \caption{HyboNet (Eucl.), $d=128$}
  \end{subfigure}

  \caption{Line fit of the embedding distances plotted and averaged against the true graph distances for the pairwise–distance prediction task on the \textit{tree}. Results are shown for HyboNet and HyboNet (Eucl.) graphs at low ($d=3$) and high ($d=128$) embedding dimensions. The $R^2$ score measures the goodness of fit of the line, i.e. how faithfully the embedding preserves graph distances.}
  \label{fig:r2_regression}
\end{figure}\newpage

\subsection{Link Prediction}
\label{app:LP}
\paragraph{Real-world datasets.} As mentioned in the main text, in addition to the Receiver Operator Characteristic Area under the Curve (ROC AUC) results, we also report average precision (AP) to give a more comprehensive picture. 
The full LP results on the real word datasets are in Table~\ref{tab:lp_real_roc} for ROC AUC and Table~\ref{tab:lp_real_ap} for AP. 
As for the PDP task, we also include HyboNet (Eucl.) as an additional comparison to isolate the effect of model architecture here.  
The AP results therefore do not offer a completely new perspective on the model's performance, but merely underline the findings already made for ROC AUC.
For full disclosure, we report the exact values of Stress Loss for the distortion analysis for LP. 
The absolute values are in Table~\ref{tab:lp_stress_loss_abs} and the normalized values are in Table~\ref{tab:lp_real_stress_loss_rel}. 
We also report AP and absolute stress loss for the experiments on Disease and Airport, where the features are corrupted by noise in Figure~\ref{fig:real_noisy_app}.
\nips{To analyze the difference in MLP performance at noise level $1.0$ in Figure~\ref{fig:real_noisy}, we compute an SNR-based feature separability score. 
Given a pairwise scoring function $s(\mathbf{x}_i, \mathbf{x}_j)$ (dot product or mean absolute difference), we define
\[
\mathrm{SNR} = \frac{\mu_{\text{pos}} - \mu_{\text{neg}}}{\sigma_{\text{neg}}},
\]
where $\mu_{\text{pos}}$ and $\mu_{\text{neg}}$ denote the mean scores over positive and negative node pairs (i.e., edges vs non-edges), and $\sigma_{\text{neg}}$ is the standard deviation over negative pairs.
As shown in Table~\ref{table:SNR}, Airport retains substantially higher feature separability than Disease at the same noise level, indicating that noise degrades but does not eliminate informative structure. This explains why an MLP, which relies solely on features, still achieves high ROC AUC on Airport even under strong noise.}

\nips{We provide ablations with alternative decoders on two real-world datasets, Disease and Cora, using concat+Linear and concat+MLP decoders (Table~\ref{table:lp_decoder}). We observe that a nonlinear decoder largely removes the hyperbolic advantage on Disease, as expected. On Cora, linear decoders perform poorly across all methods, and no geometry clearly dominates.}

\begin{table}[t]
\centering
\caption{SNR-based feature separability under different noise levels. Higher values indicate better separability.}
\label{table:SNR}
\begin{small}
\begin{sc}
\begin{tabular}{llcccc}
\hline
\textbf{Dataset} & \textbf{Method}& \textbf{0.0} & \textbf{0.5} & \textbf{1.0} & \textbf{Noise only} \\
\hline
\multirow{2}{*}{Airport}
 & dot  & 0.6128 & 0.2404 & 0.1041 & 0.0170 \\
 & diff & 1.1763 & 0.3650 & 0.0979 & 0.0246 \\
\hline
\multirow{2}{*}{Disease}
 & dot  & 0.0826 & 0.0494 & 0.0385 & 0.0250 \\
 & diff & 0.1946 & 0.0533 & 0.0106 & 0.0024 \\
\hline
\end{tabular}
\end{sc}
\end{small}
\end{table}

\begin{table}[t]
\centering
\caption{LP decoder ablation (ROC AUC $\uparrow$).}
\label{table:lp_decoder}
\begin{small}
\begin{sc}
\begin{tabular}{lcccc}
\hline

& \multicolumn{2}{c}{\textbf{Disease}} 
& \multicolumn{2}{c}{\textbf{Cora}} \\
\cmidrule(lr){2-3} \cmidrule(lr){4-5}
\textbf{Method} & \textbf{Linear} & \textbf{MLP} 
& \textbf{Linear} & \textbf{MLP} \\
\hline

MLP     
& 78.56$\pm${\scriptsize 10.68} 
& 97.91$\pm${\scriptsize 0.82} 
& 64.66$\pm${\scriptsize 1.09} 
& 86.29$\pm${\scriptsize 0.98} \\

GCN     
& 93.14$\pm${\scriptsize 2.24} 
& 97.98$\pm${\scriptsize 0.59} 
& 64.82$\pm${\scriptsize 1.87} 
& 89.52$\pm${\scriptsize 1.18} \\

GAT     
& 76.09$\pm${\scriptsize 2.63} 
& 96.17$\pm${\scriptsize 0.88} 
& 64.52$\pm${\scriptsize 1.48} 
& 93.35$\pm${\scriptsize 1.11} \\

HyboNet 
& 92.01$\pm${\scriptsize 1.23} 
& 95.06$\pm${\scriptsize 3.10} 
& 62.61$\pm${\scriptsize 4.77} 
& 93.52$\pm${\scriptsize 0.88} \\

HGCN    
& 87.64$\pm${\scriptsize 1.46} 
& 93.49$\pm${\scriptsize 4.05} 
& 58.05$\pm${\scriptsize 2.11} 
& 88.44$\pm${\scriptsize 1.66} \\

\hline
\end{tabular}
\end{sc}
\end{small}
\end{table}

\paragraph{Synthetic Datasets.}
We evaluate all our models on the $\textit{Tree1111}_{\gamma}$ dataset from \citet{katsman2025shedding}, which varies node feature informativeness and enables probing model behavior across different feature--structure regimes. Similar to the noisy Disease and Airport experiments in real-world settings, these experiments further stress the role of geometry in a controlled synthetic setting. 
The results in Figure~\ref{fig:tree1111} reproduces the qualitative behavior of the real-world datasets across varying feature informativeness. 
All GNNs outperform the MLP for small values of $\gamma$ as they can make use of structural information.
All GNNs achieve smaller stress losses than the features and the MLP for small values of $\gamma$. 
GAT and HyboNet continue to do so for larger $\gamma$, while the MLP largely follows the trend of the features, which naturally achieve lower stress losses for higher $\gamma$ values (features are more informative about the graph structure).
The hyperbolic methods HGCN and HyboNet generally achieve better performance than GCN, but only HyboNet does so by achieving low stress losses.
The notable exception here is GAT, which performs unusually strongly in this setting. 
We attribute this to the nearly orthogonal $1000$-dimensional features used in $\textit{Tree1111}_{\gamma}$, which act as near-unique node identifiers (similar to one-hot encodings); attention can leverage these signals while avoiding excessive smoothing, unlike degree-normalized aggregation as in GCN.
Last, other than on the noisy real-world experiences, HGCN performs better than HyboNet, which we also attribute to the different attention mechanism in HGCN.

\label{app:link prediction}
\begin{table}[t]
\centering
\caption{ROC AUC ($\uparrow$) results for real-world link prediction (LP). We report Gromov $\delta$-hyperbolicity (lower is more hyperbolic).}
\label{tab:lp_real_roc}
\begin{small}
\begin{sc}
\begin{tabular}{lccccc}
\hline
 & \textbf{Disease} & \textbf{Airport} & \textbf{Cora} & \textbf{Pubmed} & \textbf{Citeseer} \\
\textbf{Model}& \textbf{$\delta=0$} & \textbf{$\delta=2$} & \textbf{$\delta=4$} & \textbf{$\delta=4.5$} & \textbf{$\delta=6.5$} \\
\hline
MLP     & 98.43$\pm$0.74 & 97.66$\pm$0.12 & 90.33$\pm$0.94 & 94.52$\pm$0.26 & 91.45$\pm$0.68 \\
GCN     & 91.67$\pm$1.79 & 96.58$\pm$0.35 & 92.98$\pm$1.01 & 95.03$\pm$0.19 & 96.02$\pm$0.50 \\
GAT     & 93.25$\pm$1.59 & 95.67$\pm$0.33 & 93.50$\pm$0.68 & 96.77$\pm$0.18 & 93.32$\pm$0.92 \\
HyboNet (Eucl.) & 79.76$\pm$4.84 & 95.43$\pm$0.29 & 92.92$\pm$0.75 & 95.72$\pm$0.17 & 92.92$\pm$0.76 \\
HyboNet   & 97.99$\pm$0.72 & 98.15$\pm$0.17 & 93.70$\pm$0.58 & 97.08$\pm$0.09 & 91.53$\pm$1.40 \\
HGCN & 95.16$\pm$2.50 & 97.66$\pm$0.15 & 93.88$\pm$1.09 & 96.86$\pm$0.11 & 94.46$\pm$0.61 \\
\hline
\end{tabular}
\end{sc}
\end{small}
\end{table}

\begin{table}[t]
    \caption{AP ($\uparrow$) results for real-world link prediction (LP). We report Gromov  $\delta$-hyperbolicity (lower is more hyperbolic). }
    \label{tab:lp_real_ap}
  \begin{center}
    \begin{small}
      \begin{sc}
        \begin{tabular}{lccccc}
        \hline
         & \textbf{Disease} & \textbf{Airport} & \textbf{Cora} & \textbf{Pubmed} & \textbf{Citeseer} \\
        \textbf{Model}  & $\delta=0$ & $\delta=2$ & $\delta=4$ & $\delta=4.5$ & $\delta=6.5$ \\
        \hline
        MLP     & 97.71$\pm$1.18 & 96.77$\pm$0.22 & 89.12$\pm$1.35 & 93.54$\pm$0.31 & 90.44$\pm$0.71 \\
        GCN     & 85.47$\pm$3.36 & 94.62$\pm$0.63 & 92.57$\pm$1.29 & 94.57$\pm$0.25 & 96.04$\pm$0.61 \\
        GAT     & 90.38$\pm$2.05 & 95.34$\pm$0.41 & 93.81$\pm$0.76 & 96.95$\pm$0.18 & 93.76$\pm$0.98 \\
        HyboNet (Eucl.) & 75.03$\pm$6.59 & 94.52$\pm$0.47 & 92.78$\pm$0.88 & 95.79$\pm$0.13 & 93.80$\pm$0.74 \\
        HyboNet   & 96.54$\pm$1.62 & 97.55$\pm$0.38 & 93.78$\pm$0.66 & 97.01$\pm$0.09 & 92.47$\pm$1.44 \\
        HGCN & 92.60$\pm$3.90 & 96.97$\pm$0.22 & 93.80$\pm$1.25 & 96.71$\pm$0.14 & 94.71$\pm$0.77 \\
        \hline
        \end{tabular}   
      \end{sc}
    \end{small}
  \end{center}
  \vskip -0.1in
\end{table}

\begin{table}[h!]
    \caption{Stress loss ($\downarrow$) when using embeddings from LP to predict pairwise shortest path graph distances with linear regression.}
    \label{tab:lp_stress_loss_abs}
  \begin{center}
    \begin{small}
       \begin{sc}
            \resizebox{0.99\columnwidth}{!}{
        \begin{tabular}{lccccc}
        \hline
         & \textbf{Disease} & \textbf{Airport} & \textbf{Cora} & \textbf{Pubmed} & \textbf{Citeseer} \\
        \textbf{Model}  & $\delta=0$ & $\delta=2$ & $\delta=4$ & $\delta=4.5$ & $\delta=6.5$ \\
        \hline
        Features & 0.0842 & 0.0907 & 0.1507 & 0.0754 & 0.2584 \\
        MLP & 0.0573$\pm$0.0003 & 0.0476$\pm$0.0005 & 0.1498$\pm$0.0000 & 0.0754$\pm$0.0000 & 0.2611$\pm$0.0004 \\
        GCN & 0.0457$\pm$0.0011 & 0.0567$\pm$0.0009 & 0.1501$\pm$0.0001 & 0.0754$\pm$0.0000 & 0.2618$\pm$0.0004 \\
        GAT & 0.0598$\pm$0.0007 & 0.0518$\pm$0.0012 & 0.1495$\pm$0.0001 & 0.0754$\pm$0.0000 & 0.2625$\pm$0.0007 \\
        HyboNet (Eucl.) & 0.0506$\pm$0.0042 & 0.0661$\pm$0.0010 & 0.1563$\pm$0.0001 & 0.0754$\pm$0.0000 & 0.2618$\pm$0.0002 \\
        HyboNet & 0.0316$\pm$0.0022 & 0.0396$\pm$0.0009 & 0.1561$\pm$0.0004 & 0.0754$\pm$0.0000 & 0.2652$\pm$0.0004 \\
        HGCN & 0.0497$\pm$0.0032 & 0.0479$\pm$0.0027 & 0.1502$\pm$0.0021 & 0.0754$\pm$0.0000 & 0.2569$\pm$0.0006 \\
        \hline
        \end{tabular} 
        }
      \end{sc}
    \end{small}
  \end{center}
  \vskip -0.1in
\end{table}

\begin{table}[h!]
\centering
\caption{Stress loss ($\downarrow$) when using embeddings from LP to predict pairwise shortest path graph distances with linear regression. Values are normalized such that $1$ corresponds to the same loss as with the distances computed directly in feature-space.}
\label{tab:lp_real_stress_loss_rel}
    \begin{small}
      \begin{sc}
\begin{tabular}{lccccc}
\hline
\textbf{Model} & \textbf{Disease} & \textbf{Airport} & \textbf{Cora} & \textbf{Pubmed} & \textbf{Citeseer} \\
& \textbf{$\delta=0$} & \textbf{$\delta=2$} & \textbf{$\delta=4$} & \textbf{$\delta=4.5$} & \textbf{$\delta=6.5$} \\
\hline
MLP & 0.68$\pm$0.00 & 0.53$\pm$0.01 & 0.99$\pm$0.00 & 1.00$\pm$0.00 & 1.01$\pm$0.00 \\
GCN & 0.54$\pm$0.01 & 0.63$\pm$0.01 & 1.00$\pm$0.00 & 1.00$\pm$0.00 & 1.01$\pm$0.00 \\
GAT & 0.71$\pm$0.01 & 0.58$\pm$0.01 & 0.99$\pm$0.00 & 1.00$\pm$0.00 & 1.02$\pm$0.00 \\
HyboNet (Eucl.) & 0.60$\pm$0.05 & 0.74$\pm$0.01 & 1.04$\pm$0.00 & 1.00$\pm$0.00 & 1.01$\pm$0.00 \\
HyboNet & 0.38$\pm$0.03 & 0.44$\pm$0.01 & 1.03$\pm$0.00 & 1.00$\pm$0.00 & 1.03$\pm$0.00 \\
HGCN & 0.59$\pm$0.04 & 0.54$\pm$0.03 & 1.00$\pm$0.01 & 1.00$\pm$0.00 & 0.99$\pm$0.00 \\
\hline
\end{tabular}
      \end{sc}
    \end{small}
\end{table}

\begin{figure*}[h]
  \vskip 0.2in
  \begin{center}
    \begin{subfigure}{0.48\textwidth}
        \centerline{\includegraphics[width=\columnwidth]{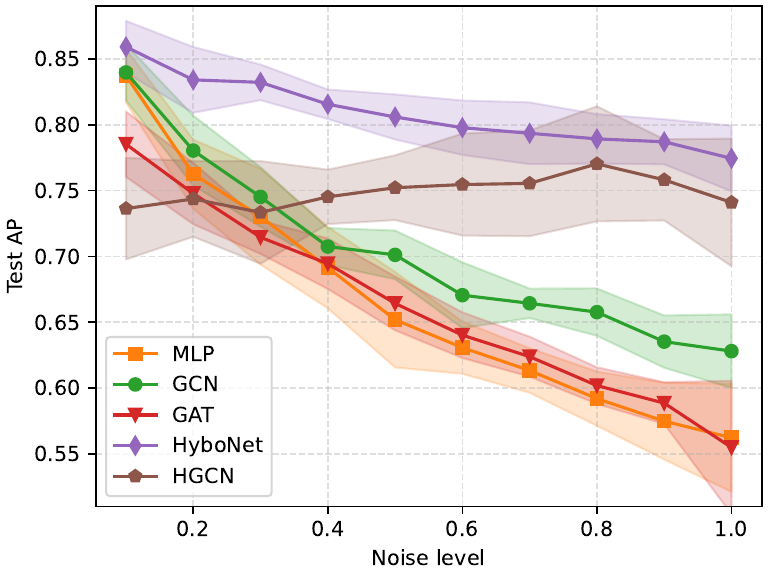}}
        \subcaption{AP ($\uparrow$) for LP on noisy Disease.}
        \label{fig:real_noisy_Disease_ap}
    \end{subfigure}
    \begin{subfigure}{0.48\textwidth}
        \centerline{\includegraphics[width=\columnwidth]{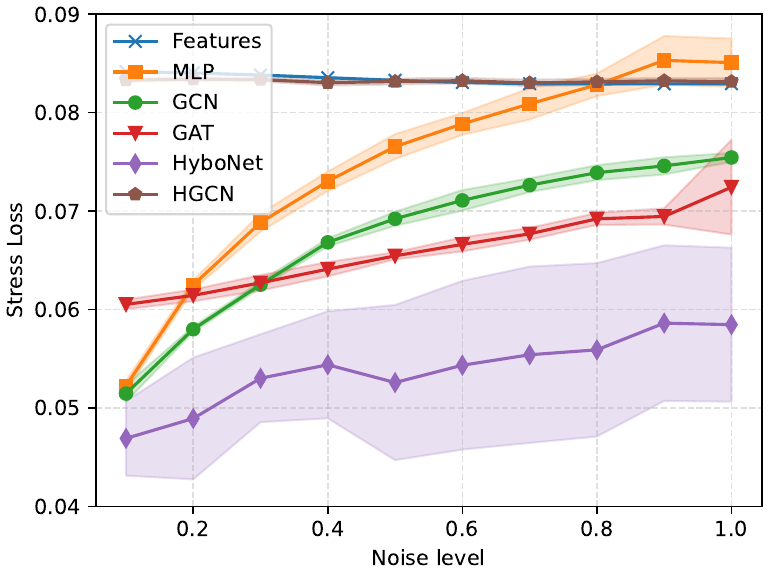}}
        \subcaption{Stress loss ($\downarrow$) for LP on noisy Disease.}
        \label{fig:real_noisy_Disease_stress}
    \end{subfigure}
    \begin{subfigure}{0.48\textwidth}
        \centerline{\includegraphics[width=\columnwidth]{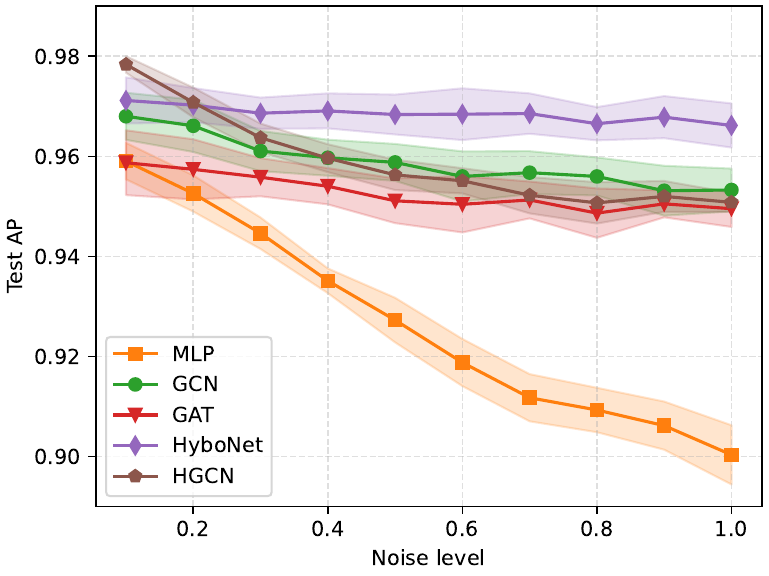}}
        \subcaption{AP ($\uparrow$) for LP on noisy Airport.}
        \label{fig:real_noisy_airport_ap}
    \end{subfigure}
    \begin{subfigure}{0.48\textwidth}
        \centerline{\includegraphics[width=\columnwidth]{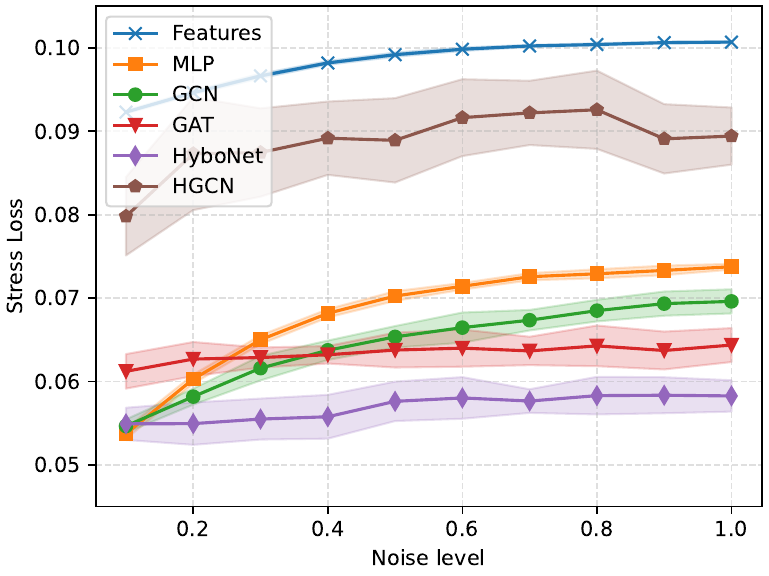}}
        \subcaption{Stress loss ($\downarrow$) on real world LP.}
        \label{fig:real_noisy_airport_stress}
    \end{subfigure}
    \caption{LP results (AP and Stress Loss) on real-world datasets Disease ($\delta=0$) and Airport ($\delta=2$) for different levels of noise added to the features. In all cases the GNNs clearly outperform the MLP and the hyperbolic methods on Disease and HyboNet on Airport outperform the Euclidean models.}
    \label{fig:real_noisy_app}
  \end{center}
\end{figure*}
\newpage
\begin{figure}[h]
  \begin{center}
    \begin{subfigure}{ 0.48\textwidth }
        \centerline{\includegraphics[width=\columnwidth]{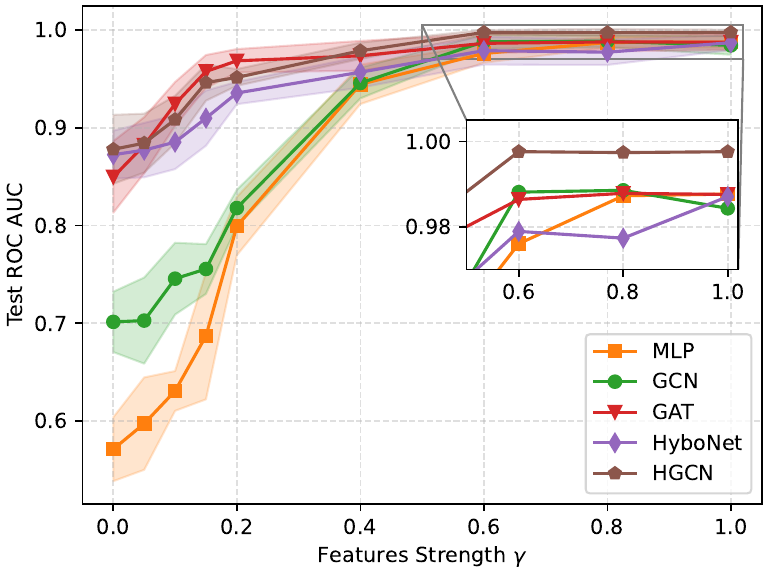}}
        \subcaption{ROC AUC ($\uparrow$) for LP on $\textit{Tree1111}_{\gamma}$ over different $\gamma$ values.}
        \label{fig:tree1111_roc}
    \end{subfigure}
    \begin{subfigure}{ 0.48\textwidth }
        \centerline{\includegraphics[width=\columnwidth]{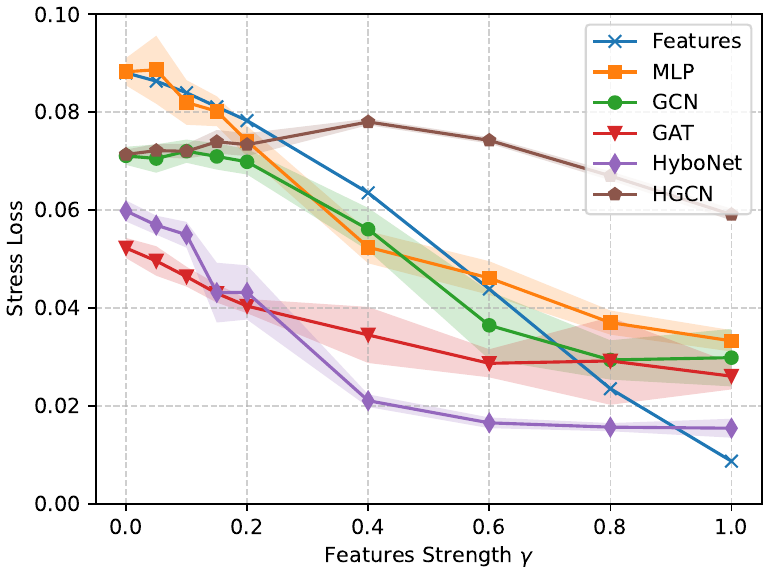}}
        \subcaption{Stress loss ($\downarrow$) for LP on $\textit{Tree1111}_{\gamma}$ over different $\gamma$ values.}
        \label{fig:tree1111_stress_loss}
    \end{subfigure}
      \begin{subfigure}{0.48\textwidth}
        \centerline{\includegraphics[width=\columnwidth]{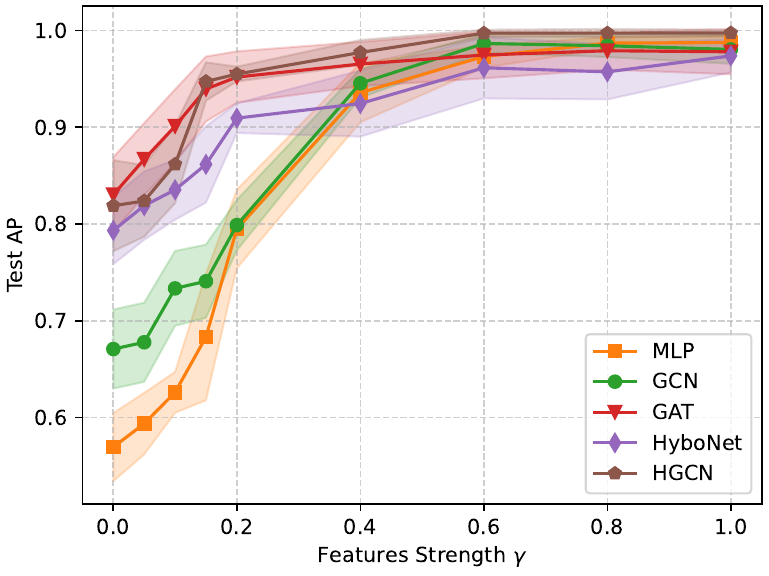}}
        \subcaption{AP ($\uparrow$) for LP on $\textit{Tree1111}_{\gamma}$ over different $\gamma$ values.}
        \label{fig:tree1111_ap}
    \end{subfigure}
    \begin{subfigure}{0.48\textwidth}
        \centerline{\includegraphics[width=\columnwidth]{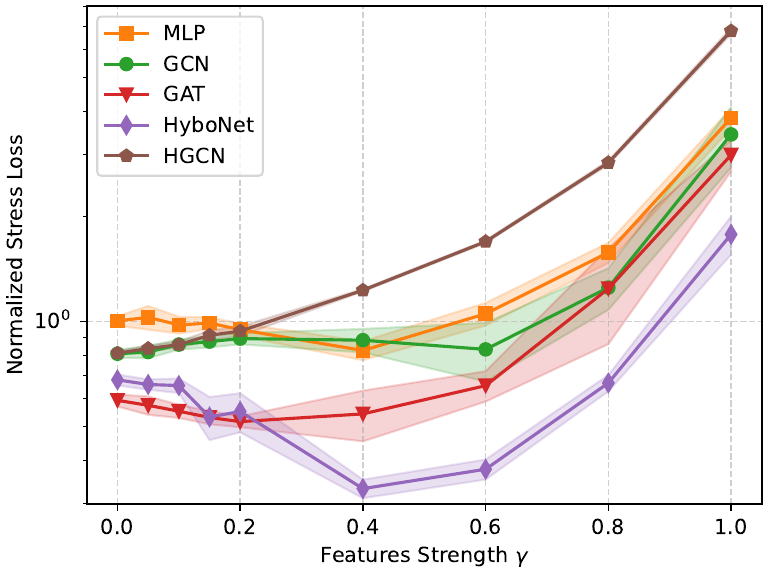}}
        \subcaption{Normalized Stress loss ($\downarrow$) on $\textit{Tree1111}_{\gamma}$ over different $\gamma$ values.}
        \label{fig:tree1111_stress_norm}
    \end{subfigure}
    \caption{LP ROC AUC, AP, Stress Loss and Normalized Stress Loss on the synthetic $\textit{Tree1111}_{\gamma}$ datasets. $\gamma$ encodes the features informativeness. The GNNs outperform MLP for small $\gamma$ and GAT, HGCN and HyboNet perform best, while HyboNet achieves the smallest distortion as measured by the stress loss.}
    \label{fig:tree1111}
  \end{center}
\end{figure}

\subsection{Node Regression}
\label{app:node-regression}
Here we provide additional results on the synthetic tree and grid datasets.
Since for NR, we also train with a stress loss, but on the regression labels from the train set (instead of pair-wise shortest path graph distances), we report the results on the test set in Table~\ref{tab:node_regression_stress}.
The findings remain the same as for MAE, HyboNet dominates on the tree dataset and the GCN/GAT on the grid. 
HGCN performs reasonably well on both datasets due to its learnable curvature. 
The curvature--performance trade-off for HGCN (with fixed curvature) shown in Figure~\ref{fig:hgcn_nr_curvature_combined} is also very similar as for the MAE. 
We additionally added the result for $d=3$.
Tables~\ref{tab:node_regression_stress_norm} and \ref{tab:node_regression_stress} provide a distortion analysis, measuring how faithfully the learned node representations preserve the input graph metric via the stress loss (Section~\ref{sec:pairs_distortion_pdp}).
They show that no method does preserve pair-wise graph distances as measured by the Normalized Stress Loss on the tree and control dataset.
This is expected since the task does not require the method to preserve the global geometry. 
In particular with the root node as anchor, models can completely ignore angular effects and collapse all node embeddings in the same level of the tree. 
On the grid dataset, GCN, GAT and HGCN do reduce distortion to some extend.
By a similar analogy, using a central point as the center of the grid implies the grid geometry has to be preserved to some extent, particularly for smaller distances.

\nips{To extend the analysis to a broader class of graphs, we evaluate NR on Barabási--Albert (BA) and Erdős--Rényi (ER) graphs, using distance to an anchor node as the target. 
HyboNet shows a consistent advantage, particularly on BA graphs. Results are shown in Table\ref{table:ba_er_mae} 
This can be interpreted as a local effect: both BA and ER graphs are asymptotically locally tree-like, making distance-to-anchor tasks more naturally aligned with hyperbolic representations.}

\begin{table*}[h!]
\caption{Node Regression test stress loss ($\downarrow$) results on the synthetic tree and grid dataset for embedding dimension $3$ and $\leq 128$.}
\centering 
\begin{small}
\begin{sc}
\begin{tabular}{l c c c c c}
\hline
 & \multicolumn{2}{c}{\textbf{Tree}} & \multicolumn{2}{c}{\textbf{Grid}} & \textbf{Control Tree}\\
\cmidrule(lr){2-3} \cmidrule(lr){4-5} \cmidrule(lr){6-6}
\textbf{Model} & \textbf{3} & \textbf{$\leq$128} & \textbf{3} & \textbf{$\leq$128}& \textbf{$\leq$128} \\
\hline
MLP     
& 0.2144$\pm$\scriptsize0.0076 
& 0.2236$\pm$\scriptsize0.0086 
& 0.5772$\pm$\scriptsize0.1847 
& 0.7364$\pm$\scriptsize0.3329 
& 0.2319$\pm$\scriptsize0.0215\\

GCN     
& 0.0375$\pm$\scriptsize0.0536 
& 0.0341$\pm$\scriptsize0.0542 
& 0.0029$\pm$\scriptsize0.0010 
& 0.0027$\pm$\scriptsize0.0009 
& 0.2229$\pm$\scriptsize0.0196\\

GAT     
& 0.0708$\pm$\scriptsize0.0713 
& 0.0104$\pm$\scriptsize0.0061 
& 0.0053$\pm$\scriptsize0.0044 
& 0.0022$\pm$\scriptsize0.0012 
& 0.2184$\pm$\scriptsize0.0163\\

HyboNet 
& 0.0044$\pm$\scriptsize0.0046 
& 0.0008$\pm$\scriptsize0.0007 
& 0.0583$\pm$\scriptsize0.0263 
& 0.0069$\pm$\scriptsize0.0040 
& 0.2198$\pm$\scriptsize0.0125\\

HGCN    
& 0.0243$\pm$\scriptsize0.0072 
& 0.0057$\pm$\scriptsize0.0020 
& 0.0039$\pm$\scriptsize0.0022 
& 0.0019$\pm$\scriptsize0.0014 
& 0.2325$\pm$\scriptsize0.0217\\
\hline
\end{tabular}
\end{sc}
\end{small}
\label{tab:node_regression_test_stress}
\end{table*}

\begin{table*}[h!]
\caption{Node Regression MAE ($\downarrow$) results on Barab\'asi--Albert and Erd\H{o}s--R\'enyi graphs for embedding dimension $3$ and $\leq 128$.}
\centering 
\begin{small}
\begin{sc}
\begin{tabular}{l c c c c}
\hline
& \multicolumn{2}{c}{\textbf{Barab\'asi--Albert}} 
& \multicolumn{2}{c}{\textbf{Erd\H{o}s--R\'enyi}} \\
\cmidrule(lr){2-3} \cmidrule(lr){4-5}
\textbf{Model} & \textbf{3} & \textbf{$\leq$128} & \textbf{3} & \textbf{$\leq$128} \\
\hline

MLP     
& 0.4782$\pm$\scriptsize0.1141 
& 0.4870$\pm$\scriptsize0.0398 
& 0.6075$\pm$\scriptsize0.0736 
& 0.5221$\pm$\scriptsize0.0805 \\

GCN     
& 0.3812$\pm$\scriptsize0.0819 
& 0.3690$\pm$\scriptsize0.0720 
& 0.5391$\pm$\scriptsize0.0580 
& 0.4910$\pm$\scriptsize0.0485 \\

GAT     
& 0.3951$\pm$\scriptsize0.0895 
& 0.2878$\pm$\scriptsize0.0377 
& 0.5589$\pm$\scriptsize0.0518 
& 0.3850$\pm$\scriptsize0.1349 \\

HyboNet 
& 0.3621$\pm$\scriptsize0.0859 
& 0.2260$\pm$\scriptsize0.0583 
& 0.4075$\pm$\scriptsize0.1024 
& 0.3913$\pm$\scriptsize0.1403 \\

HGCN    
& 0.4062$\pm$\scriptsize0.0972 
& 0.3993$\pm$\scriptsize0.0804 
& 0.5290$\pm$\scriptsize0.0279 
& 0.4636$\pm$\scriptsize0.0281 \\

\hline
\end{tabular}
\end{sc}
\end{small}
\label{table:ba_er_mae}
\end{table*}

\begin{figure}[h!]
  \centering
    \begin{subfigure}{0.3\textwidth}
        \centering
        \includegraphics[width=\columnwidth]{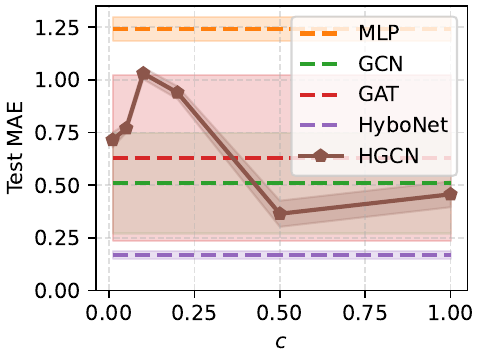}
        \subcaption{MAE, Dimension $3$.}
        \label{fig:hgcn_nr_curvature_mae_dim3}
    \end{subfigure}
    \hfill
    \begin{subfigure}{0.3\textwidth}
        \centering
        \includegraphics[width=\columnwidth]{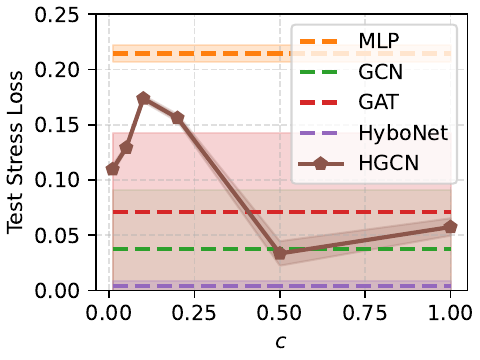}
        \subcaption{Stress, Dimension $3$.}
        \label{fig:hgcn_nr_curvature_stress_dim3}
    \end{subfigure}
    \hfill
    \begin{subfigure}{0.3\textwidth}
        \centering
        \includegraphics[width=\columnwidth]{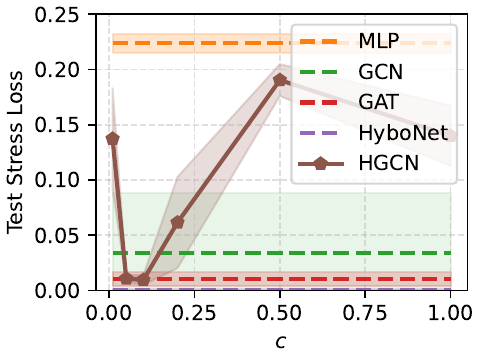}
        \subcaption{Stress, Dimension $128$.}
        \label{fig:hgcn_nr_curvature_stress_high_dim}
    \end{subfigure}

    \caption{NR performance of HGCN vs. its fixed curvature magnitude $c$ on the synthetic tree dataset. We report NR MAE ($\downarrow$) for hidden dimension $3$ (\subref{fig:hgcn_nr_curvature_mae_dim3}) and test stress loss ($\downarrow$) for hidden dimension $3$ (\subref{fig:hgcn_nr_curvature_stress_dim3}) and $128$ (\subref{fig:hgcn_nr_curvature_stress_high_dim}).}
    \label{fig:hgcn_nr_curvature_combined}
\end{figure}

\begin{table}[h]
\caption{Normalized Stress Loss ($\downarrow$) on the synthetic tree and grid dataset using NR embeddings for dimension $3$ and $\leq128$.}
\centering
\begin{sc}
\small
\begin{tabular}{l c c c c c}
\hline
& \multicolumn{3}{c}{\textbf{Tree}} & \multicolumn{2}{c}{\textbf{Grid}} \\
\cmidrule(lr){2-4} \cmidrule(lr){5-6}
\textbf{Model} 
& \textbf{3} 
& \textbf{$\leq$128} 
& \textbf{Control} 
& \textbf{3} 
& \textbf{$\leq$128} \\
\hline

MLP      
& 1.03$\pm$\scriptsize0.07 
& 1.30$\pm$\scriptsize0.08 
& 1.03$\pm$\scriptsize0.02
& 1.18$\pm$\scriptsize0.06 
& 1.06$\pm$\scriptsize0.01 \\

GCN      
& 1.21$\pm$\scriptsize0.03 
& 1.23$\pm$\scriptsize0.09 
& 1.31$\pm$\scriptsize0.08
& 0.64$\pm$\scriptsize0.01 
& 0.97$\pm$\scriptsize0.01 \\

GAT      
& 1.29$\pm$\scriptsize0.09 
& 1.38$\pm$\scriptsize0.10 
& 1.16$\pm$\scriptsize0.09
& 0.63$\pm$\scriptsize0.02
& 0.63$\pm$\scriptsize0.02 \\

HyboNet  
& 1.40$\pm$\scriptsize0.01 
& 1.49$\pm$\scriptsize0.03 
& 1.08$\pm$\scriptsize0.13
& 1.04$\pm$\scriptsize0.09 
& 0.93$\pm$\scriptsize0.06 \\

HGCN     
& 1.25$\pm$\scriptsize0.15 
& 1.23$\pm$\scriptsize0.23 
& 0.90$\pm$\scriptsize0.03
& 0.83$\pm$\scriptsize0.11 
& 0.86$\pm$\scriptsize0.15 \\

\hline
\end{tabular}
\end{sc}
\label{tab:node_regression_stress_norm}
\end{table}

\begin{table*}[h]
\caption{Absolute Stress Loss ($\downarrow$) on the synthetic tree and grid dataset using the embeddings of dimensions $3$ and $\leq 128$ from Node Regression.}
\centering
\begin{small}
\begin{sc}
\begin{tabular}{l c c c c c}
\hline
 & \multicolumn{2}{c}{\textbf{Tree}} & \multicolumn{2}{c}{\textbf{Grid}} & \textbf{Control Tree}\\
\cmidrule(lr){2-3} \cmidrule(lr){4-5} \cmidrule(lr){6-6}
\textbf{Model} & \textbf{3} & \textbf{$\leq$128} & \textbf{3} & \textbf{$\leq$128}& \textbf{$\leq$128} \\
\hline
Features 
& 0.1192
& 0.1192
& 0.5791 
& 0.5791 
& 0.1192\\

MLP      
& 0.1226$\pm$\scriptsize0.0085 
& 0.1550$\pm$\scriptsize0.0100 
& 0.6860$\pm$\scriptsize0.0368 
& 0.6121$\pm$\scriptsize0.0074 
& 0.1231$\pm$\scriptsize0.0030\\

GCN      
& 0.1439$\pm$\scriptsize0.0030 
& 0.1467$\pm$\scriptsize0.0105 
& 0.3731$\pm$\scriptsize0.0075 
& 0.5625$\pm$\scriptsize0.0077 
& 0.1560$\pm$\scriptsize0.0096\\

GAT      
& 0.1539$\pm$\scriptsize0.0104 
& 0.1641$\pm$\scriptsize0.0121 
& 0.3637$\pm$\scriptsize0.0087 
& 0.3667$\pm$\scriptsize0.0093 
& 0.1380$\pm$\scriptsize0.0105\\

HyboNet  
& 0.1665$\pm$\scriptsize0.0006 
& 0.1775$\pm$\scriptsize0.0039 
& 0.6049$\pm$\scriptsize0.0534 
& 0.5409$\pm$\scriptsize0.0355 
& 0.1291$\pm$\scriptsize0.0155\\

HGCN     
& 0.1486$\pm$\scriptsize0.0181 
& 0.1463$\pm$\scriptsize0.0270 
& 0.4800$\pm$\scriptsize0.0638 
& 0.5007$\pm$\scriptsize0.0873 
& 0.1077$\pm$\scriptsize0.0038\\

\hline
\end{tabular}
\end{sc}
\end{small}
\label{tab:node_regression_stress}
\end{table*}

\newpage
\subsection{Node Classification}
\label{app: node classification}
\paragraph{Real-world datasets.} As mentioned in the main text, in addition to the Macro F1 results, we also report accuracy (Acc) to give a more comprehensive picture. 
The full NC results on the real word datasets are in Table~\ref{tab:nc_real_f1} for Macro F1 and Table~\ref{tab:nc_real_acc} for Acc. 
As for the PDP task, we also include HyboNet (Eucl.) as an additional comparison to isolate the effect of model architecture here.  
The Acc results do not offer a completely new perspective on the model's performance, but merely underline the findings already made for Macro F1.
The reason for this is simply because most of these dataset are not very imbalanced, with the exception of Disease.
If a model ignores the minority class, the accuracy results can still be good.

On the Disease* dataset with the $70/15/15\%$ split that keeps the ratio of the two classes in all splits, we can see a larger difference between the Macro F1 and Accuracy columns in the two tables (Table~\ref{tab:nc_real_f1} and Table~\ref{tab:nc_real_acc}). 
All models (with the exception of the MLP) achieve almost equally good performance w.r.t. to accuracy but using Macro F1, the differences become more pronounced. 
This is not the case for the old Disease results using the $70/5/25\%$ split, with balanced $50/50\%$ ratio in validation and test set, where naturally the F1 and the Accuracy results match more closely. 
We would also like to note, that the performance advantage of HyboNet is much more pronounced in this setting.
This likely arises due to the very few training samples of the minority class in train set (due to the old splitting technique).
In this case, the performance comes more from the inductive bias in a heavily restricted sample regime (low sample complexity, which is something we proposed as future work), rather than from geometry of the task.
For precisely this reason, we unified the splitting (sufficiently lrage data to learn in all cases), resulting in a much more fair scenario to isolate the effect of geometry--task alignment.

Again, as for LP, for full disclosure, we report the exact values of Stress Loss for the distortion analysis for NC. 
The absolute values are in Table~\ref{tab:nc_stress_loss_abs} and the normalized values are in Table~\ref{tab:nc_real_stress_loss_rel}.
No model is able to significantly reduce distortion for any dataset.

\nips{We provide an ablation with alternative decoders (concat+Linear and concat+MLP), with results shown in Table~\ref{table:nc_decoder}. The performance differences are marginal across different decoders.}

\begin{table}[t]
\centering
\caption{NC decoder ablation (Test F1 $\uparrow$).}
\label{table:nc_decoder}
\begin{small}
\begin{sc}
\begin{tabular}{lccccc}
\hline
& \multicolumn{5}{c}{\textbf{MLP Decoder}} \\
\cmidrule(lr){2-6}
\textbf{Model} & \textbf{Disease} & \textbf{Airport} & \textbf{Cora} & \textbf{Pubmed} & \textbf{Citeseer} \\
\hline

GCN     
& 83.01$\pm${\scriptsize 6.32} & 91.38$\pm${\scriptsize 1.53} & 74.59$\pm${\scriptsize 0.96} & 75.27$\pm${\scriptsize 1.18} & 61.86$\pm${\scriptsize 1.39} \\

GAT     
& 64.31$\pm${\scriptsize 5.77} & 89.82$\pm${\scriptsize 1.57} & 76.98$\pm${\scriptsize 1.41} & 75.65$\pm${\scriptsize 1.50} & 62.75$\pm${\scriptsize 1.71} \\

HyboNet 
& 91.11$\pm${\scriptsize 3.20} & 90.21$\pm${\scriptsize 1.63} & 71.63$\pm${\scriptsize 4.16} & 78.30$\pm${\scriptsize 1.68} & 53.24$\pm${\scriptsize 2.75} \\

HGCN    
& 88.90$\pm${\scriptsize 5.22} & 91.19$\pm${\scriptsize 2.01} & 77.53$\pm${\scriptsize 0.91} & 77.94$\pm${\scriptsize 1.05} & 62.92$\pm${\scriptsize 1.98} \\

\hline
& \multicolumn{5}{c}{\textbf{Linear Decoder}} \\
\cmidrule(lr){2-6}

GCN     
& 84.43$\pm${\scriptsize 4.86} & 91.25$\pm${\scriptsize 1.53} & 75.89$\pm${\scriptsize 0.85} & 75.78$\pm${\scriptsize 1.78} & 63.20$\pm${\scriptsize 1.74} \\

GAT     
& 63.29$\pm${\scriptsize 8.34} & 89.78$\pm${\scriptsize 1.07} & 78.79$\pm${\scriptsize 1.29} & 75.77$\pm${\scriptsize 1.56} & 63.94$\pm${\scriptsize 1.49} \\

HyboNet 
& 87.13$\pm${\scriptsize 7.01} & 90.66$\pm${\scriptsize 0.90} & 77.55$\pm${\scriptsize 1.40} & 78.18$\pm${\scriptsize 1.54} & 61.18$\pm${\scriptsize 1.54} \\

\hline
\end{tabular}
\end{sc}
\end{small}
\end{table}

\paragraph{Synthetic Datasets.}
We evaluate all our models on the synthetic tree graph we used in our NR experiments, the results are summarized in Table~\ref{tab:node_classification_synthetic}.
Different from NR, we could not verify the performance advantage of HGNNs in this setting.
In the embedding dimension $\leq128$ setting, GCN, HyboNet and HGCN are all able to perform perfect classification.
Again they do this without preserving the graph structure as indicated by the (Normalized) Stress Loss.
If we constrain the embedding dimension to $d=3$, the results seem to indicate some advantage of hyperbolic methods, but with a few caveats.
The results suggest that HGCN clearly performs best, with HyboNet following, but checking the standard deviations, GCN could also be among the best. 
The reason for this that across the $10$ splits, all models (except for HGCN and MLP), perform quite well, while on others they basically fail to learn the task.
Now the hyperparameters where chosen only across $2$ runs, on which all models (except MLP) performed quite well. 
All in all, the results suggest that in future work a more in-depth analysis is necessary to make a clear statement, maybe also considering sample complexity.

\begin{table}[t]
    \caption{Macro F1 scores ($\uparrow$) for real-world node classification (NC). We report Gromov  $\delta$-hyperbolicity (lower is more hyperbolic).}
    \label{tab:nc_real_f1}
  \begin{center}
    \begin{small}
       \begin{sc}
            \resizebox{0.99\columnwidth}{!}{
        \begin{tabular}{lcccccc}
        \hline
        & \textbf{Disease}& \textbf{Disease*} & \textbf{Airport} & \textbf{Cora} & \textbf{Pubmed} & \textbf{Citeseer} \\
        \textbf{Model}  & $\delta=0$ &$\delta=0$ & $\delta=2$ & $\delta=4$ & $\delta=4.5$ & $\delta=6.5$ \\
        \hline
        MLP &  77.12$\pm$7.53 &48.62$\pm$5.06 & 88.79$\pm$1.97 & 55.63$\pm$1.50 & 70.45$\pm$1.12 & 54.29$\pm$1.14 \\
        GCN & 89.14$\pm$2.33 & 89.50$\pm$4.82 & 92.46$\pm$0.86 & 80.09$\pm$1.01 & 78.49$\pm$1.47 & 64.95$\pm$1.50 \\
        GAT & 86.78$\pm$9.14 & 85.71$\pm$5.50 & 88.33$\pm$5.73 & 80.71$\pm$1.22 & 78.44$\pm$1.88 & 65.16$\pm$1.48 \\
        HyboNet (Eucl.) & 89.18$\pm$1.56 & 90.51$\pm$3.74 & 90.54$\pm$1.40 & 78.65$\pm$1.04 & 78.55$\pm$1.28 & 62.51$\pm$1.67 \\
        HyboNet & 94.42$\pm$1.70& 90.48$\pm$5.12 & 90.87$\pm$1.10 & 78.72$\pm$1.01 & 78.33$\pm$1.97 & 62.98$\pm$1.48 \\
        HGCN & 89.52$\pm$1.60 & 90.51$\pm$3.45 & 90.92$\pm$2.04 & 78.34$\pm$0.95 & 78.21$\pm$1.47 & 63.29$\pm$2.07 \\
        \hline
        \end{tabular}
        }
      \end{sc}
    \end{small}
  \end{center}
  \vskip -0.1in
\end{table}

\begin{table}[t]
    \caption{Accuracy ($\uparrow$) results for node classification (NC) tasks. We report Gromov  $\delta$-hyperbolicity (lower is more hyperbolic). Note that accuracy on Disease is less meaningful in comparison to the other models than the one of Disease* due to class imbalance effects.}
    \label{tab:nc_real_acc}
  \begin{center}
    \begin{small}
      \ \begin{sc}
            \resizebox{0.99\columnwidth}{!}{
        \begin{tabular}{lcccccc}
        \hline
         & \textbf{Disease}& \textbf{Disease*} & \textbf{Airport} & \textbf{Cora} & \textbf{Pubmed} & \textbf{Citeseer} \\
        \textbf{Model}  & $\delta=0$ &$\delta=0$ & $\delta=2$ & $\delta=4$ & $\delta=4.5$ & $\delta=6.5$ \\
        \hline
        MLP & 71.04$\pm$4.30& 60.45$\pm$3.14 & 89.48$\pm$2.70 & 57.40$\pm$1.73 & 71.03$\pm$1.14 & 56.86$\pm$1.02 \\
        GCN & 89.88$\pm$3.04& 95.41$\pm$2.36 & 93.18$\pm$0.98 & 81.08$\pm$0.76 & 78.33$\pm$1.24 & 68.45$\pm$1.40 \\
        GAT & 87.69$\pm$8.41& 94.01$\pm$2.28 & 89.77$\pm$4.63 & 81.86$\pm$1.13 & 77.95$\pm$1.79 & 68.56$\pm$1.22 \\
        HyboNet (Eucl.) & 89.46$\pm$1.99& 96.11$\pm$1.46 & 91.51$\pm$1.35 & 79.84$\pm$1.06 & 78.04$\pm$1.13 & 66.35$\pm$1.30 \\
        HyboNet & 94.50$\pm$1.85& 95.99$\pm$2.23 & 92.05$\pm$1.54 & 79.89$\pm$1.14 & 77.91$\pm$1.81 & 66.76$\pm$1.95 \\
        HGCN & 89.92$\pm$1.28& 96.18$\pm$1.40 & 91.69$\pm$1.73 & 79.48$\pm$0.66 & 78.51$\pm$1.61 & 66.94$\pm$1.70 \\
        \hline
        \end{tabular}
        }
      \end{sc}
    \end{small}
  \end{center}
  \vskip -0.1in
\end{table}

\begin{table}[t]
    \caption{Absolute Stress Loss ($\downarrow$) results of the real-world datasets when using embeddings from NC.}
    \label{tab:nc_stress_loss_abs}
  \begin{center}
    \begin{small}
      \begin{sc}
            \resizebox{0.99\columnwidth}{!}{
        \begin{tabular}{lcccccc}
        \hline
         & \textbf{Disease}& \textbf{Disease*} & \textbf{Airport} & \textbf{Cora} & \textbf{Pubmed} & \textbf{Citeseer} \\
        \textbf{Model}  & $\delta=0$& $\delta=0$ & $\delta=2$ & $\delta=4$ & $\delta=4.5$ & $\delta=6.5$ \\
        \hline
        Features & 0.2301& 0.2301 & 0.0907 & 0.1508 & 0.0754 & 0.2584 \\
        MLP & 0.4700$\pm$0.1301& 0.4249$\pm$0.1156 & 0.0833$\pm$0.0013 & 0.2820$\pm$0.0077 & 0.0754$\pm$0.0000 & 0.2572$\pm$0.0005 \\
        GCN & 0.3098$\pm$0.0298& 0.3465$\pm$0.0353 & 0.0930$\pm$0.0016 & 0.1495$\pm$0.0001 & 0.0754$\pm$0.0000 & 0.2652$\pm$0.0007 \\
        GAT & 0.3057$\pm$0.0829& 0.3447$\pm$0.0677 & 0.0832$\pm$0.0101 & 0.1503$\pm$0.0004 & 0.0754$\pm$0.0000 & 0.2698$\pm$0.0009 \\
        HyboNet (Eucl.) & 0.2858$\pm$0.0270& 0.3216$\pm$0.0184 & 0.0937$\pm$0.0011 & 0.1563$\pm$0.0001 & 0.0754$\pm$0.0000 & 0.2701$\pm$0.0007 \\
        HyboNet & 0.2778$\pm$0.0270& 0.2457$\pm$0.0219 & 0.0644$\pm$0.0042 & 0.1511$\pm$0.0001 & 0.0754$\pm$0.0000 & 0.2653$\pm$0.0011 \\
        HGCN & 0.2861$\pm$0.0245& 0.2627$\pm$0.0324 & 0.0834$\pm$0.0019 & 0.1512$\pm$0.0027 & 0.0754$\pm$0.0000 & 0.2578$\pm$0.0045 \\
        \hline
        \end{tabular}
        }
      \end{sc}
    \end{small}
  \end{center}
  \vskip -0.1in
\end{table}

\begin{table}[t]
    \caption{Normalized Stress Loss ($\downarrow$) results of the real-world datasets when using embeddings from NC. Values are normalized such that $1$ corresponds to the same loss as with the distances using directly the features.}
    \label{tab:nc_real_stress_loss_rel}
  \begin{center}
    \begin{small}
      \begin{sc}
        \begin{tabular}{lcccccc}
        \hline
          & \textbf{Disease}& \textbf{Disease*} & \textbf{Airport} & \textbf{Cora} & \textbf{Pubmed} & \textbf{Citeseer} \\
        \textbf{Model}  & $\delta=0$ &$\delta=0$ & $\delta=2$ & $\delta=4$ & $\delta=4.5$ & $\delta=6.5$ \\
        \hline
        MLP & 2.04$\pm$0.57& 1.85$\pm$0.50 & 0.93$\pm$0.01 & 1.87$\pm$0.05 & 1.00$\pm$0.00 & 0.99$\pm$0.00 \\
        GCN & 1.35$\pm$0.13& 1.51$\pm$0.15 & 1.04$\pm$0.02 & 0.99$\pm$0.00 & 1.00$\pm$0.00 & 1.03$\pm$0.00 \\
        GAT & 1.33$\pm$0.36& 1.50$\pm$0.29 & 0.93$\pm$0.11 & 1.00$\pm$0.00 & 1.00$\pm$0.00 & 1.04$\pm$0.00 \\
        HyboNet (Eucl.) & 1.24$\pm$0.12& 1.40$\pm$0.08 & 1.05$\pm$0.01 & 1.04$\pm$0.00 & 1.00$\pm$0.00 & 1.05$\pm$0.00 \\
        HyboNet & 1.21$\pm$0.12& 1.07$\pm$0.10 & 0.72$\pm$0.05 & 1.00$\pm$0.00 & 1.00$\pm$0.00 & 1.03$\pm$0.00 \\
        HGCN & 1.24$\pm$0.11& 1.14$\pm$0.14 & 0.93$\pm$0.02 & 1.00$\pm$0.02 & 1.00$\pm$0.00 & 1.00$\pm$0.02 \\
        \hline
        \end{tabular}
      \end{sc}
    \end{small}
  \end{center}
  \vskip -0.1in
\end{table}

\begin{table}[h]
\caption{Macro F1 scores and Accuracy ($\uparrow$) and (Normalized) Stress Loss ($\downarrow$) for node classification (NC) on the synthetic tree dataset also used for node regresion. The features loss for the normalization is $0.1192$.}
\centering
\begin{sc}
\resizebox{0.99\columnwidth}{!}{
\begin{tabular}{l c c c c c c c c}
\hline
 & \multicolumn{2}{c}{Macro F1} & \multicolumn{2}{c}{Accuracy} & \multicolumn{2}{c}{Stress Loss} & \multicolumn{2}{c}{Norm. Stress Loss}\\
\cmidrule(lr){2-3}
\cmidrule(lr){4-5}
\cmidrule(lr){6-7}
\cmidrule(lr){8-9}
\textbf{Model} & \textbf{3} & \textbf{$\leq$128}& \textbf{3} & \textbf{$\leq$128}& \textbf{3} & \textbf{$\leq$128}& \textbf{3} & \textbf{$\leq$128} \\
\hline
MLP      
& 22.69$\pm$\scriptsize2.27 
& 21.80$\pm$\scriptsize1.79
& 35.15$\pm$\scriptsize9.53 
& 35.24$\pm$\scriptsize8.70
& 0.1341$\pm$\scriptsize0.0088 
& 0.3490$\pm$\scriptsize0.3242
& 1.13$\pm$\scriptsize0.07 
& 2.93$\pm$\scriptsize2.72 \\

GCN      
& 70.62$\pm$\scriptsize29.87 
& 100.00$\pm$\scriptsize0.00
& 75.75$\pm$\scriptsize26.36 
& 100.00$\pm$\scriptsize0.00
& 0.1447$\pm$\scriptsize0.0108 
& 0.1489$\pm$\scriptsize0.0044
& 1.21$\pm$\scriptsize0.09 
& 1.25$\pm$\scriptsize0.04 \\

GAT      
& 69.09$\pm$\scriptsize18.16 
& 69.96$\pm$\scriptsize12.10
& 69.09$\pm$\scriptsize18.16 
& 63.36$\pm$\scriptsize14.49
& 0.1433$\pm$\scriptsize0.0438 
& 0.1352$\pm$\scriptsize0.0098
& 1.20$\pm$\scriptsize0.37 
& 1.13$\pm$\scriptsize0.08 \\

HyboNet  
& 79.78$\pm$\scriptsize31.81 
& 97.99$\pm$\scriptsize3.21 
& 82.78$\pm$\scriptsize27.13 
& 98.85$\pm$\scriptsize2.10
& 0.1265$\pm$\scriptsize0.0131 
& 0.1134$\pm$\scriptsize0.0228
& 1.06$\pm$\scriptsize0.11 
& 0.95$\pm$\scriptsize0.19 \\

HGCN     
& 97.02$\pm$\scriptsize1.73 
& 100.00$\pm$\scriptsize0.00
& 99.42$\pm$\scriptsize1.73 
& 100.00$\pm$\scriptsize0.00
& 0.1457$\pm$\scriptsize0.0125 
& 0.1304$\pm$\scriptsize0.0059
& 1.22$\pm$\scriptsize0.11 
& 1.09$\pm$\scriptsize0.05 \\

\hline
\end{tabular}
}
\end{sc}
\label{tab:node_classification_synthetic}
\end{table}

\clearpage

\end{document}